\documentclass{article} 
\usepackage{iclr_arxiv,times}

\usepackage{enumitem}
\usepackage{amsmath,amsfonts,bm}
\usepackage{mathtools}
\usepackage{amsthm}
\usepackage{algorithm}
\usepackage{algpseudocode}
\usepackage{cleveref}       
\usepackage{etoolbox}
\usepackage{mathrsfs}

\newcommand*{\tran}{^{\mkern-1.5mu\mathsf{T}}} 

\newcommand{\compconj}[1]{%
  \overline{#1}
}

\newtheorem{theorem}{Theorem}
\newtheorem{lemma}{Lemma}

\newtheorem{definition}{Definition}
\theoremstyle{remark}
\newtheorem{remark}{Remark}
\theoremstyle{definition}

\newtheorem{assumption}{Assumption}
\crefname{assumption}{assumption}{assumptions}

\crefname{thmenumi}{theorem}{theorems}
\AtBeginEnvironment{theorem}{
    \crefalias{enumi}{thmenumi}
    \setlist[enumerate,1]{
        label={\arabic*.},
        ref={\thetheorem.\arabic*}
    }
}

\crefname{lemenumi}{lemma}{lemmas}
\AtBeginEnvironment{lemma}{
    \crefalias{enumi}{lemenumi}
    \setlist[enumerate,1]{
        label={\arabic*.},
        ref={\thelemma.\arabic*}
    }
}

\crefname{corenumi}{corollary}{corollaries}
\AtBeginEnvironment{corollary}{
    \crefalias{enumi}{corenumi}
    \setlist[enumerate,1]{
        label={\arabic*.},
        ref={\thecorollary.\arabic*}
    }
}

\crefname{assenumi}{assumption}{assumptions}
\AtBeginEnvironment{assumption}{
    \crefalias{enumi}{assenumi}
    \setlist[enumerate,1]{
        label={\theassumption.\arabic*.},
        ref={\theassumption.\arabic*}
    }%
}

\def\1{\bm{1}}

\def\vtheta{{\bm{\theta}}}
\def\va{{\bm{a}}}
\def\vb{{\bm{b}}}
\def\vc{{\bm{c}}}

\def\vf{{\bm{f}}}

\def\vp{{\bm{p}}}
\def\vq{{\bm{q}}}

\def\vu{{\bm{u}}}
\def\vv{{\bm{v}}}
\def\vw{{\bm{w}}}
\def\vx{{\bm{x}}}

\def\vz{{\bm{z}}}

\def\mA{{\bm{A}}}

\def\mC{{\bm{C}}}

\def\mI{{\bm{I}}}

\def\mL{{\bm{L}}}

\def\mU{{\bm{U}}}
\def\mV{{\bm{V}}}
\def\mW{{\bm{W}}}
\def\mX{{\bm{X}}}

\def\mPhi{{\bm{\Phi}}}

\def\mSigma{{\bm{\Sigma}}}

\DeclareMathAlphabet{\mathsfit}{\encodingdefault}{\sfdefault}{m}{sl}
\SetMathAlphabet{\mathsfit}{bold}{\encodingdefault}{\sfdefault}{bx}{n}

\def\gB{{\mathcal{B}}}

\def\gJ{{\mathcal{J}}}

\def\gX{{\mathcal{X}}}

\def\sA{{\mathbb{A}}}
\def\sB{{\mathbb{B}}}
\def\sC{{\mathbb{C}}}
\def\sD{{\mathbb{D}}}

\def\sH{{\mathbb{H}}}

\def\sK{{\mathbb{K}}}

\def\sN{{\mathbb{N}}}

\def\sP{{\mathbb{P}}}

\newcommand{\R}{\mathbb{R}}

\DeclareMathOperator*{\argmax}{arg\,max}

\newcommand\norm[1]{\left\lVert#1\right\rVert}

\newcommand{\ko}[1]{\mathcal{K}^{#1}}
\newcommand{\kot}{\mathcal{K}^t} 
\newcommand{\agen}{\hat{\mathcal{L}}_{NM_N'}} 
\newcommand{\pgen}{\agen}
\newcommand{\ptgen}{\mathcal{L}_{N}}  
\newcommand{\tgen}{\mathcal{L}} 
\newcommand{\dom}[1]{{\mathcal{D}_{#1}}} 
\newcommand{\fss}[1]{{\mathcal{F}_{#1}}} 
\newcommand{\fso}[1]{L^2\left( #1, \mu \right)} 
\newcommand{\sob}{W^{1,2}(\mathcal{X}^\circ, \mu)} 
\newcommand{\sobs}{{W^{1,2}}} 
\newcommand{\id}{\text{Id}} 
\newcommand{\flow}[1]{\varphi_{#1}}

\newcommand{\oswim}[1]{\phi_{#1}} 
\newcommand{\dswim}[1]{\Phi_{#1}} 
\newcommand{\coord}[1]{e_{#1}} 
\newcommand{\projf}[1]{\mathcal{P}_{\fss{#1}}} 
\newcommand{\projd}[1]{\mathcal{P}_{\dom{#1}}} 

\newcommand{\qde}{quasi-dissipativite}
\newcommand{\qdy}{quasi-dissipativity}
\newcommand{\dm}[1]{\mathrm{d}#1} 
\newcommand{\ind}[2]{\mathbf{1}_{\left\{#1\right\}}\left(#2\right)}
\newcommand{\inds}[2]{\mathbf{1}_{#1}\left(#2\right)}
\newcommand{\abs}[1]{\left|#1\right|}
\newcommand{\swimx}{\vx^{(1)}_{0,i}}
\newcommand{\dz}{{d_z}} 
\newcommand{\set}[1]{\left\{#1\right\}}
\newcommand{\ip}[1]{\left\langle#1\right\rangle} 

\newcommand{\uat}{\cite[Theorem 1]{bolager2023sampling}} 
\newcommand{\lp}{\cite[Theorem 3.15]{tk_theorem}} 
\newcommand{\tk}{\cite[Theorem 4.8]{tk_theorem}} 
\newcommand{\closedgen}{\cite[Theorem 1.4]{tk_theorem}}

\newcommand{\assref}[1]{Assumption~\ref{ass:#1}}
\newcommand{\thmref}[1]{Theorem~\ref{thm:#1}}
\newcommand{\lemref}[1]{Lemma~\ref{lemm:#1}}

\newcommand{\theoremtext}{Default text}

\newenvironment{proofsketch}
  {\begin{proof}[Proof sketch]\vspace{-\topsep}}
  {\end{proof}\vspace{-0.3cm}}

\usepackage{url}
\usepackage{amsmath}
\usepackage{thmtools, thm-restate}
\usepackage{amsfonts}
\usepackage{amssymb}
\usepackage{booktabs}
\usepackage[table]{xcolor}
\usepackage{graphicx} 
\usepackage{subcaption}
\usepackage{authblk}
\usepackage{siunitx}
\usepackage{svg}
\usepackage{natbib}
\usepackage{tikz}
\usetikzlibrary{arrows.meta}
\usepackage{xfrac}
\usepackage{wrapfig,lipsum,booktabs}

\title{Data-driven Random-feature \\Approximation of Koopman Generators \\of Nonlinear Continuous-time Systems}

\author{\textbf{Ana~Čukarska~\(^{1,2,\ddagger, *}\),  Maximilian~Gollwitzer~\(^{1,2,\ddagger}\), Viola~Muning~Li~\(^{1,2}\), Vladyslav~Fediukov~\(^{1,2}\), Zahra~Monfared~\(^{3,4}\), Felix~Dietrich~\(^{1,2}\)} 
        \\
        \(^1\)School of Computation, Information and Technology, Technical University of Munich, Germany \\ 
        \(^2\)Munich Data Science Institute\\
        \(^3\)Interdisciplinary Center for Scientific Computing (IWR)\\
        \(^4\)Department of Mathematics and Computer Science, Heidelberg University, Germany\\        
        \(^{\ddagger}\) These authors contributed equally. \\
        \(^*\) Corresponding author: \texttt{ana.cukarska@tum.de}
        }
\iclrarxivcopy 
\begin{document}

\maketitle

\begin{abstract}
Learning continuous-time dynamical systems from data is an active area of research. The Koopman formalism is often used here, because it is a flexible approach that is informative about the system's fundamental properties. The formalism represents dynamical systems through linear operators acting on a suitable function space. A practical difficulty arising in the numerical approximation of Koopman operators is the choice of a ``dictionary'', meaning an effective finite-dimensional representation of the function space. In this paper, we focus on learning dynamical systems in continuous-time, and so we consider numerical approximations of the Koopman generator instead of the Koopman operator.  Our main contribution is a data-driven random-feature method to obtain a parsimonious and effective function representation for Galerkin approximation methods such as gEDMD. The representation we construct is data-informed, which allows us to improve upon data-agnostic methods. We empirically demonstrate the suitability of our method for forecasting tasks and system analysis. The numerical experiments show that our approach is accurate, and faster than iterative optimization. Further, we prove convergence of our approximation of the vector field and the resulting trajectories. In addition to the numerical results, we also make theoretical contributions, including a general convergence criterion for the semigroup generated by the gEDMD approximation and a counterexample showing that gEDMD does not generally preserve exponential boundedness. With our contribution, we advance the frontier of efficiently learning continuous-time dynamics from data, towards interpretable and mathematically robust modelling based on the Koopman framework.



\end{abstract}

\section{Introduction}
Identifying and inferring fundamental properties of complex nonlinear dynamical systems with an unknown analytical expression using only sparse or irregularly sampled observations is an important but difficult problem in scientific machine learning. One viable framework for such analyses is Koopman theory~\citep{koopman1931hamiltonian}, which has received much attention in recent years due to benefiting greatly from advancements in data-driven methods~\citep{mezic2005spectral, budivsic2012applied}. The Koopman framework shifts the perspective from the evolution of states in physical coordinates to the evolution of functions of the state, called observables~\citep{budivsic2012applied}. This enables the study of the underlying system through linear operators acting on the space of observables rather than directly handling the nonlinear dynamics in state space. The central operators of study are the semigroup of Koopman operators and its associated generator, which capture most properties of the semigroup of flow maps and vector field. The operators are linear on the space of observables of the system, which makes them amenable to powerful tools from spectral theory and which can help to reveal global system behaviour, such as stability, attractors and constants of motion~\citep{mezic2004comparison}. For general dynamical systems, this linearisation is only possible when the space of observables is chosen to be sufficiently large, typically infinite-dimensional, thus turning the problem of approximating an unknown finite-dimensional nonlinear map into approximating an infinite-dimensional but linear map. Given a dataset of trajectories, common methods for approximating these operators are the EDMD~\citep{williams2015data} and gEDMD~\citep{klus2020data} algorithms, which learn the evolution of the observables projected onto a finite-dimensional subspace. This subspace is spanned by a chosen set of functions, often called a ``dictionary''.
Rather than targeting a single member of the operator semigroup, it is useful to learn its generator, which represents the instantaneous change rather than a fixed time increment and makes the whole semigroup accessible via its spectrum. The performance of EDMD and gEDMD depends heavily on selecting a dictionary capable of capturing the dynamics of the underlying system, a task that is often difficult without sufficient prior knowledge of the system. A solution to this problem is to construct the dictionary from the available data using data-driven methods, a task which is commonly referred to as dictionary learning. For instance, \citet{li2017extended} proposed training a neural network as a dictionary for EDMD by minimising a suitable loss function. This has later been extended to an analogous algorithm for learning the Koopman generator using gEDMD~\citep{buzhardt2025relationship,gEDMD-DL25}. 
As alternatives to neural networks, researchers have also investigated using random features as dictionary functions~\citep{tabish-2025, sprink2026accurate}, which are computationally efficient by avoiding the iterative optimisation of training a neural network. In this work, \textbf{we employ data-driven random-features to learn parsimonious and effective dictionaries for gEDMD}. The random feature distribution is constructed using the SWIM algorithm~\citep{bolager2023sampling}, an approach in which training data prescribes the choice of random features. This approach allows us to combine the performance of a learned dictionary with the efficiency of random-feature methods to approximate the Koopman generator from continuous-time data. 

\begin{figure}[h]
    \centering
\includegraphics[width=0.95\linewidth]{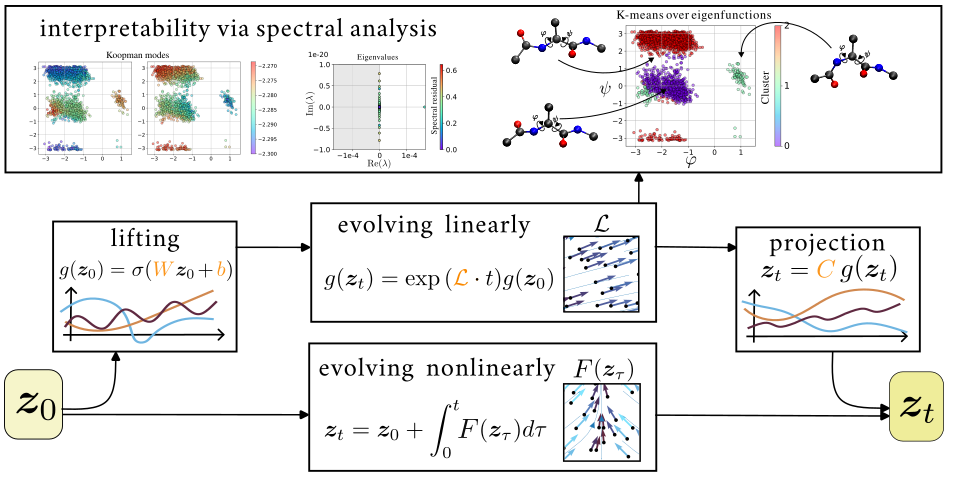}
    \caption{A schematic representation of our method. We construct all terms in orange from the training data.}
    \label{fig:swimnode_viz}
\end{figure}
Chaotic dynamical systems present a particular challenge to data-driven methods, because prediction is only possible for a limited time range. In the paper, we discuss this challenge with respect to dictionary learning for gEDMD, and contrast it to adjoint-based methods such as Neural ODEs.

Our key contributions in this paper are the following.
\begin{itemize}
    \item We propose a method for learning linear representations of continuous-time dynamical systems
    (see \Cref{fig:swimnode_viz}, \Cref{sec:methodology}).
    \item We provide theoretical guarantees for the convergence of our method given sufficient data and dictionary functions (\Cref{thm:vector_field}, \Cref{thm:trajectories}).
    \item We extend the convergence theory of gEDMD (\Cref{thm:trajectories}, \Cref{thm:counterexample}).
    \item We investigate challenges of learning chaotic continuous-time systems from data and contrast it to Neural ODEs (\Cref{thm:param_grad}).
    \item We demonstrate the effectiveness of our approach in forecasting and characterizing the long-term behavior of dynamical systems (\Cref{sec:experiments}).
    \item We provide an implementation of this work publicly accessible via \url{https://gitlab.com/fd-research/dragons}.
\end{itemize}

We include a comprehensive review of related works in~\Cref{sec:related_works}.

\section{Methodology}\label{sec:methodology}

We briefly explain the mathematical background for Koopman operators in \Cref{sec:koopman} and for random feature methods in \Cref{sec:swim}.  We provide a detailed explanation of our method in \Cref{sec:dragons}.

\subsection{Koopman operator \& generator}\label{sec:koopman}

Consider a dynamical system generated by the vector field \(d\vz_t/dt = F(\vz_t)\), with flow \(\flow{t}:\mathcal X\mapsto\mathcal X\) satisfying \(\vz_t=\flow{t}(\vz_0)\), \(\flow{0}=\operatorname{Id}_{\mathcal X}\), and \(\flow{t_1}\circ\flow{t_2}=\flow{t_1+t_2}\). The system is defined by the triplet \(\left(\mathcal X,\mathcal T,\flow{\cdot}\right)\), where \(\mathcal X\subseteq\mathbb R^d\) is a compact state space equipped with a measure \(\mu\), and \(\mathcal T\subseteq\mathbb R\) is the time domain.
Koopman theory allows us to define a linear operator \( \kot :\mathcal{F} \mapsto \mathcal{F} \), $\forall$ \( g \in \mathcal{F} \), \( [\kot g] (\vz_0) = g \circ \varphi_t (\vz_0) \), where \( \mathcal{F} \) is a function(al) space, typically a Hilbert space. In the literature, \(g\) is often called an \textit{observable}. Due to \( \{ \flow{t} \}_{t\geq 0} \) forming a semigroup, \( \{\kot \}_{t\geq 0} \) also forms a semigroup called the \textit{Koopman semigroup}. 
For a strongly continuous semigroup  \( \{\kot \}_{t\geq 0} \) one can define an operator \( \mathcal{L} : \mathcal{D} \mapsto \mathcal{F}\) as \(\mathcal{L} g := \lim_{t\rightarrow0} \sfrac{(\kot g - g)}{t}, \forall g \in \mathcal{D}.\)
\( \mathcal{L} \) is called a \textit{generator} of the Koopman semigroup and is defined on the domain \( \mathcal{D} \) consisting of all functions \(g\) for which the limit is defined.
One can use \( \mathcal{L} \) to express an ODE on the space \( \mathcal{F} \) as \( \dot{g} = \mathcal{L} g,\) where a solution is given with \( g (\vz_t) = \exp{(\mathcal{L}\cdot t)} g(\vz_0)  = \kot g(\vz_0) \). The transformation to a functional representation of the system is illustrated in \Cref{fig:vf_idea}, where the finite-dimensional nonlinear vector field is the usual representation of a process, and using the Koopman framework, this can be transformed to a linear vector field, determined by the Koopman generator  \( \mathcal{L} \). A more detailed introduction to Koopman theory can be found in \Cref{sec:appendix_koopman}.

\begin{figure}[h]
    \centering
    \includegraphics[width=0.98\linewidth]{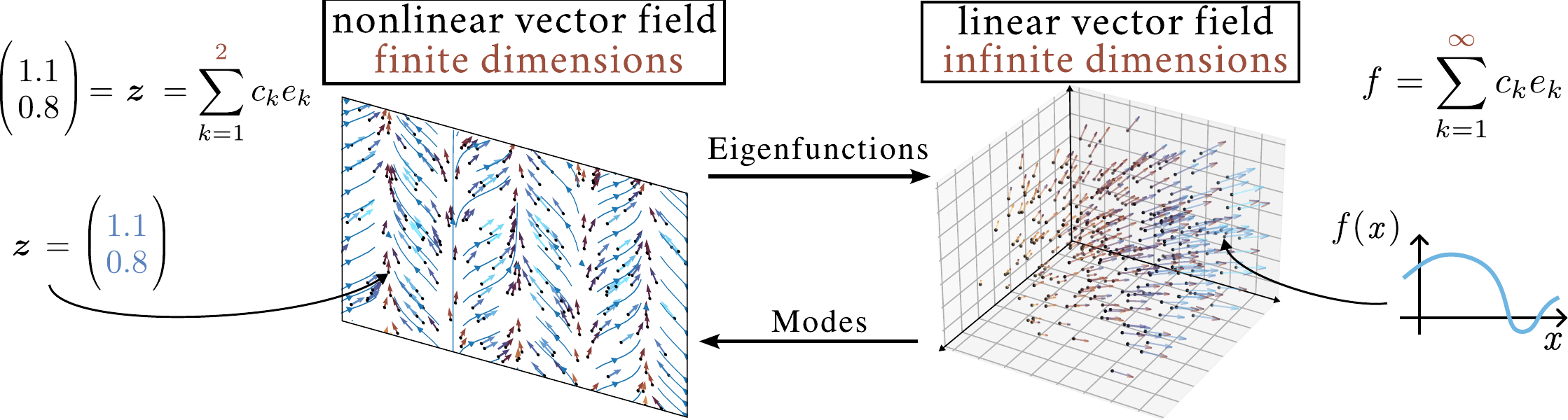}
    \caption{Illustration of the Koopman framework. Focusing on the Koopman generator trades the finite-dimensional nonlinear vector field for an infinite-dimensional linear one.}
    \label{fig:vf_idea}
\end{figure}
\citet{nuske2023finite} derive finite data error bounds for approximating Koopman generators. Recently, \citet{llamazares2024data} extended and generalised existing results for a large class of operators and algorithms, including EDMD and gEDMD as special cases.

\subsection{Random features method: Sampling weights}\label{sec:swim}

Our main algorithm relieas on the construction of a dictionary via the SWIM method introduced by~\cite{bolager2023sampling}, which we discuss in detail in~\Cref{sec:related_works}. To summarize, let \(\sP_{\gX}\) be a probability distribution over the data domain \(\mathcal{X}\times\mathcal{X}\) with the density
\begin{equation}
\label{eq:swim_distribution}
    p_{\gX}(\vx^{(1)},\vx^{(2)})\propto \sfrac{\lVert f(\vx^{(2)}) - f(\vx^{(1)})\rVert}{\lVert \vx^{(2)} - \vx^{(1)}\rVert},
\end{equation}
which is high at steep gradients.
For each neuron in the hidden layer of \(F_{\mathcal{X}}\), sample \((\vx^{(1)}, \vx^{(2)}) \sim \mathbb{P}_{\mathcal{X}}\) and set the weight \(\vw\) and the bias \(b\) of said neuron to
\begin{align}\label{eq:sampled_weights}
    \vw = s_1 \frac{\vx^{(2)} - \vx^{(1)}}{\lVert \vx^{(2)} - \vx^{(1)}\rVert^2}, \quad b = - \langle \vw, \vx^{(1)} \rangle + s_2,
\end{align}
where \(\lVert \cdot \rVert\) and \(\langle \cdot, \cdot \rangle\) denote the Euclidean norm and inner product, and $s_1, s_2 \in \mathbb{R}$ are scalars that are fixed by the choice of activation function~\citep{bolager2023sampling}. 

\subsection{DRAGONS}
\label{sec:dragons}

We now introduce the main approach of this paper: a \textbf{D}ata-driven \textbf{R}andom-feature \textbf{A}pproximation of Koopman \textbf{G}enerators \textbf{o}f \textbf{N}onlinear continuous-time \textbf{S}ystems. We thus refer to our approach as DRAGONS.
The general setting for our approach is as follows. Let \(\mX=\{\vz_{m}, \flow{t_m}(\vz_m) \}_{m=1}^{M}\) denote a dataset sampled from a dynamical system, consisting of \(M \in \mathbb{N}\) snapshots, with \(\vz_{m} \in \mathcal{X} \subset \mathbb{R}^d \). We assume the flow map  \( \{ \flow{t} \}_{t\geq 0} \) is not known, and we also will not attempt to approximate it directly; instead, we will approximate the Koopman semigroup  \( \{\kot \}_{t\geq 0} \) of the system. 

First, a set of observables needs to be chosen. Motivated by the success of data-driven random features, we set the observables to be neurons of a hidden layer with parameters sampled according to SWIM using \Cref{eq:sampled_weights}, with pairs of trajectory snapshots \((\vz^{(1)}, \vz^{(2)}) \sim \sP_{\gX} \). 
Since there are multiple plausible choices for the sampling target $f$ in~\Cref{eq:swim_distribution}, we investigate three different options for $\sP_{\gX}$, which we summarise in~\Cref{tab:dragons_zoo}, while providing a detailed discussion in~\Cref{app:sampling}. We evaluate constructing the dictionary as a basis for regressing against the vector field or flow map, or constructing the neurons based on how much information they reveal about the generator. In our experiments we found that the generator-driven sampling consistently performs the best.
Denote a SWIM hidden layer with \(N \in \mathbb{N}\) neurons as $\dswim{N}: \mathcal{X} \rightarrow \mathbb{R}^N $, with each neuron taking the role of one observable, denoted as $\{ \oswim{i} \}_{i=1}^N$. In the Koopman literature, embedding the finite-dimensional states  \(\vz_{m} \) into the observable space by evaluating $\dswim{N}(z_m)$ is called \textit{lifting}.

 \begin{table}[ht]
\centering
\setlength{\tabcolsep}{20pt}
\caption{DRAGONS Zoo. Different densities $p_\gX(\vz^{(1)},\vz^{(2)})$ for sampling distribution $\sP_{\gX}$.}
\label{tab:dragons_zoo}
\begin{tabular}{ccc}

\textbf{Flow-driven: } & \textbf{Vector field-driven: } & \textbf{Generator-driven: }
\\
$\frac{\norm{ \flow{\tau_2}(\vz^{(2)}) - \flow{\tau_1}(\vz^{(1)})}}{\norm{\vz^{(2)} - \vz^{(1)}}}$
&
$         \frac{\norm{ F(\vz^{(2)}) - F(\vz^{(1)})}}{\norm{\vz^{(2)} - \vz^{(1)}}}$
&
$\frac{\abs{(\vz^{(2)} - \vz^{(1)})^\top (F(\vz^{(2)}) - F(\vz^{(1)}))}}{\norm{\vz^{(2)} - \vz^{(1)}}^3}$
\end{tabular}

\end{table}

Under simple assumptions (see \Cref{lemm:ass}), a SWIM-sampled dictionary is suitable for  approximating a Koopman generator  \(\hat{\tgen}_{NM} \) as the matrix \(\hat{L}_{NM} \) with gEDMD such that the operator it represents converges pointwise, \(\hat{\tgen}_{NM} \overset{\text{a.s.}}{\rightarrow} \mathcal{L} \). Thus, we use gEDMD for the approximation of the generator on the span of SWIM-neuron observables using a standard Galerkin approximation, i.e.,
\begin{equation}
\begin{aligned}
    \label{eq:edmd_methodology}
    [\hat{\mathbf{C}}_{NM}]_{i,j} &= \frac{1}{M} \sum_{m=1}^{M} \compconj{\phi_j(\vz_{m})}  \mathcal{L} \phi_i(\vz_{m}) &[\hat{\mathbf{G}}_{NM}]_{i,j} &= \frac{1}{M} \sum_{m=1}^{M} \compconj{\phi_j(\vz_{m})} \phi_i(\vz_{m}) \\ 
    &
    & \hat{L}_{NM}^T = \hat{\mathbf{C}}_{NM} \hat{\mathbf{G}}_{NM}^{+}. & &
\end{aligned}
\end{equation}

This enables us to evolve observables $\phi_i$ in time: we have \( \dot{\phi}_i(\vz_m) = \mathcal{L} \phi_i(\vz_m)\), which we can integrate to obtain a linear map \(\kot = \exp{(\mathcal{L}t)} \) to compute \( \phi_i(\flow{t}( \vz_m)) = \kot \phi_i(\vz_0)\).

For many applications we are interested in forecasting the state of the dynamical system. Thus, we need to invert the lifting function which maps finite-dimensional states into observable space. In practice, and for a wide range of Koopman dictionaries, a common approach is to approximate the inverse of \(\Phi_N\) with a linear map \(C= \mX \Phi_N(\mX)^{+}\)
; we refer to this mapping as the \textit{projection} step. The matrix $C$ contains the so-called Koopman modes~\citep{budivsic2012applied}.

\begin{minipage}{0.43\textwidth}
    \begin{algorithm}[H]
    \caption{Training DRAGONS.}\label{alg:training_dragons}
    \begin{algorithmic}
      \Procedure{train}{$\mX, \dot{\mX}$}
      \State $\mW, \vb \leftarrow \text{SAMPLE-Layer}(\mX, \mathbb{P}_X)$
      \State $\dswim{N}(\cdot)  \leftarrow  \sigma(\mW \, \cdot + \vb)$
      \State $\mL^T = (\partial_t\dswim{N}(\mX)) (\dswim{N}(\mX))^+$
      \State $\mC = \mX\, \dswim{N}(\mX)^+$
      \State Return $\mC, \mL, \dswim{N}$
      \EndProcedure
  \end{algorithmic}
\end{algorithm}
\end{minipage}
\hfill
\begin{minipage}{0.52\textwidth}
\begin{algorithm}[H]
    \caption{Forecasting with DRAGONS.}\label{alg:forecasting_dragons}
    \begin{algorithmic}
        \Procedure{predict}{$\mX, \dot{\mX},(t_i)_{i=1}^K,\vx_{t_0}$}
        \State $\mC, \mL, \dswim{N} \leftarrow \text{train}(\mX, \dot{\mX})$
        \For{$i=1,2,\dots,K$}
            \State $\vx_{t_i} \leftarrow \mC\exp(\mL^T(t_i-t_{i-1}))\dswim{N}(\vx_{t_{i-1}})$
        \EndFor
        \State Return $(\vx_{t_i})_{i=1}^K$
        \EndProcedure
    \end{algorithmic}
\end{algorithm}
\end{minipage}

Our approach is summarized in \Cref{alg:training_dragons}. It provides a discretized approximation of the Koopman generator, which can be used to predict system states through \Cref{alg:forecasting_dragons}. If they are not given, we approximate the time derivatives of $X$ using a two-point finite difference scheme. In the worst case, the computational time for \Cref{alg:training_dragons} scales cubically with the number of datapoints. This can be reduced drastically, even to linear scaling, by using efficient, iterative linear solvers and by appropriate subsampling of the full dataset.

\section{Theoretical results}
In this section, we present our theoretical results. Here we only state the main theorems and provide proof sketches, while the full proofs can be found in the Appendices \ref{app:convergence_theory} and \ref{sec:adjoing_ev_proof}.
\subsection{Convergence Results}
\label{sec:convergence_main}
A major benefit of connecting our method to Koopman theory is that it facilitates theoretical analysis by making existing results on the convergence of gEDMD and from the general theory of $C_0$-Semigroups available for use. We start by stating the required assumptions, for which we provide practically oriented sufficient conditions through \lemref{c0}.

\renewcommand{\theoremtext}{
For the results presented in \Cref{sec:convergence_main}, we assume the following.
    \begin{itemize}
        \item The measure $\mu$ has an everywhere positive and bounded density $\rho$ w.r.t. the Lebesgue measure $\lambda$ on $\gX$, where $0<\alpha\leq\rho(\vx)\leq\beta<\infty, \forall \vx\in\gX$.
        \item The data is sampled i.i.d. from $\mu$.
        \item The Koopman operators $\kot$ form a $C_0$-semigroup.
    \end{itemize}
}
\begin{restatable}{assumption}{MainAssumption}
\label{ass:main_ass}
\theoremtext
\end{restatable}

The first two points are not very restrictive as they include a large class of measures. Furthermore, as \lemref{c0} shows, the third point is readily attained for Lipschitz vector fields, which is the natural setting to work in due to the Picard-Lindelöf Theorem. For SWIM-sampled networks, we additionally require the activation function to be ReLU to enable approximation of Sobolev functions as required by \citet{llamazares2024data}. Our results naturally extend to other dictionaries that fulfil a Sobolev approximation property analogous to \lemref{swim_sob}, including classically trained neural networks with a broad class of activation functions \citep{HORNIK1990551,HORNIK1991251}. Additionally, we also require $\gX$ to follow Definition \ref{def:input_space} in Appendix \ref{app:theory_ass}, which allows the use of existing universal approximation theory from \citet{bolager2023sampling}. We can now state our first convergence result.
\renewcommand{\theoremtext}{
Let the assumptions be as in \assref{main_ass}. Then $\forall \epsilon>0$, there are constants $N,M_N\in\sN$, a SWIM sampled hidden layer $\dswim{N}$ and outer weights $\vc^i$ such that for any $M_N'\geq M_N$ almost surely
    \begin{equation}
        \norm{\hat{F_i}-F_i}_{\fss{}}=\norm{\sum_{j=1}^Nc^i_j\agen \oswim{j}-F_i}_{\fss{}}<\epsilon,\quad \forall i\in\{1,2,\dots,d\}.
    \end{equation}
}
\begin{restatable}[Convergence of the approximate vector field]{theorem}{VectorField}
\label{thm:vector_field}
\theoremtext
\end{restatable}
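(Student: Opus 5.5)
The plan is to split the error into an approximation part and a Galerkin/estimation part. Observe first that $F_i = \tgen e_i$, where $e_i(\vz)=z_i$ is the $i$-th coordinate function, because $\tgen g = \nabla g\cdot F$ for smooth $g$ (for Lipschitz $F$, this is the form the $C_0$-generator takes on $\sobs$ functions, cf.\ \lemref{c0}). The natural candidate for $\hat F_i$ is therefore $\agen$ applied to an approximation of $e_i$ in the span of the SWIM neurons. We will write $\coord{i}^N=\sum_j c^i_j\oswim{j}$, so that $\hat F_i=\agen\coord{i}^N$ by linearity. The triangle inequality then gives
\begin{equation*}
\norm{\hat F_i-F_i}_{\fss{}}\le \norm{\agen \coord{i}^N-\tgen\coord{i}^N}_{\fss{}}+\norm{\tgen(\coord{i}^N-\coord{i})}_{\fss{}} .
\end{equation*}

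\textbf{Step 1 (second term).} Since $\mathcal X$ is compact, $\coord{i}$ lies in $\sob$. We invoke the Sobolev approximation property of SWIM-sampled ReLU layers (\lemref{swim_sob}, which builds on \uat) to choose $N$ and outer weights $\vc^i$ with $\norm{\coord{i}^N-\coord{i}}_{\sobs}<\delta$ for all $i$ simultaneously. For the ReLU case, note that $\nabla\coord{i}^N$ exists a.e. The vector field $F$ is Lipschitz, hence bounded on compact $\mathcal X$ by some $\norm{F}_\infty$. Using $\rho\le\beta$ we obtain
\begin{equation*}
\norm{\tgen(\coord{i}^N-\coord{i})}_{\fss{}}=\norm{\nabla(\coord{i}^N-\coord{i})\cdot F}_{\fss{}}\le \norm{F}_\infty\,\norm{\coord{i}^N-\coord{i}}_{\sobs}<\norm{F}_\infty\delta ,
\end{equation*}
and we pick $\delta=\epsilon/(2\norm{F}_\infty)$. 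We must also verify that finite ReLU networks belong to the generator domain $\dom{}$. For Lipschitz $F$ the domain contains $\sob$, which is the reason the $\sobs$ framework of \citet{llamazares2024data} is used.

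\textbf{Step 2 (first term, with $N$ now fixed).} The subspace $\fss{N}=\mathrm{span}\{\oswim{j}\}$ is finite-dimensional, and $\coord{i}^N$ lies in it. By the strong law of large numbers applied entrywise to $\hat{\mathbf C}_{NM}$ and $\hat{\mathbf G}_{NM}$ (using i.i.d.\ sampling from $\mu$ and integrability, since the neurons and $F$ are bounded on compact $\mathcal X$), these matrices converge a.s.\ to their Galerkin counterparts $\mathbf C_N,\mathbf G_N$. After discarding linearly dependent neurons, or by working with the pseudo-inverse, $\mathbf G_N$ is invertible. Invertibility uses $\rho\ge\alpha>0$: a function vanishing $\mu$-a.e.\ then vanishes Lebesgue-a.e., and for continuous neurons this means everywhere. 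Hence $\hat L_{NM}\to L_N$ a.s., and $\agen$ converges on $\fss{N}$ to $\ptgen=\projf{N}\tgen|_{\fss{N}}$. This is the convergence statement of \citet{llamazares2024data}, and it can be quoted directly. As a result, there is an $M_N$ such that for all $M_N'\ge M_N$ we have $\norm{\agen\coord{i}^N-\ptgen\coord{i}^N}<\epsilon/4$ a.s.

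\textbf{Step 3 (the main obstacle).} It remains to control $\norm{\ptgen\coord{i}^N-\tgen\coord{i}^N}=\norm{(I-\projf{N})\tgen\coord{i}^N}$, which is the projection error of $\tgen\coord{i}^N$ onto the dictionary span. This is where I expect the main difficulty. Plain $L^2$ projection does not commute with $\tgen$, so the argument must make the span $\fss{N}$ rich enough that it also approximates $F_i$ in $L^2$. The first thing I would try is to enlarge $N$ at the outset so that, by \uat, $\fss{N}$ simultaneously approximates each $\coord{i}$ in $\sobs$ and each $F_i$ in $L^2$ to accuracy $\delta$. Then
\begin{equation*}
\norm{(I-\projf{N})\tgen\coord{i}^N}\le \norm{(I-\projf{N})F_i}+\norm{\tgen(\coord{i}^N-\coord{i})}\le \delta+\norm{F}_\infty\delta .
\end{equation*}
This step requires that SWIM sampling, with probability one or for a suitable realisation, gives a single layer with both approximation properties; a union over finitely many targets suffices. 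With the constants rescaled so that every contribution is at most $\epsilon/4$, the three pieces together give a total error below $\epsilon$ for every $i$.
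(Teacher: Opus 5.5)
Your proof is correct. It is essentially the paper's argument, except that you prove by hand, for the $d$ coordinate functions, the convergence theorem the paper cites.

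The paper chooses $\sum_j c^i_j\oswim{j}=\projd{N}\coord{i}$, so the first term of its triangle inequality vanishes identically. It then quotes \lemref{ass} (Theorem~5.1 of \citet{llamazares2024data}), which gives $\agen\projd{N}f\to\tgen f$ for every $f\in\dom{}$ along a sequence $(N,M_N)$.

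Your three-term split is exactly the mechanism inside that theorem:
\begin{itemize}
\item the sampling error on $\fss{N}$, via the strong law of large numbers;
\item the projection error $(I-\projf{N})\tgen\coord{i}^N$, via an $L^2$-approximant of $F_i=\tgen\coord{i}$ in the span;
\item $\tgen(\coord{i}^N-\coord{i})$, via $\norm{F}_\infty$ times the $\sobs$-error.
\end{itemize}
The only change is that an arbitrary $\sobs$-approximant replaces the graph-norm projection.

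\textbf{What the paper's route buys.} It yields convergence for every $f\in\dom{}$ along one dictionary sequence, which the paper reuses for the Trotter--Kato step in \thmref{trajectories}. However, it needs \assref{koop3} and \assref{koop4} for all $f$ along a single sequence of SWIM dictionaries. That requires constructing such a sequence (e.g.\ from approximants of a countable dense set); it does not follow from density of SWIM networks alone.

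\textbf{What your route buys.} It needs only $2d$ approximation targets, so one finite union layer suffices. It also makes explicit the estimate $\norm{\tgen g}_{\fss{}}\le\norm{F}_\infty\norm{g}_{\sobs}$, which the paper uses tacitly when it passes from $\sobs$-density to graph-norm density.

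\textbf{The price.} You assume $F$ is bounded, obtained from Lipschitz continuity. This is not among the hypotheses of \assref{main_ass}, although the paper's appeal to \assref{koop4} implicitly needs the same boundedness.

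\textbf{A caveat shared with the paper.} For fixed $N$, the strong law gives only a sample-dependent threshold. So ``there is $M_N$ such that for all $M_N'\ge M_N$, almost surely'' holds either with a random $M_N$, or with a deterministic $M_N$ and probability at least $1-\eta$.
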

\begin{proofsketch}
\citet{llamazares2024data} provide a thorough analysis and sufficient conditions on the dictionary for pointwise convergence of the gEDMD approximations to $\tgen$. We show that our dictionary fulfills their assumptions in \lemref{ass}. The claim then follows from their results.
\end{proofsketch}
As an auxiliary result, we also extend the existing convergence theory for sampled networks to weighted Sobolev spaces through \lemref{swim_sob}.
Furthermore, we extend the existing convergence theory for gEDMD by providing, to the best of the authors' knowledge, the first proof for pointwise convergence of the semigroup generated by the gEDMD approximation under suitable conditions.
\renewcommand{\theoremtext}{
Let the assumptions be as in \assref{main_ass} and assume the true Koopman operators to form a quasi-contractive semigroup, i.e. $\norm{\kot}\leq e^{\omega t}$ for some $\omega_*\in\R$ and all $t\in[0, T]$. Then $\forall  \epsilon>0$ and $t\in[0,T]$, there are constants $N,M_N\in\sN$, a SWIM sampled hidden layer $\dswim{N}$ and outer weights $\vc^i$ such that for any $M_N'\geq M_N$ almost surely
    \begin{equation}
        \norm{(\varphi_t)_i-(\hat{\varphi_t})_i}_{\fss{}}=\norm{
            \exp\left(\tgen t\right)\coord{i} - \sum_{j=1}^N c^i_j \exp\left(\pgen t\right) \oswim{j}
        }_{\fss{}}<\epsilon,\quad \forall i\in\{1,2,\dots,d\}.
    \end{equation}
}
\begin{restatable}[Convergence of the approximate trajectories]{theorem}{Trajectories}
\label{thm:trajectories}
\theoremtext
\end{restatable}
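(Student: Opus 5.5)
The plan is to reduce the trajectory statement to a Trotter--Kato type approximation theorem for $C_0$-semigroups, using the pointwise generator convergence that underlies \thmref{vector_field}. The object $\exp(\pgen t)$ is the semigroup generated by the gEDMD operator $\pgen$. We regard $\pgen$ as a bounded operator on $\fss{}$, obtained by composing the Galerkin matrix $\hat L_{NM}$ with the orthogonal projection onto $\fss{N}=\operatorname{span}\{\oswim{j}\}$. The argument then has three steps.

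\textbf{Step 1: pointwise generator convergence.} We take the result of \citet{llamazares2024data}, whose hypotheses are verified in \lemref{ass} together with the Sobolev approximation property \lemref{swim_sob}. It gives $\pgen g\to\tgen g$ almost surely for every $g$ in a core of $\tgen$, for instance $\sob$ or the $C^1$ functions, along a sequence $N\to\infty$ with $M_N'\ge M_N$. We also need the range condition: $(\lambda-\tgen)$ applied to this core should be dense for some large $\lambda$. This follows from the Lumer--Phillips theorem applied to the quasi-contractive semigroup $\kot$.

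\textbf{Step 2: uniform exponential bound.} This is the main obstacle. Trotter--Kato \tk{} requires a stability bound $\norm{\exp(\pgen t)}\le K e^{\omega t}$ that holds uniformly in $N$, and the paper later shows (\thmref{counterexample}) that gEDMD need not preserve exponential boundedness. The approach is therefore to derive the bound from quasi-contractivity of the true semigroup, i.e.\ quasi-dissipativity $\mathrm{Re}\ip{\tgen g,g}\le\omega\norm{g}^2$, transferred to the Galerkin approximation. In the infinite-data limit, gEDMD equals the $\mu$-orthogonal Galerkin projection $\projf{N}\tgen\projf{N}$, and this operator inherits the same dissipativity constant on $\fss{N}$. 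By Lumer--Phillips, $\norm{\exp(\projf{N}\tgen\projf{N}t)}\le e^{\omega t}$. Under the empirical measure, $\hat{\mathbf C}_{NM}$ and $\hat{\mathbf G}_{NM}$ converge almost surely to their population versions, so for $M\ge M_N$ the empirical generator is within $\delta_N$ of the Galerkin projection in operator norm. A perturbation argument then gives $\norm{\exp(\pgen t)}\le e^{(\omega+\delta_N)t}$ on $[0,T]$, and we choose $M_N$ so that $\delta_N\le 1$. Finite dimensionality of $\fss{N}$ makes this operator-norm convergence available.

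\textbf{Step 3: conclude.} With the uniform bound and generator convergence on a core, \tk{} yields $\exp(\pgen t)g\to\exp(\tgen t)g$ for all $g\in\fss{}$, uniformly for $t\in[0,T]$. We choose outer weights $\vc^i$ such that $\norm{\sum_j c^i_j\oswim{j}-\coord{i}}<\epsilon/(2e^{|\omega|T+T})$; this is possible by \lemref{swim_sob}. We then split
\[
\norm{\exp(\tgen t)\coord{i}-\exp(\pgen t)\textstyle\sum_j c^i_j\oswim{j}}\le\norm{(\exp(\tgen t)-\exp(\pgen t))\coord{i}}+\norm{\exp(\pgen t)}\norm{\coord{i}-\textstyle\sum_j c^i_j\oswim{j}}.
\]
The first term is below $\epsilon/2$ for large $N$ by Step 3. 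The second term is below $\epsilon/2$ by the uniform bound from Step 2. Since $d$ is finite, we take the maximum over $i$. Finally, we identify $\exp(\tgen t)\coord{i}=(\varphi_t)_i$ via $\kot=\exp(\tgen t)$.
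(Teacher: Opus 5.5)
Your route is the paper's. You take generator convergence from \citet{llamazares2024data} via \lemref{ass}, then a uniform quasi-contractive bound obtained by transferring quasi-dissipativity to the Galerkin projection and perturbing by the finite-data error (this is \lemref{qd_error} plus Lumer--Phillips), and then the First Trotter--Kato theorem \tk. Absorbing the projection error with the uniform bound on $e^{\agen t}$ instead of with $\norm{\kot}$ is a harmless variation.

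One step, however, does not hold as written. In Step 1 you extend $\agen$ to $\fss{}$ as $B_N:=\agen\projf{N}$ and assert $B_Ng\to\tgen g$ for every $g$ in a core, i.e.\ condition (a) of Trotter--Kato. What \lemref{ass} gives is $\agen\projd{N}g\to\tgen g$, where the projection is taken in the $\dom{}$ (Sobolev) inner product, not the $\fss{}$ one. The two expressions differ by $\agen(\projf{N}g-\projd{N}g)$. That difference tends to zero only in $\fss{}$, while $\norm{\agen}$ cannot stay bounded in $N$ unless $\tgen$ itself is bounded. So nothing forces the difference to vanish. Even with infinite data, the natural route to $\projf{N}\tgen\projf{N}g\to\tgen g$ needs $\projf{N}g\to g$ in the graph norm, i.e.\ $W^{1,2}$-stability of the $L^2$-projection onto a ReLU dictionary, and nothing in the setting provides that.

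The repair is to use the implication (b)$\Rightarrow$(d) of \tk\ with $x_N=\projd{N}g\in\fss{N}$. Then $x_N\to g$ by \assref{koop4} and $B_Nx_N=\agen x_N\to\tgen g$, which is exactly how the paper applies the theorem. Your extension is actually the right device here, since Trotter--Kato requires all semigroups on one space, a point the paper passes over. Indeed $e^{B_Nt}=e^{\agen t}\projf{N}+(\id-\projf{N})$, so $\norm{e^{B_Nt}}\le\max(1,\norm{e^{\agen t}})$ and your Step 2 bound carries over.

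Relatedly, in Step 3 the weights $\vc^i$ must be coefficients in the same $\dswim{N}$ for which the Trotter--Kato term is already small. So take, e.g., $\sum_jc^i_j\oswim{j}=\projf{N}\coord{i}$ (by \assref{koop3}) rather than a separately constructed network from \lemref{swim_sob}.
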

\begin{proofsketch}
     We show the statement for arbitrary observables $g\in\dom{}$. The main idea is to apply the First Trotter-Kato Approximation Theorem \tk, which guarantees the convergence of the approximate semigroups given exponential boundedness and the convergence of the generators. The exponential boundedness is obtained in our setting for \qde\ systems as we show via \lemref{qd_error} that gEDMD preserves this property.
\end{proofsketch}
This result is not specific to our dictionary, as it does not use any specific properties beyond universal approximation and thus transfers to any dictionaries in the general setting of \citet{llamazares2024data}.
We empirically found the standard practice of lifting and projecting in between evaluations, as written in \Cref{alg:forecasting_dragons}, to perform better than a single projection, which is why this is done in the numerical experiments.
Regarding the assumption of \qdy, we have two results worth highlighting. Firstly, \lemref{c0} shows that this is fulfilled for Lipschitz vector fields with $\mu$ being the normalised Lebesgue measure. Secondly, we provide an explicit counterexample (\thmref{counterexample}) showing that without imposing further restrictions, the semigroups generated by gEDMD do not generally inherit the exponential boundedness from the true semigroup, which we formalise in the following theorem.
\renewcommand{\theoremtext}{
There are systems for which the true semigroup $\kot$ is exponentially bounded and the sequence of gEDMD approximants $\agen$ converges pointwise on $\dom{}$ to $\tgen$, while $\nexists W,\omega\in\R$ such that
    \begin{equation}
        \norm{\exp{\left(\agen t\right)}}\leq We^{\omega t},\quad\forall N\in\sN,
    \end{equation}
    i.e. the sequence of approximate semigroups can not be bounded exponentially by uniform constants.
}
\begin{restatable}[gEDMD does not preserve exponential boundedness]{theorem}{Counterexample}
\label{thm:counterexample}
\theoremtext
\end{restatable}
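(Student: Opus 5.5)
We prove the statement with an explicit example. Each gEDMD approximant will be a finite-rank operator whose restriction to the span of the dictionary is a \emph{non-normal} matrix. Its exponential grows transiently with an amplitude that tends to infinity as $N\to\infty$. At the same time, the approximants converge pointwise on the relevant domain because the non-normal part acts only on functions that vanish in the limit.

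\textbf{Choice of system.} Take the simplest exponentially bounded system: the trivial dynamics $F\equiv 0$ on $\gX=[0,1]$ with $\mu$ the Lebesgue measure. Then $\kot=\id$, so $\norm{\kot}=1$, and $\tgen=0$ with $\dom{}=\fss{}=\fso{\gX}$.

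\textbf{Construction of the approximants.} Because $\tgen\phi=0$ for every dictionary element, the exact gEDMD matrix would vanish. We therefore do not want exact gEDMD on this system. Two routes are available.
\begin{enumerate}
\item Use a nontrivial but simple system and exploit the estimation error in $\hat{\mathbf C}_{NM}$ for finite $M$ (for instance, derivatives estimated by finite differences).
\item Use a slightly more flexible system, such as $F(x)=-x$ on $[-1,1]$, whose Koopman semigroup is exponentially bounded. For this system, choose the dictionary as a carefully ordered but \emph{non-orthogonal} family $\{\oswim{j}\}$, so that the Galerkin projection $\projf{N}\tgen\projf{N}$ represented in this basis has large off-diagonal entries.
\end{enumerate}
The cleanest version I would aim for is the second. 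It uses the abstract setting of \citet{llamazares2024data}: pick nested subspaces $\fss{N}$ and a sampling measure together with an inner-product structure under which $\agen$ restricted to $\fss{N}$ is conjugate to $\tgen|_{\fss{N}}+E_N$. Here $E_N$ is a nilpotent Jordan-type perturbation supported on an $O(1)$-dimensional block of functions $u_N,v_N$, with $E_N u_N=a_N v_N$. We take $\norm{u_N}\to 0$ or $u_N\rightharpoonup 0$ weakly, fast enough that $E_N g\to 0$ for each fixed $g\in\dom{}$, which gives pointwise convergence of the generators. Since $E_N^2=0$, we have $\exp(\agen t)\approx e^{\lambda t}(I+tE_N)$ on that block, and therefore
\[
\norm{\exp(\agen t)}\ \gtrsim\ t\,\norm{E_N}=t\,a_N\norm{u_N}^{-1}\norm{v_N}\cdot(\text{normalisation}).
\]
We make this quantity diverge for fixed $t>0$.

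\textbf{Conclusion.} Suppose $\norm{\exp(\agen t)}\le We^{\omega t}$ for all $N$. Evaluating at a fixed $t=1$ gives $\sup_N\norm{E_N}<\infty$ up to constants, which contradicts $a_N\to\infty$. Hence no uniform $W,\omega$ exist.

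\textbf{Main obstacle.} The delicate step is reconciling the two requirements: pointwise convergence on $\dom{}$, which is guaranteed for gEDMD on a Galerkin projection under the hypotheses of \citet{llamazares2024data}, and a genuinely unbounded perturbation $E_N$. Pointwise convergence holds by Banach--Steinhaus only when the operators are uniformly bounded on $\fss{}$; since $\tgen$ itself is unbounded, the approximants need not be, and this gap is what I would exploit. Concretely, I would take $\tgen=\frac{d}{dx}$ type dynamics, for example the translation $F\equiv 1$ on the circle, which gives a unitary group. I would then pick Fourier-type dictionaries whose Gram matrix becomes ill-conditioned through nearly collinear pairs $\oswim{j}\approx\oswim{j+1}$. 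The pseudo-inverse $\hat{\mathbf G}^{+}$ then amplifies a high-frequency block into a Jordan-like structure. If this direct construction becomes messy, the fallback is to define the sequence $\agen$ abstractly as $\projf{N}\tgen\projf{N}$ with respect to a non-orthogonal projection, which is still a valid Galerkin scheme, and to verify both properties by hand.
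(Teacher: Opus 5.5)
There is a genuine gap: you never fix a construction, and the mechanism you rely on cannot produce the claimed growth. If the dictionary is linearly independent on the data, $\hat{\mathbf{G}}_{NM}$ is invertible. Then $\hat{L}_{NM}$ is the matrix of the Galerkin compression $\projf{N}\tgen|_{\fss{N}}$, taken orthogonally with respect to the empirical inner product (and, as $M\to\infty$, the $\mu$-inner product). This compression depends only on the span $\fss{N}$ and the measure.

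Replacing the dictionary by $T\dswim{N}$ with $T$ invertible turns $\hat{\mathbf{C}}$ into $T\hat{\mathbf{C}}T^*$ and $\hat{\mathbf{G}}$ into $T\hat{\mathbf{G}}T^*$, so $\hat{L}^T$ is replaced by a similar matrix. The operator $\exp(\agen t)$ on $\fss{N}$, its eigenvalues, and its norm in $\gB(\fss{N})$ are unchanged. Non-orthogonality, ill-conditioned Gram matrices, or ``pseudo-inverse amplification'' can therefore only inflate coefficient-space norms, which is not the quantity in the theorem. In particular you cannot prescribe a nilpotent block $E_N$: the approximant is fixed once $\fss{N}$ and $\mu$ are.

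Your other routes do not help either. The oblique-projection fallback is a Petrov--Galerkin scheme, not gEDMD. The route with $F\equiv 0$ gives $\hat{L}_{NM}=0$ exactly, and injecting finite-difference error changes the algorithm rather than exhibiting a property of gEDMD.

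Worse, for the concrete systems you name, the statement is false. Take Lebesgue measure. For $\dot{x}=-x$ on $[-1,1]$ you get $\mathrm{Re}\ip{\tgen f,f}_{\fss{}}\leq\frac12\norm{f}^2_{\fss{}}$, and translation on the circle gives a skew-adjoint $\tgen$. Since $\ip{\projf{N}\tgen f,f}_{\fss{}}=\ip{\tgen f,f}_{\fss{}}$ for $f\in\fss{N}$, every compression inherits the same quasi-dissipativity constant. Hence $\norm{\exp(\agen t)}_{\gB(\fss{N})}\leq e^{\omega t}$ uniformly in $N$; this is exactly \lemref{qd_error}, and on the circle the approximate semigroups are even unitary. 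Your Banach--Steinhaus remark does not open a loophole: approximate generators may be unbounded in norm while their semigroups stay contractive.

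The missing idea is that the lever must be the measure. You need a semigroup that is exponentially bounded on $\fso{\gX}$ with constant $W>1$ but is not quasi-contractive, so that $\mathrm{Re}\ip{\tgen f,f}_{\fss{}}/\norm{f}^2_{\fss{}}$ is unbounded on $\dom{}$.

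The paper takes $\dot{x}=-x$ on $[-1,1]$ with density $\rho(x)=1+\sqrt{\abs{x-\frac12}}$. Integration by parts gives $\ip{\tgen\psi,\psi}_{\fss{}}=\frac12\int\psi^2\bigl(1+x\rho'/\rho\bigr)\,\dm{\mu}$. This blows up for bumps $\psi_N$ concentrating at $x\downarrow\frac12$, where $\rho'\to+\infty$. The paper appends the orthonormalised bump to orthonormal polynomials, so the pointwise convergence of \citet{llamazares2024data} still holds. Applying Gershgorin to $C_N$ then gives an eigenvalue with real part at least $N$, hence $\norm{\exp(\agen t)}\geq e^{Nt}$. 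The growth rate itself diverges, not merely a transient amplitude.
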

\begin{proofsketch}
    We show that the scalar Lipschitz system $\dot{x}=-x$ with a dictionary in the setting of \lemref{c0} and \citet{llamazares2024data} meeting the stated convergence and boundedness requirements can yield a sequence $\agen$ with eigenvalues lower bounded by $N$ for each $N\in\sN$. The main mechanism is choosing our dictionaries so that one function concentrates on a cusp pathology of the density. The details can be found in \thmref{counterexample} in Appendix \ref{app:theory_conv}.
\end{proofsketch}
Since the enabling mechanism is the pathology of the density, this motivates the restriction to the Lebesgue measure as a sufficient condition for \qdy. A more thorough investigation into when gEDMD preserves exponential boundedness is left for future work.

\subsection{Limitation: Spectral Convergence}
\label{sec:limit_spectral_convergence}
In this section, we briefly highlight what we can and cannot theoretically guarantee about how the spectrum of our approximant relates to that of the true generator. In general, spectral convergence is a challenging problem within the Koopman community~\citep{colbrook-2024a}, that has as of yet not been fully resolved. \citet[Theorem 6.3]{llamazares2024data} show the almost sure convergence of eigenpairs of $\agen$ to some (nontrivial) eigenpairs of $\tgen$ when taking an appropriate subsequence of $(N,M_N)$ pairs, where the convergence of the eigenfunctions is weak. By \Cref{lemm:ass}, this result also applies to our method. However, it is worth highlighting the limitations of this result. Firstly, the result holding only for a subsequence means that increasing the dictionary size and training data can potentially increase the error on the spectral approximation, even when reducing the pointwise error for the generator over $\fss{}$. Secondly, the result explicitly guarantees convergence only to a subset of the true spectrum. This is an inherent limitation of gEDMD, as it cannot resolve continuous spectra, which commonly arise in sufficiently complex dynamical systems~\citep{continuous1,continuous2}.
To account for this limitation, in addition to the computed spectra, we also approximate the associated residual according to \citet{colbrook-2024a} and report it in all of our experiments. Chaotic systems are particularly susceptible to this limitation. Next to posing difficulties for spectral approximation, these systems also pose more general challenges for learning them from data using neural ODEs, which we highlight in the next section. 

\subsection{Related Limitations of Neural ODEs: chaotic systems}
\label{sec:limitations_of_neural_ODE}
Neural ODEs~\citep{chen2018neural} are a popular approach for training a neural network to approximate a vector field from data.
The main idea is to use the adjoint equations to propagate the error through time. We now describe the main issue of this approach when applied to data from a chaotic system, and provide a more detailed discussion in \Cref{sec:adjoing_ev_proof}. Consider training a neural network to parameterise a vector field $\dot{\vz}_\vtheta=\vf(\vz;\vtheta)$ by minimising a suitable scalar loss $\gJ(\vtheta)=\gJ(\texttt{ODESolve}(\vz_0,\vf,T,\vtheta))$, i.e., between the predicted and ground-truth future states for a given set of initial conditions and evaluation time points. Here, $\vtheta$ denotes the trainable parameters and $T$ the prediction horizon. The idea of Neural ODEs is to compute the loss gradient $\nabla_\vtheta \gJ={\dm{\gJ}}/{\dm{\vtheta}}$ by employing the adjoint sensitivity method~\citep{pontryagin2018mathematical, cea1986conception}. In this method, the sensitivity $\va(t)=\dm{\gJ}/\dm{\vz(t)}$ is used to define $\nabla_\vtheta \gJ=-\int_0^Ta(t)^\top{\partial \vf}/{\partial\vtheta}(\vz_\vtheta(t);\vtheta)\dm{t}$. This error is integrated backwards from a predicted state at time $T$ using a numerical solver to obtain the parameter update. We now state a simplified version of our main result.
It establishes that Neural ODEs trained using the adjoint method are prone to exploding and vanishing gradients on certain systems, regardless of the numerical integrator used for training. 
The full statement (\Cref{thm:param_grad_unified}) and proof can be found in \Cref{sec:adjoing_ev_proof}.
\begin{theorem}[Simplified: Exponential growth of the parameter gradient of a neural ODE]\label{thm:param_grad}
Consider the Neural ODE, $\dot{\vz_\vtheta}(t) = \vf(\vz_\vtheta(t), \vtheta)$ trained with the adjoint method using the loss function $\gJ$ defined above, on time series data from a chaotic system over a time horizon $T$. Since the data are chaotic, the largest Lyapunov Exponent (LE) is positive, $\lambda_{\max}>0$, and for sufficiently long $[t,T]$, the largest Finite-Time Lyapunov Exponent (FTLE) is also positive, $\lambda_{\max}^{FT}>0$, with $\lambda_{\max}^{FT} \to\lambda_{\max}$ as $T\to\infty$. Then in the finite-time regime,
$\|\nabla_\theta \mathcal{J}\|
=
\Omega\!\left(
e^{\lambda_{\max}^{FT}T}
\right)$,
and asymptotically as \(T\to\infty\), 
$ \|\nabla_\theta \mathcal{J}\|
=
\Omega\!\left(
e^{(\lambda_{\max}-\epsilon)T}
\right)$ for every \(\epsilon>0\). Here $\Omega(\cdot)$ denotes a lower bound up to a constant factor.

\end{theorem}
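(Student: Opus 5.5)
The plan is to write the parameter gradient through the adjoint equation and to lower-bound the adjoint state by the finite-time Lyapunov growth of the linearised dynamics. Write $J(t)=\partial \vf/\partial \vz(\vz_\vtheta(t);\vtheta)$. The adjoint then satisfies $\dot{\va}(t)^\top=-\va(t)^\top J(t)$ with terminal condition $\va(T)=\partial\gJ/\partial\vz(T)$. Hence
\begin{equation*}
\va(t)^\top=\va(T)^\top \Psi(T,t),
\end{equation*}
where $\Psi(T,t)=\partial \vz_\vtheta(T)/\partial\vz_\vtheta(t)$ is the state-transition matrix of the variational equation $\dot{\delta\vz}=J(t)\delta\vz$. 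The singular values of $\Psi(T,t)$ define the finite-time Lyapunov exponents: the largest one is $\sigma_{\max}(\Psi(T,t))=e^{\lambda^{FT}_{\max}(T-t)}$.

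The first step is to argue that the terminal sensitivity $\va(T)$ has a nonvanishing component along the leading left singular vector $\vu_1(T,t)$ of $\Psi(T,t)$. This gives
\begin{equation*}
\norm{\va(t)}\geq \abs{\langle \va(T),\vu_1\rangle}\, e^{\lambda^{FT}_{\max}(T-t)}.
\end{equation*}
Generically this holds, and I would state it as a nondegeneracy assumption with a uniform constant $c_0>0$. This mirrors the standard fact that almost every vector aligns with the most expanding direction. The backward adjoint therefore grows like $e^{\lambda^{FT}_{\max}(T-t)}$, and it is largest at early times $t$.

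The second step, which I expect to be the main obstacle, is to pass from the growth of $\va$ to the growth of $\nabla_\vtheta\gJ=-\int_0^T \va(t)^\top \partial_\vtheta \vf\,\dm{t}$. The integrand could cancel, or $\partial_\vtheta\vf$ could be nearly orthogonal to the expanding direction. My approach is to notice that $\nabla_\vtheta\gJ=\va(T)^\top\, \partial\vz_\vtheta(T)/\partial\vtheta$. The parameter sensitivity $S(T)=\partial\vz_\vtheta(T)/\partial\vtheta$ solves $\dot S=JS+\partial_\vtheta\vf$, with $S(0)=0$ if $\vz_0$ does not depend on $\vtheta$. Under the same nondegeneracy, $S$ grows along the unstable direction at rate $\lambda^{FT}_{\max}$: on a short initial window $[0,t_0]$ the forcing gives $S(t_0)$ a component of size at least $c_1>0$ along the expanding direction, and after that $\Psi(T,t_0)$ amplifies this component by $e^{\lambda^{FT}_{\max}(T-t_0)}$. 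The dominant growth then sits on a single time window and is not cancelled. Pairing this with $\va(T)$ through $\vu_1$, and absorbing $t_0$ and the alignment constants into the constant factor, yields $\norm{\nabla_\vtheta\gJ}=\Omega(e^{\lambda^{FT}_{\max}T})$. This also makes clear that the conclusion is independent of the integrator: it concerns the exact adjoint, which any consistent solver approximates.

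For the asymptotic statement, I would use Oseledets' theorem on the chaotic attractor, which gives $\lambda^{FT}_{\max}\to\lambda_{\max}$ as $T\to\infty$. For every $\epsilon>0$ there is then $T_\epsilon$ with $\lambda^{FT}_{\max}\geq \lambda_{\max}-\epsilon$ for $T\geq T_\epsilon$. Substituting into the finite-time bound gives $\Omega(e^{(\lambda_{\max}-\epsilon)T})$. The vanishing-gradient counterpart follows by the same argument applied to the most contracting singular value, although the theorem as stated only requires the lower bound.
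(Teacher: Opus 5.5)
You have located the real difficulty, but your repair does not remove it. The paper's own route is to bound $\norm{\va(t)}$ pointwise and then substitute these bounds into $\int_0^T\va(t)^\top\partial_\vtheta\vf\,\dm{t}$. In the FTLE case the pointwise bound comes from the SVD of $\Psi(T,t)$ together with $\beta(t,T)$; in the LE case it comes from the top dual Oseledets subspace. As you note, a pointwise lower bound on $\norm{\va(t)}$ combined with an upper bound on $\norm{\partial_\vtheta\vf}$ says nothing about cancellation in that integral. Your forward-sensitivity formula is the same integral regrouped, so the cancellation survives. Take a single parameter component, so that $S(T)\in\R^d$, and let $\vu_1,\vv_1$ be the leading singular vectors of $\Psi(T,t_0)$. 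Using $\Psi(T,s)=\Psi(T,t_0)\Psi(t_0,s)$ for all $s\in[0,T]$,
\begin{equation*}
\ip{\vu_1,S(T)}=\sigma_{\max}\bigl(\Psi(T,t_0)\bigr)\Bigl(\ip{\vv_1,S(t_0)}+\ip{\vv_1,\int_{t_0}^{T}\Psi(t_0,s)\,\partial_\vtheta\vf(s)\,\dm{s}}\Bigr).
\end{equation*}
Your window assumption controls only the first term in the bracket. The second term is not small. It is bounded only by $C_\theta\int_{t_0}^{T}\sigma_{\max}(\Psi(T,s))/\sigma_{\max}(\Psi(T,t_0))\,\dm{s}$, which in the chaotic regime is typically of order $C_\theta/\lambda_{\max}$, i.e.\ $O(1)$ just like $c_1$. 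The reason is that forcing applied during $[t_0,t_0+1]$ is amplified by essentially the same factor as forcing applied during $[0,t_0]$. Nothing in your argument prevents this term from cancelling $\ip{\vv_1,S(t_0)}$. The claim that ``the dominant growth sits on a single time window and is not cancelled'' is therefore asserted, not shown.

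An assumption on the full integral is genuinely needed. Alignment of $\va(T)$ with $\vu_1$, which is the only nondegeneracy the paper's argument uses, is not enough. Consider the time-rescaling parameter $\vf(\vz,\theta)=\theta\,\vf_0(\vz)$ at $\theta=1$. Then $\partial_\theta\vf(\vz(s))=\dot{\vz}(s)$ solves the variational equation, so $\Psi(T,s)\dot{\vz}(s)=\dot{\vz}(T)$ and $S(T)=T\,\dot{\vz}(T)$. Hence $\nabla_\theta\gJ=T\,\va(T)^\top\dot{\vz}(T)$ is $O(T)$ whenever $\va(T)$ stays bounded, even though $\lambda_{\max}>0$ and $\va(T)$ generically has a nonzero component along $\vu_1$. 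Your window condition fails in this example: boundedness of $\Psi(T,t_0)\dot{\vz}(t_0)$ forces $\ip{\vv_1,\dot{\vz}(t_0)}=O(1/\sigma_{\max}(\Psi(T,t_0)))$. A correct version must therefore assume a uniform lower bound on the whole projection $\ip{\vv_1,\int_0^{T}\Psi(t_0,s)\,\partial_\vtheta\vf(s)\,\dm{s}}$. Once you assume that, the window decomposition contributes nothing.

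Two smaller points:
\begin{itemize}
\item The condition on $S(t_0)$ is not ``the same nondegeneracy'' as the one on $\va(T)$. It concerns $\vv_1(T,t_0)$ and must hold uniformly in $T$.
\item Pairing $\va(T)$ with $S(T)$ through $\vu_1$ alone requires a gap $\sigma_2(\Psi)\ll\sigma_{\max}(\Psi)$. Otherwise the terms $\sigma_i\ip{\va(T),\vu_i}\ip{\vv_i,\cdot}$ with $i\geq 2$ can cancel the leading one.
\end{itemize}
Deriving the asymptotic bound from the finite-time one via $\lambda^{FT}_{\max}\to\lambda_{\max}$ is a reasonable alternative to the paper's dual-Oseledets argument, provided your constants are uniform in $T$. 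It inherits the same gap.
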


\begin{proofsketch}
We first establish analogous bounds on the adjoint state in \Cref{thm:adjoint_state:FTLE,thm:adjoint_state}. The main idea is to express the solution of the adjoint equation in terms of the tangent linear flow via its fundamental matrix.
For the FTLE bounds, the upper bound follows from the definition of $\lambda_{\max}^{FT}$, while the lower bound follows from the singular value decomposition of the fundamental matrix and the projection of the terminal adjoint onto its maximal left singular vector (\Cref{thm:adjoint_state:FTLE}).
For the LE bounds, the upper bound follows from the definition of $\lambda_{\max}$, and the lower bound follows from Oseledec's multiplicative ergodic theorem applied to the dual Oseledec decomposition, where the top dual Oseledec subspace determines the backward growth rate (\Cref{thm:adjoint_state}).
These estimates are then transferred to the loss gradient via norm estimates on the integral expression and the stated boundedness assumption.
\end{proofsketch}

The asymptotic (LE) results show that training on chaotic systems is prone to exploding gradients when sufficiently long trajectories are used.
The finite-time (FTLE) results give exact, computable bounds at the actual training horizon, without requiring ergodicity or asymptotic limits, and thus provide a practical diagnostic for anticipating gradient explosion or vanishing on finite intervals.
We validate this numerically on the example of the Lorenz 63 system in~\Cref{app:node_exploding_adjoint}.

\section{Computational experiments}
\label{sec:experiments}

\subsection{Forecasting}

To assess the predictive performance of DRAGONS, we choose three test problems: a Van der Pol oscillator, the chaotic Lorenz 63 system, and a mean-field model of cylinder flow. Details on the datasets, experimental setup and hyperparameter search are given in \Cref{app:experiments}. We compare our approach to state-of-the-art dictionary learning methods using an Adam optimizer (gEDMD-DL), and against data-agnostic random features (Gaussian) and common dictionary choices such as radial basis functions (RBF-Dict) and polynomials (Poly-Dict). For the chosen systems, a polynomial dictionary is sufficiently expressive and invariant, thus the Poly-Dict sets the standard for a good dictionary for all of these problems. 

The results are given in \Cref{tab:main_results}, where we report the test error, dictionary size and training time. DRAGONS shows excellent performance, either on par with or better than existing methods in terms of error. The fit time is reasonably low when considering the dictionary size, indicating that the computation is dominated by the linear least squares solver. For chaotic systems in particular we achieve a lower error with DRAGONS compared to the Gaussian and the Adam-trained neural network dictionary. This is an noteworthy observation because chaotic systems have continuous spectra, which pose a challenge for gEDMD (see \Cref{sec:limit_spectral_convergence}). DRAGONS is thus a robust choice for forecasting tasks on irregular time intervals.
Further details and insights from these experiments can also be found in \Cref{app:experiments}.

\begin{center}
\begin{table}[h]
\scriptsize
\caption{Results are evaluated on a test set and averaged over 5 random seeds.}
\label{tab:main_results}
\begin{tabular}{p{1.65cm} p{2.1cm}  p{3.9cm} p{0.8cm}  p{3.3cm}  }
    \toprule
    Example & Model & Error: avg (min, max) & \# dict & Time [s]: avg (min, max) \\
    \midrule
    Van der Pol &  DRAGONS & 6.23e-7 (2.84e-7, 8.08e-7) & 130 & 0.79 (0.76, 0.89) \\  
    & Gaussian & 3.18e-6 (1.31e-6, 6.23e-6)  & 130 & 0.69 (0.67, 0.70) \\ 
    & gEDMD-DL & 1.36e-5 (3.73e-6, 2.73e-5) & 130 & 90.61 (55.92, 223.04)  \\ 
     &  RBF-Dict & 1.58e-2 (1.26e-2, 1.81e-2) & 202 & 0.60 (0.56, 0.68) \\ 
     &  Poly-Dict &  2.29e-5 & 9 & 0.37  \\ 
     \midrule
     Lorenz 63 &  DRAGONS & 3.75e-4 (1.87e-4, 5.69e-4) & 131 & 2.72 (2.68, 2.76) \\
     & Gaussian & 3.68e-1 (1.18e-1, 8.36e-1) & 67 & 3.19 (2.01, 5.02)\\
     & gEDMD-DL & 1.48e-1 (2.39e-2, 4.14e-1) & 19 & 11.83 (11.41, 12.14) \\ 
      &  RBF-Dict & 1.58e-1 (9.76e-2, 2.47e-1) & 303 & 2.53 (2.48, 2.58)  \\
      &  Poly-Dict &  3.85e-4  & 34 & 1.87 \\
      \midrule 
     Cylinder &  DRAGONS & 3.34e-7 (3.28e-7, 3.42e-7) & 515 & 5.50 (4.69, 8.06) \\ 
      &  Gaussian & 3.24e-7 (3.13e-7, 3.41e-7) & 131 & 2.35 (1.84, 3.35) \\
      &  gEDMD-DL & 3.24e-7 (3.23e-7, 3.28e-7) & 35 & 14.67 (14.14, 15.50) \\
      &  RBF-Dict &  1.64e-3 (1.61e-3, 1.65e-3)   & 103 & 1.37 (1.15, 1.74) \\
      &  Poly-Dict & 3.13e-7 & 34 & 0.93 \\
  \hline
\end{tabular}
\end{table}
\vspace{-0.3cm}
\end{center}
\begin{figure}[h]
    \centering
    \begin{subfigure}{0.23\textwidth}
        \includegraphics[width=\linewidth]{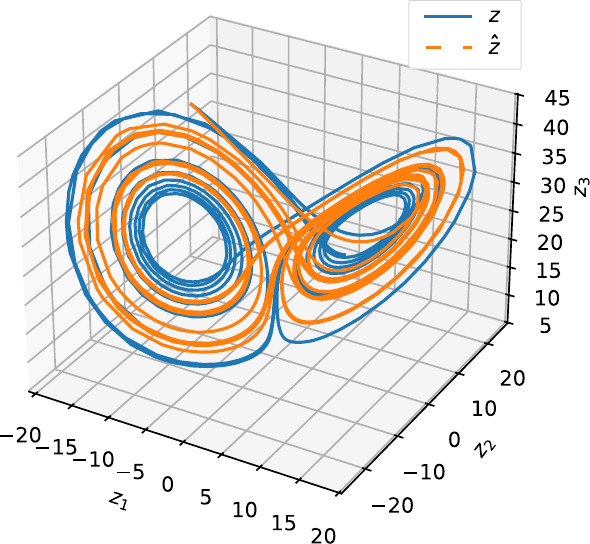}
        \caption{Phase space}
        \label{fig:DRAGONS_L63_phase}
    \end{subfigure}
    \hspace{7pt}
    \begin{subfigure}{0.72\textwidth}
        \includegraphics[width=\linewidth]{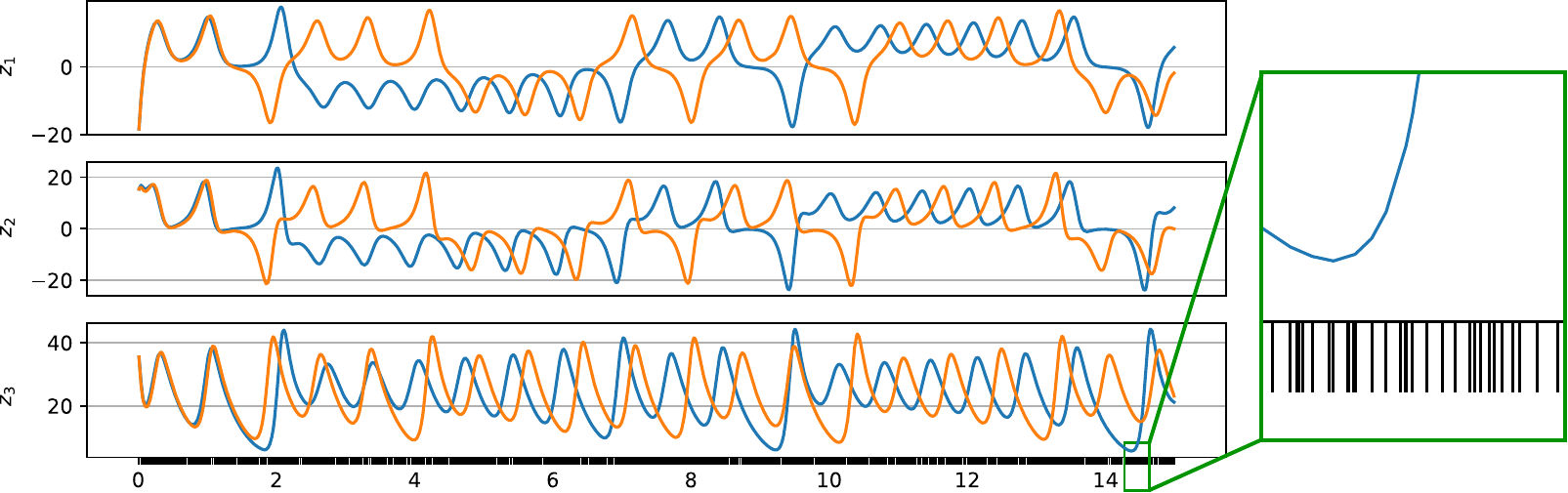}
        \caption{Trajectory}
        \label{fig:DRAGONS_L63_traj}
    \end{subfigure}
    \caption{Comparison of numerically calculated versus predicted trajectories for Lorenz experiment. Ticks on the x-axis show the evaluation points in time, which are irregularly spaced.}
    \label{fig:DRAGONS_L63}
\end{figure}

\subsection{Spectral Analysis}

\paragraph{Koopman Mode Analysis: Flow around a cylinder}
In this experiment, we demonstrate that spectral properties of a system's Koopman generator can be extracted accurately with DRAGONS. Specifically, it allows us to interpret the underlying system dynamics on a global scale for the widely used model of the flow around a cylinder. \Cref{fig:cyl_eigenvalues} shows the eigenvalues of the generator and their corresponding spectral residuals. Nine eigenvalues with large negative real parts are omitted for visualization. The real part of an eigenvalue determines the growth or decay rate of the corresponding mode, while the imaginary part determines its oscillation frequency. The two modes of a complex-conjugate pair typically represent one oscillatory pattern. Representative spatial Koopman modes corresponding to three selected eigenvalues are shown in \Cref{fig:cyl_non_oscillatory_modes} and \Cref{fig:cyl_oscillatory_mode}. 

\begin{figure}[h]
    \centering
    \begin{subfigure}{0.35\textwidth}
        \includegraphics[width=\linewidth]{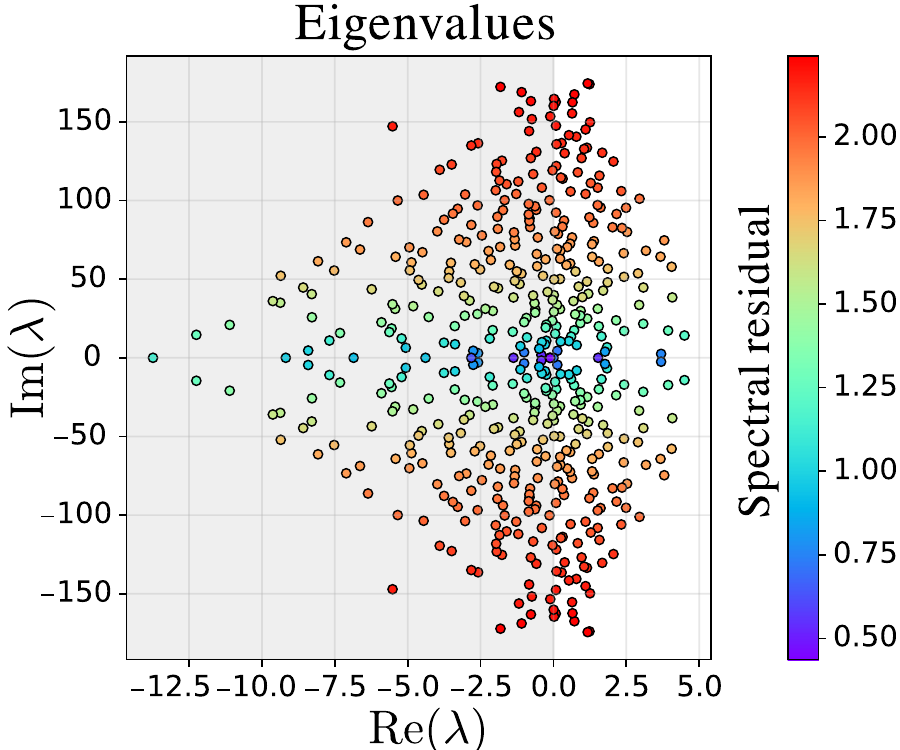}
    \caption{Eigenvalues of \(\hat{L}\).}
        \label{fig:cyl_eigenvalues}
    \end{subfigure}
    \vspace{0pt}
    \begin{subfigure}{0.3\textwidth}
\includegraphics[width=\linewidth]{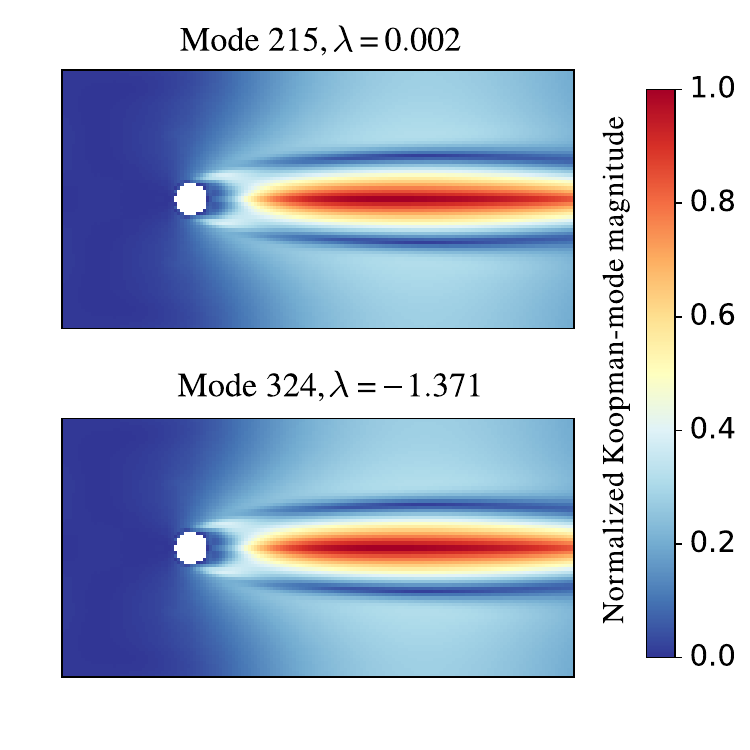}
    \caption{Non-oscillatory modes.}
\label{fig:cyl_non_oscillatory_modes}
    \end{subfigure}
\vspace{0pt}
    \begin{subfigure}{0.3\textwidth}
\includegraphics[width=\linewidth]{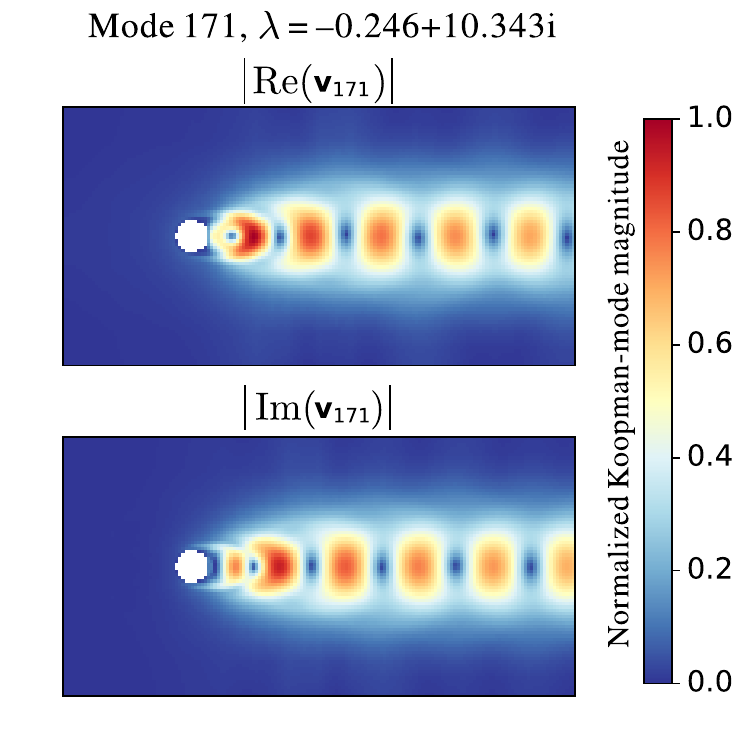}
    \caption{Oscillatory mode.}
    \label{fig:cyl_oscillatory_mode}
    \end{subfigure}
\caption{Generator spectrum and representative spatial modes for the flow around a cylinder.}
\label{fig:cyl_experiment}
\end{figure}

Several modes with positive real parts are present in the learned spectrum, which may reflect the transient dynamics away from the unstable steady solution. Although such modes are associated with growth, we do not observe unstable predictions near the limit cycle, probably because their net contribution to the reconstructed dynamics remains small in this region. In addition, the relatively large residuals of some growing modes may indicate that they are only locally accurate near the unstable state rather than globally valid along the entire trajectory.

\paragraph{Molecular Conformation Dynamics: Alanine Dipeptide}
In this final experiment, we demonstrate that our approach is applicable to high-dimensional data sets. Specifically, we show that the generator approximation can be used to extract distinct molecular conformations and metastable states of alanine dipeptide purely from long and high-dimensional trajectory data.

For molecular dynamics, an evolution in finite time does not suffice for an understanding of the system's long term behaviour; typically high-dimensional systems exhibit a few metastable states determined by a potential energy landscape. 
The global conformation pattern of alanine dipeptide can be represented using two dihedral angles at the central backbone  denoted as \(\varphi\) and \(\psi\) \citep{ramachandran-1963}. 
We use trajectory data from \citet{nuske-2017} and perform a molecular conformation analysis similar to \citet{klus-2018a} and \citet{herwig-2025}. We find that our model identifies the metastable states present in the dataset, due to the eigenfunctions assigning different values to datapoints in different metastable regions, as can be seen in \Cref{fig:AD_eigenfunctions}; these can be clustered using k-means to make the distinctions between regimes clearer as shown in \Cref{fig:AD_kmeans}. In addition, we take a look at the eigenvalues of the generator and their spectral residuals calculated according to \Cref{sec:gEDMD_spectral_pollution}. Eigenvalues with an almost zero magnitude have the smallest spectral residual, which is in line with our expectations, and most eigenvalues are along the imaginary axis.

\begin{figure}[h]
    \centering
    \begin{subfigure}{0.225\textwidth}
        \includegraphics[width=\linewidth]{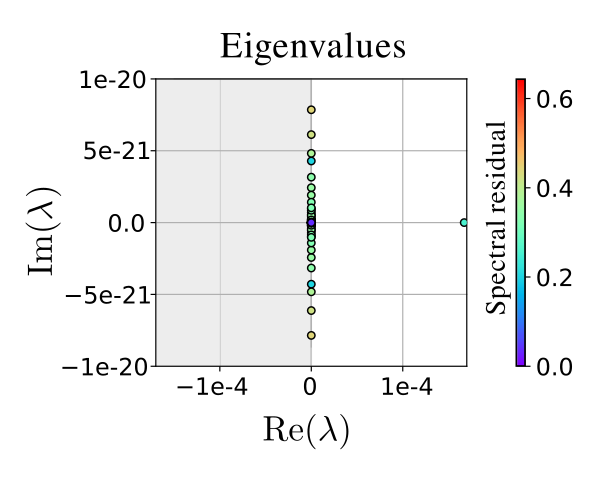}
    \caption{Eigenvalues of \(\hat{L}\).}
        \label{fig:AD_eigenvalues}
    \end{subfigure}
    \vspace{0pt}
    \begin{subfigure}{0.38\textwidth}
\includegraphics[width=\linewidth]{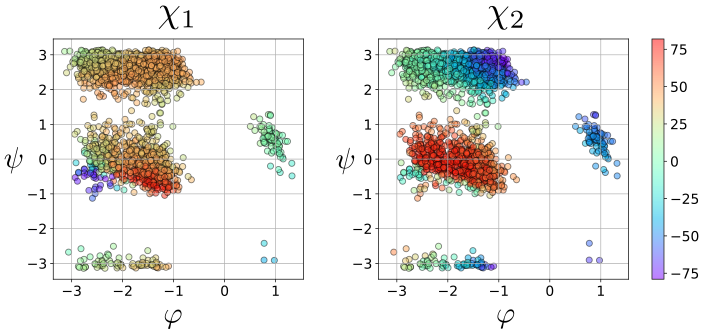}
    \caption{Two eigenfunctions of \(\hat{L}\).}
\label{fig:AD_eigenfunctions}
    \end{subfigure}
\vspace{0pt}
    \begin{subfigure}{0.38\textwidth}
\includegraphics[width=\linewidth]{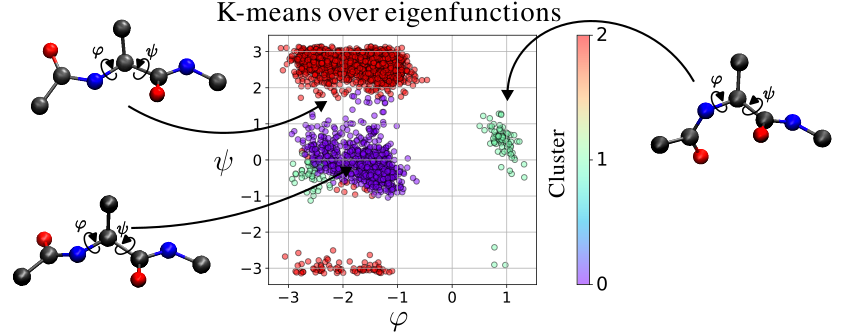}
        \caption{K-means on five eigenfunctions.}\label{fig:AD_kmeans}

    \end{subfigure}
\caption{Spectral insights into the alanine dipeptide dataset using the approximated Koopman generator $\hat{L}$. The datapoints are projected from their 30-dim. state to its corresponding dihedral angles for visualization purposes. With DRAGONS, we are able to directly extract the important metastable conformations of the molecule (\Cref{fig:AD_kmeans}) from the high-dimensional trajectory data.}
\label{fig:AD_experiment}
\end{figure}

\section{Conclusion}

We have shown that data-driven random features provide a robust dictionary for the approximation of the Koopman generator. Our method is supported by empirical results on forecasting problems, demonstrating high accuracy without the high computational costs typically associated with iterative dictionary learning. We also demonstrate how our method can be used to gain insight into the system's spectral properties. Our theoretical results affirm the validity of our method and extend the existing convergence theory for gEDMD. We also extend known limitations of iteratively trained neural networks from discrete-time to a continuous-time setting.

\textbf{Several limitations remain for future work.} Firstly, our convergence results are existence statements without an expression relating the error to the required number of data points or dictionary functions. This could be alleviated by combining results from \citet{llamazares2024data} and \citet{rudi-2017}. Secondly, our theoretical guarantees for spectral convergence are bound to the limitations of gEDMD, which is why future work should look into combining our dictionary with other generator estimation algorithms with improved guarantees. 
Finally, since our algorithm involves solving a potentially large linear system, in the case of sizeable datasets the user needs to carefully choose an efficient linear solver or subsampling the dataset.

\subsection*{AI use statement}


Generative AI tools have been used to assist in the writing of proofs, implement methods, assist with translation and interpret results. 

However, we did not use generative AI tools to help develop theoretical models or conceptual frameworks, generate synthetic data sets, formulate mathematical claims, provide critical ingredients for proving mathematical claims, propose or refine hypotheses, design or provide feedback on research  methodology or experiments, clean and reformat dataset, support qualitative and thematic data analysis. Activities such as formulating questions for surveys or interviews, creation of artifacts and transcribing recordings of research material are not applicable to this work. 

Additionally, we used generative AI tools to summarize or analyse existing literature, source information, edit a research paper to improve readability and identify relevant literature. 

The majority of work done to produce this manuscript was done without the assistance of AI. Key methodological and theoretical ideas came directly from the authors or were inspired by other research which is clearly cited in the manuscript.
Every task performed with the assistance of AI was cross reviewed by authors. AI tools aided the writing of the proof however not in the sense of bridging important gaps, but rather to aid clarity. The core code base is written without any usage of AI tools; AI-generated code was used for auxiliary coding tasks and was verified by at least two authors.
We take full responsibility for the completed work, both for the tasks performed with the assistance of AI as well as the ones without.




\subsection*{Reproducibility statement}

We support our methodology with theoretical results as well as empirical evidence. For the theoretical results we clearly state the necessary assumptions and provide the proofs with sufficient details in the appendix. For the numerical experiments we provide details of the dataset as well as model hyperparameters in the appendix. Our experiments are performed with different random seeds in order to insure robustness to random effects. We provide an anonymized source code and upon acceptance we will make our code repository public so that others can reproduce this work.


\subsubsection*{Acknowledgments}
The authors are grateful to Erik Lien Bolager, Iryna Burak and Chinmay Datar for the helpful discussions and early feedback. Our appreciation also extends to Daniel Lehmberg for the support with the \texttt{datafold} library and to Roman Podieiablonskii for providing the steady-state cylinder flow simulation data. 
 
 V.M.L., A.C. and F.D. are supported by the TUM Georg Nemetschek Institute - Artificial Intelligence for the Built World.
 Z.M. is grateful to the Bundesministerium für Bildung und Forschung (BMBF, Federal Ministry of Education and Research) for funding through project OIDLITDSM, No. 01IS24061.

\newpage
\bibliography{iclr2027_conference}
\bibliographystyle{iclr2027_conference}

\appendix
\section{Related Works}
\label{sec:related_works}
Related works in this direction have primarily focused on learning effective dictionaries for Koopman operator approximation. Ideally, the observables span a compact approximately invariant subspace, but constructing such representations often requires strong prior knowledge or restrictive assumptions~\citep{folkestad2020extended}. Data-driven approaches therefore learn the mapping from the state space to observables directly from data~\citep{dietrich2020koopman}. Neural-network dictionaries are a common choice~\citep{li2017extended}, including methods that learn Koopman eigenfunctions~\citep{lusch2018deep}, jointly optimize the dictionary and operator~\citep{yeung2019learning}, or construct more structured representations using time delays, invertible mappings, or residual information~\citep{valabek2025deep,jin2024extended,coote2026residual}. Although expressive, these methods generally require iterative gradient-based optimization and can therefore be substantially more expensive than the final linear Koopman fit.

A computationally attractive alternative is to construct nonlinear features without iteratively training the hidden representation. Random-feature methods fix or directly sample the nonlinear mapping and reduce learning to a linear estimation problem~\citep{rahimi2007random,rudi-2017}. In Koopman approximation, this idea has been explored using random Fourier features~\citep{bak2022reachability,nuske2023efficient,sprink2026accurate}, randomized approximations of Koopman eigenfunctions~\citep{liao2026approximation}, and reservoir computing~\citep{gulina2021two}.

We discuss these directions in detail below.

\textbf{Explicitly finding a governing equation.}
Learning governing dynamics directly from data has long been an important goal in data-driven modeling, particularly for systems whose analytical equations are unknown or only partially available. One approach is symbolic regression, which searches over candidate analytical expressions to identify compact representations of the underlying dynamics~\citep{bongard2007automated,schmidt2009distilling}. Sparse Identification of Nonlinear Dynamics (SINDy) instead assumes that the governing equations admit a sparse representation in a predefined library of candidate functions and identifies the active terms through sparse regression~\citep{brunton2016discovering}. This sparse-regression framework has also been extended to the discovery of partial differential equations through PDE-FIND~\citep{rudy2017data}.

\textbf{Koopman operator approximation.}
The Koopman operator provides a linear, although generally infinite-dimensional, representation of nonlinear dynamical systems by describing the evolution of observable functions rather than the state itself. Data-driven methods approximate this operator on a finite-dimensional subspace spanned by a dictionary of observables. In particular, Extended Dynamic Mode Decomposition (EDMD) constructs a finite-dimensional approximation from snapshot pairs once such a dictionary has been specified~\citep{williams2015data}. The quality of the resulting approximation, however, depends strongly on this choice. Simple predefined dictionaries, such as monomials, radial basis functions, or other problem-specific features, can be effective for low-dimensional systems but may grow prohibitively large as the state dimension or required approximation order increases.

\textbf{The dictionary-selection problem.}
Selecting an effective dictionary is therefore a central challenge in Koopman operator approximation. Ideally, the observables would span a compact Koopman-invariant, or approximately invariant, subspace and thus provide coordinates in which the nonlinear dynamics evolve linearly. Koopman eigenfunctions constitute a natural choice for such coordinates, but their construction generally requires significant prior knowledge or restrictive assumptions on the underlying dynamical system~\citep{folkestad2020extended}. Conversely, choosing an excessively rich generic dictionary increases both the computational cost and the amount of data required to identify the projected operator. Data-informed dictionaries seek to mitigate this trade-off by constructing representations adapted to the observed dynamics and thereby reducing the number of features required for an accurate approximation~\citep{dietrich2020koopman}.

\textbf{Kernel-based observable spaces.}
Kernel methods provide one route toward richer observable spaces without explicitly enumerating a large dictionary. Kernel DMD represents observables implicitly through the feature map associated with a prescribed kernel, making it possible to operate in very high- or infinite-dimensional feature spaces while carrying out the main computations in terms of kernel evaluations~\citep{williams2015kernel}. This alleviates the explicit feature-growth problem of conventional EDMD, particularly for high-dimensional systems. Nevertheless, the kernel type and its hyperparameters remain design choices, and the computational cost typically scales unfavorably with the number of snapshots. More recently, kernel selection itself has therefore also been formulated as a dictionary-learning problem, in which combinations of kernels and their parameters are optimized from data~\citep{bolager2026dictionary}. Thus, although kernel methods reduce the need to manually specify individual observables, the broader problem of identifying a representation appropriate for the dynamics remains.

\textbf{Neural dictionary learning.}
A widely studied alternative is to parametrize the dictionary directly using neural networks. Early EDMD dictionary-learning approaches jointly adapt a neural representation and the associated Koopman approximation to the observed dynamics~\citep{li2017extended}. DeepKoopman similarly learns approximate Koopman eigenfunctions from data, demonstrating that nonlinear coordinate transformations can be constructed without prescribing the observable functions beforehand~\citep{lusch2018deep}. Related architectures combine an encoder, approximately linear latent dynamics, and a decoder to learn low-dimensional representations in which the evolution is close to linear~\citep{otto2019linearly}. Other formulations directly optimize both the neural-network dictionary and the projected Koopman operator~\citep{yeung2019learning}, use time-delay embeddings to incorporate temporal information~\citep{valabek2025deep}, or employ invertible neural networks to obtain an explicit inverse transformation between the state and observable spaces~\citep{jin2024extended}. These approaches substantially increase the flexibility of the dictionary and can yield compact representations for systems for which suitable analytical observables are not known.
 
\textbf{Neural ODE-based dictionaries and joint Koopman optimization.}
Several recent works extend this basic neural dictionary-learning paradigm. Neural ODEs have been used to construct parameter-efficient EDMD dictionaries by interpreting the feature transformation as the flow of an auxiliary continuous-depth network~\citep{terao2021extended}. Connections between Koopman dictionary learning and neural state-space models have subsequently been studied in more detail. In particular, repeatedly projecting the linearly evolved lifted state back to the physical state space, as commonly done in practical EDMD implementations~\citep{van2023reprojection,junker2022data}, reintroduces a nonlinear state-space evolution and creates a close relationship between projected EDMD models and neural ODE or neural flow-map representations~\citep{buzhardt2025relationship}. Automatic differentiation through the EDMD solution has likewise been used to optimize the dictionary and Koopman approximation simultaneously rather than alternating between the two problems~\citep{constanteamores2024enhancing}.

\textbf{Structure- and residual-aware dictionaries.}
The representation can also be adapted to properties beyond short-term prediction accuracy. Trainable dictionaries have, for example, been combined with Residual DMD by incorporating spectral residuals of approximate Koopman eigenvalues and eigenfunctions into the learning objective~\citep{coote2026residual}. This explicitly targets the spectral consistency of the learned representation rather than relying solely on reconstruction or trajectory errors. Related work constructs approximately invariant observable subspaces through optimization on the Grassmann manifold, allowing part of the observable space to be constrained by prior knowledge while the remaining subspace is adapted to the data~\citep{schurig2025shaping}. For parametrized dynamical systems, both the dictionary and a parameter-dependent family of projected Koopman operators can be represented by neural networks and learned jointly, permitting the operator itself to vary with static or time-dependent parameters~\citep{guo2025learning}. \citet{bai-2025} use the Koopman framework to evolve the parameters of a Neural ODE, although they do not perform dictionary learning, making them an exception to this type of learning.

\textbf{Neural Network approximation of generators for continuous spectra.} Neural approaches have also been developed directly at the generator level. KoopGen, for instance, represents structured state-dependent Koopman generators and is designed to accommodate systems with broadband or continuous spectral components without assuming that the relevant dynamics admit a simple global finite-dimensional eigendecomposition~\citep{su2026koopgen}, however since it does not learn dictionaries, the approach is not directly comparable to our method.

\textbf{Computational cost of trainable dictionaries.}
These developments highlight the expressive power of neural dictionary learning, but this flexibility comes at a computational cost. The parameters defining the observables must typically be determined through iterative gradient-based optimization, often jointly with operator, decoder, or reconstruction parameters. Consequently, dictionary construction can require many passes over the training data and can become substantially more expensive than the final linear least-squares problem underlying classical EDMD. The result may additionally depend on network architecture, initialization, learning rates, regularization, and other optimization hyperparameters. This distinction is important: once a fixed dictionary has been constructed, the corresponding Koopman approximation is essentially a linear regression problem, whereas learning the dictionary can turn the overall procedure into a large non-convex optimization problem.

\textbf{Convergence and spectral reliability.}
Care is also required when discussing convergence of such approximations. For a fixed sequence of sufficiently rich approximation spaces and appropriate sampling assumptions, convergence results are available for EDMD; in particular, finite-dimensional projected operators converge to the Koopman operator in suitable operator senses as the number of samples and dictionary functions increases~\citep{korda2018convergence}. This does not, however, imply that every eigenvalue of a finite EDMD matrix provides a reliable approximation of the spectrum of the infinite-dimensional Koopman operator. Finite-dimensional truncations can exhibit spectral pollution, particularly when the underlying operator has continuous or otherwise complicated spectral components. Residual DMD addresses this distinction by computing operator-level residuals that permit a posteriori verification of candidate spectral quantities and, under suitable assumptions, provide rigorous convergence guarantees for spectra, pseudospectra, and spectral measures~\citep{colbrook-2024a}. Recent residual-guided dictionary-learning methods build directly on this observation by incorporating such residuals into the representation-learning objective~\citep{coote2026residual}.

\textbf{Random-feature alternatives.}
A conceptually different approach is to retain expressive nonlinear representations while avoiding iterative optimization of the hidden mapping altogether. Random-feature methods follow precisely this strategy: rather than learning all parameters of a nonlinear feature map through gradient descent, the hidden parameters are sampled or otherwise constructed once, after which only a linear output model is fitted~\citep{rahimi2007random,rahimi2008weighted}. Such models possess strong approximation properties for broad classes of functions, including Barron-type functions~\citep{barron1993universal} and functions in reproducing kernel Hilbert spaces~\citep{rahimi2008uniform}. Conventional random-feature methods typically draw their parameters from a fixed, data-independent distribution; random Fourier features, for example, sample frequencies according to the spectral distribution associated with a shift-invariant kernel~\citep{rahimi2007random}. Despite their simple training procedure, random features provide favorable trade-offs between approximation accuracy, feature dimension, and computational complexity~\citep{rudi-2017}. From the perspective of Koopman approximation, this separation is particularly natural: the nonlinear features define a candidate observable space, while the action of the Koopman operator on this space can subsequently be estimated through linear regression.

\textbf{Data-dependent sampling with SWIM.}
Rather than sampling features from a prescribed, data-independent distribution, \citet{bolager2023sampling} introduced \textit{Sampling Where It Matters} (SWIM), a data-driven procedure for constructing the weights and biases of shallow and deep neural networks without gradient-based optimization. SWIM samples pairs of training points and constructs individual neurons from their relative positions, assigning higher sampling probability to pairs exhibiting large changes in the target function relative to their distance in feature space. In this way, features are preferentially placed in regions where the target function varies strongly. After the hidden layers have been constructed, only the final linear layer is fitted. The resulting sampled networks retain universal approximation properties and, in the experiments of \citet{bolager2023sampling}, achieve accuracy comparable to conventionally trained networks while requiring substantially less training time.

A closely related combination of SWIM and Koopman operator approximation was recently proposed by \citet{bolager2026koopmaninformed} for constructing \textit{Koopman-informed recurrent neural networks}. There, SWIM is used to generate a data-adapted nonlinear lifting without backpropagation, while EDMD determines a finite-dimensional Koopman operator governing the evolution in the lifted space. A recurrent update is then obtained by repeatedly lifting the current state, evolving it linearly through the learned Koopman operator, and projecting it back to the state space. Consequently, both the nonlinear feature representation and the recurrent dynamics can be constructed without gradient-based training. This work demonstrates that SWIM-generated observables can be successfully coupled with Koopman approximations for dynamical-system learning and provides a particularly close precursor to the dictionary-construction strategy considered here.

\textbf{Random Fourier features for Koopman learning.}
Randomized feature construction has consequently appeared in Koopman methods in several forms. Random Fourier features approximate translation-invariant kernels with an explicit finite-dimensional feature map, avoiding the construction and factorization of the full kernel matrix. Importantly, their use in EDMD predates recent neural dictionary-learning approaches: random Fourier features and Nyström approximations were proposed as scalable approximations of kernel EDMD specifically to reduce the computational cost associated with large datasets~\citep{degennaro2019scalable}. Random Fourier dictionaries have since been applied to Koopman approximation in different settings~\citep{bak2022reachability,nuske2023efficient, tabish-2025, sprink2026accurate}, and randomized shallow feature models have also been used directly to approximate Koopman eigenfunctions~\citep{liao2026approximation}. These approaches replace the nonlinear optimization of a neural dictionary by the generation of a fixed collection of nonlinear observables followed by the usual linear operator fit.

\textbf{Reservoir-based dictionaries.}
Reservoir computing follows a closely related philosophy. A randomly initialized recurrent network maps the input into a high-dimensional nonlinear dynamical representation, while its internal parameters remain fixed and only a linear readout is trained~\citep{bollt2021explaining,gauthier2021next}. The resulting optimization is therefore typically reduced to a least-squares problem. In a Koopman context, reservoir states can themselves serve as observables, allowing the reservoir to provide a nonlinear dictionary without backpropagating through its internal weights. Reservoir-based methods have explicitly exploited this idea for Koopman operator approximation and dictionary construction using linear convex optimization~\citep{gulina2021two}. The recurrent structure additionally provides an implicit representation of temporal history, distinguishing reservoir features from purely state-dependent random mappings.

\section{Koopman Theory}\label{sec:appendix_koopman}
\subsection{Koopman operator}

The Koopman operator provides a linear representation of potentially nonlinear dynamical systems by describing the evolution of observable functions rather than the state itself~\citep{budivsic2012applied}.

Consider a dynamical system on a state space $\mathcal{X}\subseteq\mathbb{R}^{d}$ equipped with a measure $\mu$, where $\vz_t\in\mathcal{X}$ denotes the state at time $t$. Its evolution is governed by
\begin{equation}
    \frac{d}{dt}\vz_t = F(\vz_t),
\end{equation}
where $F:\mathcal{X}\mapsto\mathbb{R}^{d}$ is the vector field describing the instantaneous velocity at each state. Assuming that each initial state in $\mathcal{X}$ admits a unique
solution that remains in $\mathcal{X}$ for all $t\geq0$, the flow map $\flow{t}:\mathcal{X}\mapsto\mathcal{X}$ maps an initial state to its state at time $t$:
\begin{equation}
    \vz_t = \flow{t}(\vz_0).
\end{equation}

The system can therefore be described by the triplet
$(\mathcal{X},\mathcal{T},\flow{\cdot})$, where
$\mathcal{T}\subseteq\mathbb{R}$ is the time domain. For
forward-time evolution, the flow maps $\{\flow{t}\}_{t\geq0}$
form a semigroup, satisfying
\begin{equation}
    \flow{0}=\operatorname{Id}_{\mathcal{X}},
    \qquad
    \flow{t_1}\circ\flow{t_2}=\flow{t_1+t_2}.
\end{equation}

Let $\mathcal{F}$, typically a Hilbert space, be a space of scalar-valued functions
$g:\mathcal{X}\mapsto\mathbb{C}$ called \textit{observables}, chosen such that $g\circ\flow{t}\in\mathcal{F}$ for every
$g\in\mathcal{F}$ and $t\geq0$. The Koopman operator
$\kot:\mathcal{F}\mapsto\mathcal{F}$ is defined by
\begin{equation}
    \kot g = g\circ\flow{t},
    \qquad
    (\kot g)(\vz_0)
    =
    g(\flow{t}(\vz_0))
    =
    g(\vz_t).
\end{equation}
Thus, evolving the state for a time $t$ and then evaluating $g$
gives the same result as applying $\kot$ to $g$ and evaluating
the resulting observable at the initial state. Although the
underlying flow map may be nonlinear, the Koopman operator is
linear in the observable:
\begin{equation}
    \kot(\alpha g_1+\beta g_2)
    =
    \alpha\kot g_1+\beta\kot g_2,
    \qquad
    g_1,g_2\in\mathcal{F},
    \quad
    \alpha,\beta\in\mathbb{C}.
\end{equation}
The flow's semigroup property is inherited by
$\{\kot\}_{t\geq0}$, called the \textit{Koopman semigroup}.

This framework trades nonlinear evolution in a finite-dimensional state space for linear evolution in a typically infinite-dimensional function space, enabling the use of linear spectral methods to analyze nonlinear dynamics. A nonzero observable $\chi\in\mathcal{F}$ is an eigenfunction
of $\kot$ with eigenvalue $\lambda$ if
\begin{equation}
    (\kot\chi)(\vz_0)
    =
    \chi(\flow{t}(\vz_0))
    =
    \lambda\chi(\vz_0),
    \qquad \forall \vz_0\in\mathcal{X}.
\end{equation}
When a vector-valued observable admits an eigenfunction
expansion, it can be expressed as a linear combination of
eigenfunctions with vector coefficients called Koopman modes.
These modes specify fixed directions in the observable's
output space, while the eigenfunctions determine their
state-dependent amplitudes.

For numerical approximation, a finite dictionary of observables is selected to span a finite-dimensional subspace of $\mathcal{F}$. Such a subspace is generally not invariant under the Koopman operator: evolving an observable in the subspace may produce a function outside it. Representing the evolution within the chosen subspace therefore introduces an approximation.

\subsection{gEDMD algorithm}

Generator extended dynamic mode decomposition (gEDMD) approximates
the Koopman generator on a finite-dimensional space of observables
using derivative information~\citep{klus2020data}.

For a strongly continuous Koopman semigroup $\{\kot\}_{t\geq0}$,
the generator $\mathcal{L}:\mathcal{D}\rightarrow\mathcal{F}$
is defined by
\begin{equation}
    \label{eq:appendix_generator_definition}
    \mathcal{L}g
    :=
    \lim_{t\downarrow0}
    \frac{\kot g-g}{t},
    \qquad g\in\mathcal{D}.
\end{equation}
It describes the instantaneous rate of change of observables.
The domain $\mathcal{D}\subseteq\mathcal{F}$ consists of those
observables for which this limit exists in the norm of
$\mathcal{F}$.

For the system $\dot{\vz}_t=F(\vz_t)$ and sufficiently
differentiable $g\in\mathcal{D}$, the chain rule yields
\begin{equation}
    \label{eq:appendix_generator_as_time_der}
    (\mathcal{L}g)(\vz)
    =
    \nabla g(\vz)^\top F(\vz).
\end{equation}
Along a trajectory, this gives
\begin{equation}
    \frac{d}{dt}g(\vz_t)
    =
    (\mathcal{L}g)(\vz_t).
\end{equation}

A nonzero observable $\chi\in\mathcal{D}$ is a generator
eigenfunction with eigenvalue $\lambda$ if
\begin{equation}
    \mathcal{L}\chi=\lambda\chi.
\end{equation}
It therefore evolves exponentially along trajectories:
\begin{equation}
    \chi(\vz_t)
    =
    (\kot\chi)(\vz_0)
    =
    e^{\lambda t}\chi(\vz_0).
\end{equation}
Thus, $\chi$ is also an eigenfunction of $\kot$, with eigenvalue
$e^{\lambda t}$. The real and imaginary parts of the generator
eigenvalue $\lambda$ describe its growth or decay rate and
angular oscillation frequency, respectively.

To approximate the generator we gEDMD, choose a dictionary of sufficiently
differentiable functions $\{\phi_i\}_{i=1}^{N}\subseteq\mathcal{D}$
spanning a finite-dimensional subspace of $\mathcal{F}$ and
collect its values in
\begin{equation}
    \Phi_N(\vz)
    =
    \begin{bmatrix}
        \phi_1(\vz) & \cdots & \phi_N(\vz)
    \end{bmatrix}^{T}.
\end{equation}
Given $M$ sampled states $\vz_m\in\mathcal{X}$ and their
corresponding velocities $\dot{\vz}_m=F(\vz_m)$,
Eq.~\ref{eq:appendix_generator_as_time_der} yields
\begin{equation}
    \dot{\phi}_i(\vz_m)
    =
    (\mathcal{L}\phi_i)(\vz_m)
    =
    \nabla\phi_i(\vz_m)^\top\dot{\vz}_m.
\end{equation}
Here, $\dot{\phi}_i(\vz):=(\mathcal{L}\phi_i)(\vz)$ denotes
the derivative of $\phi_i$ along the dynamics.
The velocities must be available from measurements or the
vector field, or estimated from trajectory data. Define
\begin{equation}
    \dot{\Phi}_N(\vz)
    =
    \begin{bmatrix}
        \dot{\phi}_1(\vz) & \cdots & \dot{\phi}_N(\vz)
    \end{bmatrix}^{T},
\end{equation}
and assemble the data matrices
\begin{equation}
    \begin{aligned}
        \Phi_X
        &=
        \begin{bmatrix}
            \Phi_N(\vz_1) & \cdots & \Phi_N(\vz_M)
        \end{bmatrix},
        \\
        \dot{\Phi}_X
        &=
        \begin{bmatrix}
            \dot{\Phi}_N(\vz_1) & \cdots & \dot{\Phi}_N(\vz_M)
        \end{bmatrix}.
    \end{aligned}
\end{equation}
Both matrices have dimensions $N\times M$.

The generator matrix approximation is obtained by solving
a least-squares problem:
\begin{equation}
    \hat{L}_{NM}^{T}
    \in
    \operatorname*{arg\,min}_{B\in\mathbb{C}^{N\times N}}
    \left\|\dot{\Phi}_X-B\Phi_X\right\|_F^2,
\end{equation}
where $\|\cdot\|_F$ denotes the Frobenius norm.
Its minimum Frobenius-norm solution is
\begin{equation}
    \label{eq:appendix_gedmd_least_squares}
    \hat{L}_{NM}^{T}
    =
    \dot{\Phi}_X\Phi_X^{+},
\end{equation}
where $+$ denotes the Moore--Penrose pseudoinverse.

The transpose arises because dictionary values are arranged
as column vectors.

Equivalently, define the empirical Galerkin matrices
\begin{equation}
    \label{eq:appendix_gedmd_galerkin}
    \hat{\mathbf{C}}_{NM}
    =
    \frac{1}{M}\dot{\Phi}_X\Phi_X^{*},
    \qquad
    \hat{\mathbf{G}}_{NM}
    =
    \frac{1}{M}\Phi_X\Phi_X^{*},
\end{equation}
where $*$ denotes the conjugate transpose. Their entries are
\begin{equation}
    \begin{aligned}
        [\hat{\mathbf{C}}_{NM}]_{i,j}
        &=
        \frac{1}{M}\sum_{m=1}^{M}
        \compconj{\phi_j(\vz_m)}
        \mathcal{L}\phi_i(\vz_m),
        \\
        [\hat{\mathbf{G}}_{NM}]_{i,j}
        &=
        \frac{1}{M}\sum_{m=1}^{M}
        \compconj{\phi_j(\vz_m)}
        \phi_i(\vz_m).
    \end{aligned}
\end{equation}
The generator approximation can then be written as
\begin{equation}
    \hat{L}_{NM}^{T}
    =
    \hat{\mathbf{C}}_{NM}\hat{\mathbf{G}}_{NM}^{+},
\end{equation}
which is equivalent to
Eq.~\ref{eq:appendix_gedmd_least_squares}.

The resulting matrix enables spectral analysis and prediction of dictionary values through
\begin{equation}
    \Phi_N(\vz_t)
    \approx
    \exp\!\left(t\hat{L}_{NM}^{T}\right)\Phi_N(\vz_0).
\end{equation}
This finite-dimensional evolution is generally approximate because the dictionary span need not be invariant under the generator. Its accuracy also depends on the sampled states and the quality of the derivative information. Predicting states additionally requires a reconstruction map from dictionary values to state coordinates.
\section{DRAGONS sampling distribution}
\label{app:sampling}
By choosing the target function $f$ from which we construct $\sP_{\gX}$, we can determine how the data informs the dictionary.
Here, we provide further details on and the intuition behind the different sampling strategies we propose and evaluate for DRAGONS.
\paragraph{Flow-driven} This is a natural choice since it is a direct application of the SWIM algorithm to our given dataset. Further, this has been thoroughly evaluated and empirically shown to perform well across various tasks in the Koopman-Informed Recurrent Neural Networks architecture proposed in ~\citet{bolager2026koopmaninformed}.
    Since our dataset consists of pairs $(\vz_i, \flow{\tau_i}({\vz_i}))$, their strategy can immediately be used to place the neurons in regions of the state space where the flow map exhibits complex behaviour, indicated by large changes over small regions. These considerations yield the density
    \begin{equation}
         p_{\gX}(\vz^{(1)},\vz^{(2)})\propto \frac{\norm{ \flow{\tau_2}(\vz^{(2)}) - \flow{\tau_1}(\vz^{(1)})}}{\norm{\vz^{(2)} - \vz^{(1)}}}.
         \label{eq:flow-driven sampling}
    \end{equation}
    Notably, however, the observed future states are in general not advanced by the same time step in our setting, i.e. $\tau_1\neq \tau_2$. This means that in the limit of $z^{(2)}\rightarrow z^{(1)}$, the above score does not converge to a directional derivative since the temporal offset introduces additional variation that is not explained by the spatial offset between the input points. This merits investigating alternative constructions. 
    
    \paragraph{Vector field-driven} Since the network ultimately parameterises the vector field $F$ rather than the flow map $\flow{}$, we also consider using this as a sampling target. This is further motivated by the close relation of the Koopman generator to the vector field, as evidenced by the relationship $\tgen g = \ip{\nabla g, F}$. This suggests sampling according to
        \begin{equation}
            p_{\gX}(\vz^{(1)},\vz^{(2)})\propto \frac{\norm{ F(\vz^{(2)}) - F(\vz^{(1)})}}{\norm{\vz^{(2)} - \vz^{(1)}}}.
        \end{equation}
    If the vector field is unavailable, it can be approximated from the flow map using a suitable finite-difference method.
    \paragraph{Generator-driven} Another route, motivated by dictionary learning, is to choose observables that faithfully represent the generator's action on the dictionary, analogous to how \citet{li2017extended} construct the training loss by targeting invariant dictionaries. Using the explicit expression of the SWIM-neurons in terms of the sampled points, we can already compute the sampling distribution from how a candidate neuron would behave if sampled without actually having to construct and evaluate it. We now describe how to turn this into a sampling strategy targeting the generator, since this is the main object the neurons need to represent.\\
    Define $\oswim{kl}$ as the SWIM-neuron constructed from the data points, $\vz^{(k)}$ and $\vz^{(l)}$, concretely $\oswim{kl}(\vz)=\sigma(\vw_{kl}^\top\vz+b_{kl})$, where
    \begin{align}
         \vw_{kl}&= s_1\frac{\vz^{(l)}-\vz^{(k)}}{\norm{\vz^{(l)}-\vz^{(k)}}^2},\\
         b_{kl}&=-\ip{\vw_{kl}, \vz^{(k)}}+s_2.
    \end{align}
    An important consequence of the above is that $\oswim{kl}(\vz^{(k)})=\sigma(s_2)$ and $\oswim{kl}(\vz^{(l)})=\sigma(s_1 + s_2)$, which means that we can precompute $\nabla\oswim{kl}$ at the sampled points. Even further, for ReLU and Tanh as activation functions with the default values for $s_1,s_2$, $\sigma'(\vw_{kl}^\top\vz^{(k)}+b_{kl})=\sigma'(\vw_{kl}^\top\vz^{(l)}+b_{kl})$, using the convention that $\text{ReLU}'(x)=\ind{x\geq0}{x}$.
    
    With these considerations in mind, we can define a sampling distribution with $f=\tgen\oswim{12}$ through the density
    \begin{align}
            p_{\mathcal{X}}(\vz^{(1)},\vz^{(2)})&\propto \frac{\norm{ \tgen\oswim{12}(\vz^{(2)}) - \tgen\oswim{12}(\vz^{(1)})}}{\norm{\vz^{(2)} - \vz^{(1)}}}\\
            &\propto\frac{\abs{ \ip{\nabla\oswim{12}(\vz^{(2)}),F(\vz^{(2)})} - \ip{\nabla\oswim{12}(\vz^{(1)}),F(\vz^{(1)})}}}{\norm{\vz^{(2)} - \vz^{(1)}}} \\
            &\propto\frac{\abs{ \sigma'(\vw_{12}^\top\vz^{(2)}+b_{12})\ip{\vw_{12},F(\vz^{(2)})} - \sigma'(\vw_{12}^\top\vz^{(1)}+b_{12})\ip{\vw_{12},F(\vz^{(1)})}}}{\norm{\vz^{(2)} - \vz^{(1)}}} \\
            &\propto\frac{\abs{\ip{\vw_{12},F(\vz^{(2)})} - \ip{\vw_{12},F(\vz^{(1)})}}}{\norm{\vz^{(2)} - \vz^{(1)}}} \\
        &\propto\frac{\abs{\ip{s_1\frac{\vz^{(2)}-\vz^{(1)}}{\norm{\vz^{(2)}-\vz^{(1)}}^2},F(\vz^{(2)})-F(\vz^{(1)})} }}{\norm{\vz^{(2)} - \vz^{(1)}}} \\
            &\propto\frac{\abs{(\vz^{(2)} - \vz^{(1)})^\top (F(\vz^{(2)}) - F(\vz^{(1)}))}}{\norm{\vz^{(2)} - \vz^{(1)}}^3},
    \end{align}
    where we absorbed the $\sigma'$ term and $s_1$ into the normalisation constant.  This can be intuitively understood as using the neuron to test how $\tgen$ varies over the space of observables.

\section{Convergence of vector field approximation and resulting trajectories}
\label{app:convergence_theory}
In this section, we give the necessary assumptions and proofs for the theoretical results about DRAGONS and gEDMD in the main paper.
\subsection{Assumptions \& Notation}
\label{app:theory_ass}
 Following \citet{bolager2026koopmaninformed}, we define the state space $\gX$ as in \citet{bolager2023sampling}. We set
\begin{align*}
    d(\vz, A) = \inf\{d(\vz,\va) \colon \va \in A\},
\end{align*}
where \(d\) is the canonical Euclidean distance in the space \(\mathbb R^\dz\). The medial axis is defined as
\begin{align*}
    \text{Med}(A) = \{ \vx \in \mathbb{R}^{d} \colon \exists \vp\neq \vq \in A, \lVert \vp - \vx \rVert = \lVert \vq - \vx \rVert = d(\vx, A)\}
\end{align*}
and the reach is the scalar
\begin{align*}
    \tau_A = \inf_{\va\in A} d (\va, \text{Med}(A)),
\end{align*}
i.e., the point in \(A\) that is closest to the projection of points in \(A^c\).
\begin{definition}\label{def:input_space}
    Let \(\widetilde{\gX}\) be a nonempty compact subset of \(\mathbb R^{d}\) with reach \(\tau_{\widetilde{\gX}} > 0\). The input space \(\gX\) is defined as 
    \begin{align*}
        \gX = \{\vx \in \mathbb R^{d} \colon d(\vx,\widetilde{\gX}) \leq \epsilon_\gX \},
    \end{align*}
    where \(0 < \epsilon_\gX < \min\{\tau_{\widetilde{\gX}},1\}\).
\end{definition}
\begin{remark}
    This restriction to the type of state and input spaces we consider is sufficient to construct all neural networks of interest by choosing pairs of points from the space in question, and construct the weight and bias as in \Cref{eq:sampled_weights}. It is also argued in \citet{bolager2023sampling} that most interesting real-world applications will contain some noise and make sure that the state and input spaces are approximately in the form given in the definition.
    \end{remark}

Furthermore, the convergence results from \citet{llamazares2024data}, on which our convergence proofs are based, require the following assumptions, which we restate here in full for convenience before showing that sampled networks meet the requirements on the dictionary and giving a simple sufficient condition on the dynamical system for the assumptions.
\begin{assumption}
\label{ass:koop1}
We assume the following:
\begin{enumerate}
    \item\label{ass:koop1a} The dictionary functions $\dswim{N} = \{\oswim{1}, \dots, \oswim{N}\} \subset \dom{}$ are linearly independent.
    \item\label{ass:koop1b} The functions $\{\oswim{n}, \tgen \oswim{n}\}_{n=1}^N$ are continuous $\mu$ almost everywhere.
    \item\label{ass:koop1c} The points $\{\vx_m\}_{m=1}^M \subset \gX$ are i.i.d.\ samples from $\mu$.
\end{enumerate}
\end{assumption}

\begin{assumption}
\label{ass:koop2}
    The dictionary functions $\dswim{N} = \{\oswim{n}\}_{n=1}^N$ satisfy $\oswim{j} \tgen \oswim{i} \in \fss{}$ as well as $\oswim{i}^2 \in \fss{}$ for all $i, j \in \{1, \dots, N\}$.
\end{assumption}
\begin{assumption}
\label{ass:koop3}
\[
\lim_{N \to \infty} \| \projf{N} f - f \|_\fss{} = 0, \quad \forall f \in \fss{},
\]
where $\projf{N}$ is the projection of $\fss{}$ onto $\fss{N}$ using the inner product on $\fss{}$.
\end{assumption}

\begin{assumption}
\label{ass:koop4}
$\tgen$ is a closed operator, and
\[
\lim_{N \to \infty} \| \projd{N} f - f \|_\dom{} = 0, \quad \forall f \in \dom{}.
\]
Here, $\projd{N}$ is the projection of $\dom{}$ onto $\fss{N}$ using the inner product on $\dom{}$.
\end{assumption}

Finally, we add the following standard assumption on the probability measure $\mu$, which leads to simple sufficient conditions for the boundedness and strong continuity of the Koopman Operator semigroup.

\begin{assumption}
    \label{ass:measure}
    The measure $\mu$ has an everywhere positive and bounded density $\rho$ w.r.t. the Lebesgue measure $\lambda$ on $\gX$, where $0<\alpha\leq\rho(\vx)\leq\beta<\infty, \forall \vx\in\gX$.
\end{assumption}

\paragraph{Notations.} We further define the following notations to be used throughout this section.
\begin{itemize}
    \item Space of observables $\fss{}=\fso{\gX}$, equipped with its standard inner product $\ip{\cdot,\cdot}_\fss{}$ and norm $\norm{\cdot}_\fss{}$.
    \item Operator norm induced by the $\fss{}$ norm $\norm{\kot}_{\gB(\fss{})}$.
    \item Span of the dictionary $\fss{N}=\text{span}(\dswim{N})$.
    \item Indicator functions $\mathbf{1}_{\sA}(x)=\begin{cases}
        1, x\in\sA,\\
        0, x\notin\sA.
    \end{cases}$
    \item The Euclidean inner product and norm, $\ip{\cdot,\cdot}$ and $\norm{\cdot}$, are denoted without subscript.
\end{itemize}

\subsection{Validity of SWIM-sampled Dictionary}
Next, we will show the validity of our dictionary and state simple, practically oriented, sufficient conditions for abstract properties that we require on the dynamical system.

Firstly, we require exponential boundedness of the Koopman operator $\kot$ and closedness of the generator $\tgen$. Since both of these properties are fulfilled when the Koopman operators form a $C_0$-Semigroup, we provide the following lemma showing that this is, for example, attained in the case of Lipschitz Vector Fields. This is also the natural setting in which to work, since only then are we guaranteed the existence of a unique solution to the ODE by the Picard-Lindelöf Theorem.

\begin{lemma}[Lipschitz Vector Fields have strongly continuous Koopman operators]
\label{lemm:c0}
    Let $F(z)=\left(F_1(z),F_2(z),\dots,F_d(z)\right)^\top$ be a Lipschitz continuous vector field with Lipschitz constant $L$, and $\gX$ be forward invariant under the flow map $\flow{t}$, i.e. $\flow{t}(\gX)\subset\gX$. Then the Koopman operators form a strongly continuous semigroup. In specific, the Koopman generator $\tgen$ is a closed operator and $\kot$ fulfills the exponential bound $\norm{\kot}_{\gB(\fss{})}\leq \sqrt{\frac{\beta}{\alpha}}e^{\omega t}$ with $\omega=\frac{d\cdot L}{2}$.
\end{lemma}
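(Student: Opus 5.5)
We work on $\fss{}=\fso{\gX}$ and prove three things in turn: the exponential bound, strong continuity, and closedness of $\tgen$.

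\textbf{Step 1: exponential bound via change of variables.} Since $F$ is Lipschitz on the compact set $\gX$, Picard--Lindel\"of gives a unique global flow $\flow{t}$. Forward invariance gives $\flow{t}:\gX\to\gX$, and each $\flow{t}$ is a bi-Lipschitz homeomorphism onto its image, with inverse $\flow{-t}$ defined there. For $g\in\fss{}$ we compute
\begin{equation*}
\norm{\kot g}_{\fss{}}^2=\int_\gX \abs{g(\flow{t}(\vz))}^2\rho(\vz)\,\dm{\vz}\leq \beta\int_\gX \abs{g(\flow{t}(\vz))}^2\,\dm{\vz}.
\end{equation*}
We then substitute $\vy=\flow{t}(\vz)$, which is valid since $\flow{t}$ is Lipschitz, hence a.e.\ differentiable, and the area formula applies. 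Liouville's formula gives
\begin{equation*}
\det D\flow{t}(\vz)=\exp\Big(\int_0^t \operatorname{div}F(\flow{s}(\vz))\,\dm{s}\Big).
\end{equation*}
Because $F$ is Lipschitz, $\abs{\partial_i F_i}\leq L$ a.e., so $\abs{\operatorname{div}F}\leq dL$. Hence $\abs{\det D\flow{-t}}\leq e^{dLt}$, which bounds the Jacobian of the inverse map. This yields
\begin{equation*}
\int_\gX \abs{g\circ\flow{t}}^2\,\dm{\vz}\leq e^{dLt}\int_{\flow{t}(\gX)}\abs{g}^2\,\dm{\vy}\leq \frac{e^{dLt}}{\alpha}\norm{g}_{\fss{}}^2 .
\end{equation*}
Taking square roots gives $\norm{\kot}\leq\sqrt{\beta/\alpha}\,e^{dLt/2}$, i.e.\ $\omega=dL/2$.

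\textbf{Step 2: strong continuity.} We use density. For $g\in C(\gX)$, which is dense in $\fss{}$ since $\mu$ is a finite Borel measure on a compact set, $g$ is uniformly continuous and $\flow{t}(\vz)\to\vz$ uniformly as $t\downarrow0$, because $\norm{\flow{t}(\vz)-\vz}\leq t\sup_{\gX}\norm{F}$. Hence $\kot g\to g$ uniformly and therefore in $\fss{}$. Step 1 gives the uniform bound $\sup_{t\in[0,1]}\norm{\kot}<\infty$. A standard $\epsilon/3$ argument then extends convergence to all of $\fss{}$. The semigroup property $\ko{t+s}=\kot\ko{s}$ follows directly from the flow property.

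\textbf{Step 3: closedness.} Once we have a $C_0$-semigroup, its generator is closed and densely defined by the standard theorem \closedgen. This finishes the proof.

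\textbf{Main obstacle.} The delicate step is the Jacobian estimate. It requires Liouville's formula for a merely Lipschitz $F$, where $D\flow{t}$ exists only a.e., so we must justify the change of variables. I would first prove the bound for smooth $F$ and pass to Lipschitz $F$ by mollification, using stability of flows and of the bound, which depends only on $L$. Alternatively, and more simply, I would use the crude estimate $\abs{\det A}\leq\norm{A}^d$ with Gr\"onwall's bound $\norm{D\flow{-t}}\leq e^{Lt}$. This yields the same $e^{dLt}$ and only needs Rademacher's theorem and the area formula for Lipschitz injective maps.
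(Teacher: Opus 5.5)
Your proof is correct and follows the paper's argument: the same change of variables with the density ratio $\beta/\alpha$ and Jacobian bound $e^{dLt}$, the same density-plus-$\epsilon/3$ argument for strong continuity, and the same standard theorem for closedness of $\tgen$. Your fallback for the Jacobian, $\abs{\det A}\leq\norm{A}^d$ together with the Gr\"onwall bound making $\flow{-t}$ $e^{Lt}$-Lipschitz, is exactly the paper's route and the one to prefer, since it avoids justifying Liouville's formula for merely Lipschitz $F$; your estimate $\norm{\flow{t}(\vz)-\vz}\leq t\sup_\gX\norm{F}$ combined with uniform continuity also handles continuous $g$ more cleanly than the paper's pointwise argmax argument.
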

\begin{proof}
    We will first show exponential boundedness and then use it to prove strong continuity, from which the closedness of the generator immediately follows \closedgen. Let $f\in\fss{}=\fso{\gX}$. Then
    \begin{align}
        \norm{\kot f}_{\fss{}}^2&=\int_{\gX}f\left(\flow{t}(\vz_0)\right)^2\dm{\mu(\vz_0)}&&\\
        &=\int_{\gX}f\left(\flow{t}(\vz_0)\right)^2\rho(\vz_0)\dm{\lambda(\vz_0)}&&\\
        &=\int_{\flow{t}(\gX)}f\left(\vz_t\right)^2\left|\det \nabla \flow{-t}(\vz_t)\right|\frac{\rho\left(\flow{-t}(\vz_t)\right)}{\rho(\vz_t)}\rho(\vz_t)\dm{\lambda(\vz_t)}&&\text{($\vz_t=\flow{t}(\vz_0)$)}\\
        &\leq\frac{\beta}{\alpha}\int_{\flow{t}(\gX)}f\left(\vz_t\right)^2\left|\det \nabla \flow{-t}(\vz_t)\right|\rho(\vz_t)\dm{\lambda(\vz_t)}.&&
    \end{align}
    Next, for $z_t\in\flow{t}(\gX)$, we estimate the Jacobian determinant by
    \begin{align}
        \label{c0:det}
        \left|\det \nabla \flow{-t}(\vz_t)\right| &= 
        \prod_{i=1}^d \sigma_i(\nabla \flow{-t}(\vz_t)) \leq \sigma_{\max}(\nabla \flow{-t}(\vz_t))^d=\norm{\nabla \flow{-t}(\vz_t)}^d_{\gB(\ell^2)},
    \end{align}
    where the norm is the matrix operator norm induced by the $\ell^2$ norm, i.e. the spectral norm. Since the operator norm of the Jacobian is bounded by the Lipschitz constant, we next show that $\flow{-t}$ is Lipschitz continuous and estimate its constant. For $z_1,z_2\in\flow{t}(\gX)$ and $\tau\leq t$ (so that the backwards flow map is defined), we have
    \begin{align}
        \frac{\dm{}}{\dm{\tau}} \norm{\flow{-\tau} (z_1)-\flow{-\tau} (z_2)}^2
        &=\frac{\dm{}}{\dm{\tau}} \ip{\flow{-\tau} (z_1)-\flow{-\tau} (z_2),\;\flow{-\tau} (z_1)-\flow{-\tau} (z_2)}\\
        &=2\ip{\frac{\dm{}}{\dm{\tau}}\flow{-\tau} (z_1)-\frac{\dm{}}{\dm{\tau}}\flow{-\tau} (z_2),\;\flow{-\tau} (z_1)-\flow{-\tau} (z_2)}\\
        &=-2\ip{F(\flow{-\tau} (z_1))-F(\flow{-\tau} (z_2)),\;\flow{-\tau} (z_1)-\flow{-\tau} (z_2)}\\
        &\leq2\norm{F(\flow{-\tau} (z_1))-F(\flow{-\tau} (z_2))}\norm{\flow{-\tau} (z_1)-\flow{-\tau} (z_2)}\\
        &\leq2 L\norm{\flow{-\tau} (z_1)-\flow{-\tau} (z_2)}^2,
    \end{align}
    where we used Cauchy-Schwartz and the Lipschitzness of the vector field in the last two steps.
    Applying Grönwall's inequality to the above now yields
    \begin{equation}
        \norm{\flow{-\tau} (z_1)-\flow{-\tau} (z_2)}^2\leq \exp(2L\tau) \norm{\flow{0} (z_1)-\flow{0} (z_2)}^2=\exp(2L\tau)\norm{z_1-z_2}^2,
    \end{equation}
    which, after taking the square root, shows that $\flow{-t}$ is $ \exp(Lt)$-Lipschitz. The same argument shows that $\flow{t}$ is also $ \exp(Lt)$-Lipschitz.
    Bounding the (matrix) operator norm and combining with equation~\eqref{c0:det}, we get
    \begin{equation}
        \left|\det \nabla \flow{-t}(\vz_t)\right|\leq\exp(Ldt),
    \end{equation}
    and together with the previous calculation
    \begin{align}
        \norm{\kot f}_\fss{}^2&\leq\frac{\beta}{\alpha}\exp(Ld t)\int_{\flow{t}(\gX)}f\left(\vz_t\right)^2\rho(\vz_t)\dm{\lambda(\vz_t)}\\
        &\leq\frac{\beta}{\alpha}\exp(Ld t)\int_{\gX}f\left(\vz_t\right)^2\dm{\mu(\vz_t)}
        = \frac{\beta}{\alpha}\exp(Ld t)\norm{f}_\fss{}^2.
    \end{align}
    Since $f$ was arbitrary, taking the square root gives the exponential bound $\norm{\kot}_{\gB(\fss{})}\leq\sqrt{\frac{\beta}{\alpha}} e^{\omega t}$, where $\omega=\frac{d\cdot L}{2}$ as claimed.\\
    Next, we show that the stated assumptions on $\mu$ and $F$ lead to a $C_0$-Semigroup of Koopman operators. The identity and composition properties follow immediately from the definition of $\kot$. Thus, we only need to show that it is continuous in $t$ in the strong operator topology induced by the $\fss{}$-norm, that is
    \begin{equation}
        \forall f\in\fss{}: \lim_{t\downarrow0} \norm{\kot f - f}_\fss{}=0, \;\text{ i.e. }\; \forall f\in\fss{},\forall \epsilon>0, \exists t>0 : \norm{\kot f - f}_\fss{}<\epsilon.
    \end{equation}
    Let $f\in\fss{}$ and $\epsilon>0$ be fixed. From the above we have that $\norm{\kot}_{\gB(\fss{})}<Me^{\omega t}$ where $\omega\in\R$ and w.l.o.g. $M\geq1$.
    Further, because $C^0(\gX)$ is dense in $\fss{}$ w.r.t. the $L^2$-norm and by the assumptions on the measure also w.r.t. the $\fss{}$-norm, we can choose $g\in C^0$ s.t. $\norm{f-g}_\fss{}<\frac{\epsilon}{3M(1+\epsilon)}<\frac{\epsilon}{3}$.
    We then estimate
    \begin{align}
        \norm{\kot f - f}_\fss{}&\leq \norm{\kot (f - g)}_\fss{}+\norm{\kot g-g}_\fss{} + \norm{g-f}_\fss{}\\
        &\leq\norm{\kot}_{\gB(\fss{})}\frac{\epsilon}{3M(1+\epsilon)}+\norm{g\circ\flow{t}-g}_\fss{} + \frac{\epsilon}{3}.
    \end{align}
    Next, because of the boundedness of $\kot$, we can choose a $t_0$ s.t. for $t_0'\leq t_0$ $\norm{\ko{t_0'}}_{\gB(\fss{})}<M(1+\epsilon)$. Further, we estimate 
    \begin{equation}
        \norm{g\circ \flow{t}- g}_\fss{}\leq \sqrt{\mu(\gX)}\norm{g\circ \flow{t}- g}_\infty=\max_{z\in\gX}\abs{g(\flow{t}(z))- g(z)},
    \end{equation}
    where the supremum is taken as a maximum because $\abs{g(\flow{t}(z))- g(z)}$ is continuous in $z$ as the composition of continuous functions (as shown above) and $\gX$ is compact. Letting now $z^*=\argmax_{z\in\gX}\abs{g(\flow{t}(z))- g(z)}$, by continuity $\exists\delta>0$ such that 
    \begin{equation}
        \norm{\flow{t}(z^*)-z^*}<\delta\implies \abs{g(\flow{t}(z^*))-g(z^*)}<\frac{\epsilon}{3}.
    \end{equation}
    Further, because $\flow{t}(z^*)$ is continuous in $t$, we can choose such a $t_1$ for which for all $t_1'\leq t_1$ $\norm{\flow{t_1'}(z^*)-z^*}<\delta$.
    Finally, setting $t=\min(t_0,t_1)$ yields
    \begin{align}
        \norm{\kot f - f}_\fss{}
        &<M(1+\epsilon)\frac{\epsilon}{3M(1+\epsilon)}+\frac{\epsilon}{3} + \frac{\epsilon}{3}\\
        &= \frac{\epsilon}{3}+\frac{\epsilon}{3}+\frac{\epsilon}{3}=\epsilon,
    \end{align}
    completing the proof.
\end{proof}
\begin{remark}
    Two special cases of the above are worth noting. Firstly, for finite time intervals of interest, we get a constant bound on the operator norm. Secondly, if $\mu=\frac{1}{\lambda(\gX)}\lambda$ is the normalised Lebesgue measure, then the operator fulfils the stronger bound $\norm{\kot}_{\gB(\fss{})}\leq e^{\omega t}$.
\end{remark}

Next, we also need the following extension of existing results on the approximation capabilities of SWIM-sampled neural networks from \citet{bolager2023sampling}. 
\begin{lemma}[Universal approximation of Sobolev functions]
\label{lemm:swim_sob}
    Let $g\in\sob$. Then, for any $\epsilon>0$, there exists a SWIM-sampled network $\dswim{}$ with ReLU activation function $\oswim{}$, such that
    \begin{equation}
        \norm{g-\Phi}_\sobs<\epsilon.
    \end{equation}
\end{lemma}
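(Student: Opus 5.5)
The plan is to reduce the weighted Sobolev statement to an unweighted one on $\gX^\circ$, and then to approximate in two stages: smooth functions first, then SWIM-sampled ReLU networks.

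\textbf{Step 1 (reduction to the unweighted norm).} By \assref{measure}, $\alpha\leq\rho\leq\beta$. Hence
\begin{equation*}
    \alpha\norm{h}^2_{W^{1,2}(\gX^\circ,\lambda)}\leq\norm{h}^2_{\sobs}\leq\beta\norm{h}^2_{W^{1,2}(\gX^\circ,\lambda)},
\end{equation*}
so $\sob$ and the Lebesgue Sobolev space coincide as sets with equivalent norms. It therefore suffices to find, for every $\epsilon'>0$, a SWIM network $\Phi$ with $\norm{g-\Phi}_{W^{1,2}(\gX^\circ,\lambda)}<\epsilon'$, and then take $\epsilon'=\epsilon/\sqrt{\beta}$.

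\textbf{Step 2 (density of smooth functions).} The set $\gX$ from Definition~\ref{def:input_space} is the $\epsilon_\gX$-neighbourhood of a set with positive reach, where $\epsilon_\gX<\tau_{\widetilde{\gX}}$. Such a set has a $C^{1,1}$, hence Lipschitz, boundary. We then invoke the standard extension/density theorem for Lipschitz domains: $C^\infty(\overline{\gX})$, equivalently restrictions of $C_c^\infty(\R^d)$, is dense in $W^{1,2}(\gX^\circ)$. This gives a smooth $h$ with $\norm{g-h}_{W^{1,2}}<\epsilon'/3$.

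\textbf{Step 3 (approximation by a ReLU network, including derivatives).} Next we approximate $h$ in the $W^{1,2}$ norm by a shallow ReLU network $\sum_k c_k\,\mathrm{ReLU}(\vw_k^\top\vz+b_k)$. One route is the Barron-type integral representation of a compactly supported smooth extension of $h$ as a superposition of ReLU ridge functions. That representation can be discretised in a way that controls both the $L^\infty$ error and the $L^2$ error of the gradient; the gradient of a ReLU unit is a Heaviside ridge, and Riemann sums of Heaviside ridges converge in $L^2$. A more elementary alternative is to take a continuous piecewise-linear interpolant of $h$ on a fine simplicial mesh. This converges in $W^{1,2}$ with rate $O(\text{mesh size})$, and every continuous piecewise-linear function is exactly representable by a ReLU network, though in general a deep one. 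Since we need a single hidden layer, I would use the first route, which yields a network $\Psi$ with $\norm{h-\Psi}_{W^{1,2}}<\epsilon'/3$.

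\textbf{Step 4 (replacing weights by SWIM-sampled ones).} Finally, every neuron of $\Psi$ must be replaced by one of the form \Cref{eq:sampled_weights}, with $\vx^{(1)},\vx^{(2)}\in\gX$. For ReLU with $s_1=1$, $s_2=0$, the neuron from the pair $(\vx^{(1)},\vx^{(2)})$ is $\mathrm{ReLU}$ of an affine function that vanishes on the hyperplane through $\vx^{(1)}$ orthogonal to $\vx^{(2)}-\vx^{(1)}$, with slope $1/\norm{\vx^{(2)}-\vx^{(1)}}$. Following the argument of \citet{bolager2023sampling}, the geometry of $\gX$ (the $\epsilon_\gX$-ball around every point of $\widetilde{\gX}$) lets us choose such pairs. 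We then distinguish two kinds of neurons. A neuron whose kink hyperplane meets $\gX$ can be reproduced exactly by a sampled neuron with a rescaled outer weight. A neuron whose kink misses $\gX$ is affine on $\gX$, so it can be absorbed into a constant plus linear term, and those terms are in turn built from sampled neurons active on all of $\gX$.

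I expect the main obstacle to be this last step: controlling the gradient error under the replacement. Away from the kink the replacement is exact, but when a kink only grazes $\gX$ the available pairs may force a slight change of direction or offset. The resulting gradient perturbation is a Heaviside difference supported on a thin slab, and its $L^2$ norm scales like the square root of the slab volume. So it can be made small, but it has to be tracked carefully, uniformly over all neurons, to give $\norm{\Psi-\Phi}_{W^{1,2}}<\epsilon'/3$. The triangle inequality then concludes the proof.
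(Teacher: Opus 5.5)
Your route is essentially the paper's. Both arguments approximate $g$ by a smooth function, then approximate that by a shallow ReLU network in $W^{1,2}$; the paper just cites a theorem here, as you effectively do with the Barron-type argument. Both then realise that network with kinks through points of $\gX$ via Lemma~3 of \citet{bolager2023sampling}, pay for the remaining perturbation through the volume of a thin slab, and pass to SWIM with Lemma~4 there. In the paper's version of the slab step, each anchor $\vx_i$ moves by less than $\delta$ into $\gX^\circ$ with $\hat{\vw}_{1,i}$ fixed, giving a gradient error of order $\sqrt{\beta C\delta}$ per neuron. Your Step~2 is more careful than the paper's appeal to Meyers--Serrin, which only yields $f\in C^\infty(\gX^\circ)$ with derivatives possibly unbounded near $\partial\gX$. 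Using that the $\epsilon_\gX$-offset of a set with reach $\tau_{\widetilde\gX}>\epsilon_\gX$ has $C^{1,1}$ boundary gives you smoothness up to the boundary. Hence you get a compactly supported smooth extension, which is the form in which Sobolev approximation results for networks are usually stated.

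One claim in Step~4 is false as written. A neuron whose kink misses $\gX$ need not be affine on $\gX$: Definition~\ref{def:input_space} allows disconnected $\gX$ (e.g.\ $\widetilde\gX$ consisting of two points). The hyperplane can then separate components, so the neuron is $0$ on some components and affine on others, and it cannot be absorbed into a constant plus linear term.

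The repair is short. Put $u=\vw^\top\vz+b$. If one of $\gX\cap\{u<0\}$, $\gX\cap\{u>0\}$ is empty, the neuron is $0$ or $u$ on $\gX$. Otherwise set $s^-=\max_{\gX\cap\{u<0\}}u$ and $s^+=\min_{\gX\cap\{u>0\}}u$, both attained by compactness. Then on $\gX$ one has $\mathrm{ReLU}(u)=\frac{s^+}{s^+-s^-}\mathrm{ReLU}(u-s^-)-\frac{s^-}{s^+-s^-}\mathrm{ReLU}(u-s^+)$, and both new kinks meet $\gX$.

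The obstacle you anticipate does not in fact arise. Every $\vp\in\gX$ lies in a ball $\bar B(\va,\epsilon_\gX)\subseteq\gX$ with $\va\in\widetilde\gX$. So for a kink hyperplane through $\vp$ with normal $\vw$, either $\vp+t\vw$ or $\vp-t\vw$ lies in $\gX$ for small $t>0$; if $\va$ itself lies on the hyperplane, anchor at $\va$ instead. Combine this with $\mathrm{ReLU}(u)=u+\mathrm{ReLU}(-u)$. Also use that affine functions are represented exactly on $\gX$ by neurons anchored at a minimiser and a maximiser of a linear functional over $\gX$. Then every shallow ReLU network coincides exactly on $\gX$ with a unit-sampled one, and the slab estimate (yours and the paper's) is not needed.
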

\begin{proof}
    By the Meyers-Serrin Theorem \citep{meyers-serrin}, we can pick a $f\in C^1(\gX^\circ)$ such that $\norm{f-g}_{\sobs}<\frac{\epsilon}{3}$. Further, by \citet[Theorem 1]{sob_nets}, there exists a shallow network $\tilde{\dswim{}}$ with $N$ neurons and ReLU activation function with $\norm{f-\tilde{\dswim{}}}_{\sobs}<\frac{\epsilon}{3}$. Thus, the only non-trivial step is extending the universal approximation of ReLU networks from \citet[Lemma 3]{bolager2023sampling} to the $\sobs$ norm. We again show this for unit-sampled networks before extending the result to SWIM by invoking \citet[Lemma 4]{bolager2023sampling}. In the setting of \citet[Lemma 3]{bolager2023sampling}, we can construct a network $\hat{\dswim{}}$ with $N_1\leq6N$ neurons and biases of the form $\hat{b}_{1,i}=\langle\hat{\vw}_{1,i},\vx_i\rangle$ with $\vx_i\in\gX$ such that $\hat{\dswim{}}=\tilde{\dswim{}}$ on $\gX$. Next, we construct the $N_1$ weights and biases of a unit sampled network $\dswim{}^U$ exactly as in the proof of the Lemma and investigate the per-neuron error on the weak gradients. Let
    \begin{align}
        \hat{n}_i(\vx)&=\hat{w}_{2,i}\phi(\langle \hat{\vw}_{1,i},\vx-\vx_i\rangle),\\
        n_i(\vx)&=\hat{w}_{2,i}\phi(\langle \hat{\vw}_{1,i},\vx-\swimx \rangle)
    \end{align} respectively denote $i$-th neuron in $\hat{\Phi}$ and $\dswim{}^U$, where $\swimx\in\gX^\circ$ was chosen such that $\norm{\vx_i-\swimx}<\delta$ with a $\delta$ s.t. by continuity $\left|\phi(\langle \hat{\vw}_{1,i},\vx-\swimx\rangle)-\phi(\langle \hat{\vw}_{1,i},\vx-\vx_i\rangle)\right|<\frac{\epsilon}{6 N_1 w_2\mu(\gX)},$ $\forall \vx\in\gX$ and $w_2=\max_{1\leq i\leq N_1} |\hat{w}_{2,i}|$. Their weak gradients are thus 
    \begin{align}
        \nabla\hat{n}_i(\vx)&=\hat{w}_{2,i}\hat{\vw}_{1,i}\ind{\langle \hat{\vw}_{1,i},\vx-\vx_i\rangle\geq0}{\vx},\\
        \nabla n_i(\vx)&=\hat{w}_{2,i}\hat{\vw}_{1,i}\ind{\langle \hat{\vw}_{1,i},\vx-\swimx\rangle\geq0}{\vx},
    \end{align}
    with error (the dependence on $\delta$ will be shown in what follows) 
    \begin{align}
        \varepsilon^2_i(\delta)&=\norm{\nabla\hat{n}_i(\vx)-\nabla n_i(\vx)}_{L^2}^2\\
        &=\norm{\hat{\vw}_{1,i}}^2\abs{\hat{w}_{2,i}}^2\int_{\gX}\abs{\ind{\langle \hat{\vw}_{1,i},\vx-\vx_i\rangle\geq0}{\vx}-\ind{\langle \hat{\vw}_{1,i},\vx-\swimx\rangle\geq0}{\vx}}^2\dm{\mu(\vx)}\\
        &\leq\norm{\hat{\vw}_{1,i}}^2\abs{\hat{w}_{2,i}}^2\beta\int_{\gX}\abs{\ind{\langle \hat{\vw}_{1,i},\vx-\vx_i\rangle\geq0}{\vx}-\ind{\langle \hat{\vw}_{1,i},\vx-\swimx\rangle\geq0}{\vx}}^2\dm{\lambda(\vx)}.
    \end{align}
    Inspection of the integrand reveals that it evaluates to $1$ exactly on the region enclosed between two parallel hyperplanes and $0$ otherwise, and can thus be written as the indicator function for this region. Concretely, let
    \begin{align}
        \hat{\sH}_i &= \set{\vx\in\gX \;|\; \langle \hat{\vw}_{1,i},\vx-\vx_i\rangle=0 },\\
        \sH_i &= \set{\vx\in\gX \;|\; \langle \hat{\vw}_{1,i},\vx-\swimx\rangle=0 },\\
        \sB_i(\delta)&=\set{\vx\in\gX\;|\;\min\left(\langle \hat{\vw}_{1,i},\vx_i\rangle, \langle \hat{\vw}_{1,i},\swimx\rangle\right)\leq\langle \hat{\vw}_{1,i},\vx\rangle\leq\max\left(\langle \hat{\vw}_{1,i},\vx_i\rangle, \langle \hat{\vw}_{1,i},\swimx\rangle\right) }.
    \end{align}
    Informally, $\sB_i(\delta)$ denotes exactly the region inside $\gX$ between $\hat{\sH}_i$ and $\sH_i$. Furthermore, the perpendicular distance $d(\hat{\sH}_i,\sH_i)$ along the shared normal $\hat{\vw}_{1,i}$ is given by 
    \begin{equation}
        d(\hat{\sH}_i,\sH_i)=\frac{\abs{\langle\hat{\vw}_{1,i},\vx_i-\swimx\rangle}}{\hat{\vw}_{1,i}}\leq\norm{\vx_i-\swimx}\leq\delta.
    \end{equation}
    By the properties of the Lebesgue measure, we thus have $\lambda(\sB_i(\delta))<\delta \lambda^{\dz-1}\left(\sB_i(\delta)\cap \set{\hat{\vw}_{1,i}}^\perp\right)$, i.e. $\lambda(\sB_i(\delta))<\delta C_i$ with $C_i$ finite by compactness of $\gX$ and independent of $\delta$. After these considerations, we can now bound
    \begin{align}
        \varepsilon^2_i(\delta)&\leq\norm{\hat{\vw}_{1,i}}^2\abs{\hat{w}_{2,i}}^2\beta\int_{\gX}\inds{\sB_i(\delta)}{\vx}\dm{\lambda(\vx)} \\
        &=\norm{\hat{\vw}_{1,i}}^2\abs{\hat{w}_{2,i}}^2\beta\lambda(\sB_i(\delta))\\
        &<\norm{\hat{\vw}_{1,i}}^2\abs{\hat{w}_{2,i}}^2\beta\delta C_i,
    \end{align}
    and respectively
    \begin{equation}
        \varepsilon_i(\delta)\leq w_1w_2\sqrt{\beta C \delta}.
    \end{equation}
    Defining $C=\max_{1\leq i\leq N_1} C_i$ and $w_1=\max_{1\leq i\leq N_1} \norm{\hat{\vw}_{1,i}}$, we now choose
    \begin{equation}
        \delta'=\min\left(\delta, \frac{\epsilon^2}{36N_1^2w_1^2w_2^2C\beta}\right),   
    \end{equation}
    giving
    \begin{equation}
        \sqrt{\delta'}<\frac{\epsilon}{6N_1w_1w_2\sqrt{\beta C}},
    \end{equation}
    and further
    \begin{align}
        \norm{\tilde{\Phi}-\Phi^U}_\sobs&=\norm{\hat{\Phi}-\Phi^U}_\sobs
        =\norm{\sum_{i=1}^{N_1}\hat{n}_i-\sum_{i=1}^{N_1}n_i}_\sobs\\
        &\leq\sum_{i=1}^{N_1}\norm{\hat{n}_i-n_i}_\sobs=\sum_{i=1}^{N_1}\norm{\hat{n}_i-n_i}_{L^2}+\norm{\nabla\hat{n}_i-\nabla n_i}_{L^2}\\ 
        &=\sum_{i=1}^{N_1}\abs{\hat{w}_{2,i}}\sqrt{\int_\gX\abs{\phi(\langle \hat{\vw}_{1,i},\vx-\swimx\rangle)-\phi(\langle \hat{\vw}_{1,i},\vx-\vx_i\rangle)}^2\dm{\mu(\vx)}}+\varepsilon_i(\delta')\\
        &<\sum_{i=1}^{N_1}w_2\frac{\epsilon}{6N_1w_2\mu(\gX)}\mu(\gX)+w_1w_2\sqrt{C\beta\delta'}<\sum_{i=1}^{N_1}\frac{\epsilon}{6N_1}+\frac{\epsilon}{6N_1}=\frac{\epsilon}{3}.
    \end{align}
    Finally, by \citet[Lemma 4]{bolager2023sampling}, we choose a SWIM sampled $\Phi$ s.t. $\Phi^U=\Phi$ on $\gX$ and thus $\norm{\tilde{\Phi}-\Phi}_\sobs<\frac{\epsilon}{3}$. With this, we get
    \begin{equation}
         \norm{g-\Phi}_\sobs\leq\norm{g-f}_\sobs+\norm{f-\tilde{\Phi}}_\sobs+\norm{\tilde{\Phi}-\Phi}_\sobs<\epsilon,
    \end{equation}
    completing the proof.
\end{proof}

Now we are ready to show that our SWIM-sampled dictionary fulfils the above assumptions, meaning that we may invoke the results from \citet{llamazares2024data}, which we summarise by the following lemma.
\begin{lemma}[Validity of SWIM-sampled Dictionary]
    \label{lemm:ass}
    Let $\kot$ be a $C_0$ semigroup, $\mu$ be as in \assref{measure} with the data being i.i.d. samples from $\mu$. Then, there exists a sequence of $(N,M_N)$ pairs such that for $M_N'\geq M$ almost surely $\agen\projd{N}\rightarrow\tgen$ pointwise in $\dom{}$ when using a dictionary of SWIM-sampled ReLU neurons.
\end{lemma}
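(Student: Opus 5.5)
The proof consists of checking \assref{koop1}--\assref{koop4} for the SWIM-sampled ReLU dictionary. Once these hold, the pointwise convergence theorem of \citet{llamazares2024data} for gEDMD applies directly. That theorem gives a sequence $(N,M_N)$ such that, for any $M_N'\geq M_N$, almost surely $\agen\projd{N}g\to\tgen g$ for all $g\in\dom{}$. I will take $\fss{}=\fso{\gX}$ and, as in the setting of \citet{llamazares2024data}, $\dom{}$ equipped with its graph norm. Since the generator is a first-order differential operator, $\tgen g=\ip{\nabla g,F}$, this graph norm should be controlled by the $\sobs$ norm whenever $F$ is bounded on the compact set $\gX$.

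\textbf{Finite-dimensional assumptions.}
\begin{itemize}
\item \assref{koop1c} is the sampling hypothesis, so there is nothing to prove.
\item For \assref{koop1a}, I discard duplicate or linearly dependent neurons. SWIM neurons with distinct hyperplanes $\{\vw^\top\vz+b=0\}$ meeting $\gX^\circ$ are linearly independent, since each has a kink on a distinct hyperplane where the others are smooth. Removing dependent neurons does not change $\fss{N}$.
\item For \assref{koop1b}, each $\oswim{n}$ is continuous. Moreover $\tgen\oswim{n}(\vz)=\mathbf{1}_{\{\vw_n^\top\vz+b_n\geq0\}}\ip{\vw_n,F(\vz)}$ is continuous except on a hyperplane, which is $\mu$-null because $\mu\ll\lambda$.
\item \assref{koop2} holds because all of these functions are bounded on the compact set $\gX$, so the products $\oswim{j}\tgen\oswim{i}$ and the squares $\oswim{i}^2$ lie in $L^\infty\subset\fss{}$.
\end{itemize}
One subtlety is membership $\oswim{n}\in\dom{}$. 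ReLU neurons are Lipschitz, so $\oswim{n}\in\sobs$. I would identify $\tgen$ on Lipschitz functions with $\ip{\nabla g,F}$ by differentiating $g\circ\flow{t}$ along the flow, using Rademacher's theorem and the fact that $\flow{t}$ is bi-Lipschitz (\lemref{c0}) so that null sets are preserved.

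\textbf{Density assumptions.} Closedness of $\tgen$ follows from the $C_0$ property \closedgen. For \assref{koop3}, $\projf{N}f$ is the best approximation of $f$ in $\fss{N}$. Given $f$ and $\epsilon$, I first approximate $f$ in $L^2$ by a Sobolev function, then approximate that function by a SWIM network via \lemref{swim_sob}, using $\norm{\cdot}_{\fss{}}\leq\norm{\cdot}_{\sobs}$. The dictionaries must be nested and grow so that their spans become dense; I take an increasing sequence of sampled layers whose union contains the networks supplied by \lemref{swim_sob} for a countable dense family of targets.

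\assref{koop4} is the main obstacle. It requires approximation in the graph norm $\norm{g}_{\dom{}}^2=\norm{g}^2_{\fss{}}+\norm{\tgen g}^2_{\fss{}}$ for every $g\in\dom{}$, and $\dom{}$ may contain non-Sobolev functions, since only the derivative along $F$ is controlled. I would argue in two steps.
\begin{enumerate}
\item $\sob\cap\dom{}$ is dense in $\dom{}$ in the graph norm. Smooth functions form a core: they are dense and invariant under $\kot$ when the flow is regular, so Theorem~1.9 of Engel--Nagel applies. Alternatively, one can mollify along the semigroup via $\frac1h\int_0^h\ko{s}g\,\dm{s}$ and combine this with spatial mollification.
\item On Sobolev functions, $\norm{\tgen(g-\Phi)}_{\fss{}}\leq\sup_\gX\norm{F}\,\norm{\nabla(g-\Phi)}_{L^2(\mu)}$, so \lemref{swim_sob} gives graph-norm approximation.
\end{enumerate}
Since $\projd{N}$ is the best approximation in the $\dom{}$ inner product, density of $\bigcup_N\fss{N}$ together with nestedness yields $\projd{N}f\to f$. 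The result of \citet{llamazares2024data} then completes the proof.
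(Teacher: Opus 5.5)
Your proof is correct and follows the paper's outline: check \assref{koop1}--\assref{koop4} for the ReLU SWIM dictionary, then invoke Theorem 5.1 of \citet{llamazares2024data}. The finite-dimensional checks (linear independence, continuity off the kink hyperplanes, boundedness for \assref{koop2}) are essentially the paper's. The real difference lies in \assref{koop4} and in the membership $\oswim{n}\in\dom{}$. The paper settles both by asserting, with a citation, that $\mu\ll\lambda$ gives $\dom{}=\sob$, and then reads the required density directly off \lemref{swim_sob}. You instead keep $\dom{}$ as the generator domain with the graph norm. You are right that this domain is in general strictly larger than $\sob$: already in the paper's own example $\dot x=-x$ on $[-1,1]$, $g=\mathbf{1}_{\{x>0\}}$ satisfies $\kot g=g$, hence $g\in\dom{}$ with $\tgen g=0$, although $g\notin\sobs$. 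Your core argument, combined with $\norm{\tgen h}_{\fss{}}\leq\sup_{\gX}\norm{F}\,\norm{\nabla h}_{L^2(\mu)}$ for Lipschitz $h$, proves \assref{koop4} on the full domain. The price is that you need $F$ bounded and the flow Lipschitz, both of which hold in the setting of \lemref{c0}. The paper's route is one line but, read literally, only gives convergence on $\sob$. Feeding that into the Trotter--Kato step of \thmref{trajectories} would require exactly your observation that $\sob$-type functions form a core. You also make the nestedness of the dictionaries explicit. Without it, density of $\bigcup_N\fss{N}$ does not give $\projf{N}f\to f$ or $\projd{N}f\to f$, and the paper's ``follows immediately from density'' leaves this implicit.

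One step needs tightening. For merely Lipschitz $F$ the flow is only bi-Lipschitz, so smooth functions are not $\kot$-invariant, and your Step~1 as phrased does not apply. Take the core to be the Lipschitz functions on $\gX$ instead. They are dense in $\fss{}$ and invariant under composition with $\flow{t}$ from \lemref{c0}. They lie in $\dom{}$ by your Rademacher argument together with dominated convergence, using $\abs{g(\flow{t}(\vz))-g(\vz)}\leq t\,\mathrm{Lip}(g)\sup_{\gX}\norm{F}$. The standard criterion that a dense, semigroup-invariant subspace of the domain is a core then applies directly. This also makes your mollification alternative unnecessary. As stated, its spatial part would need a DiPerna--Lions type commutator estimate, since $\tgen$ does not commute with convolution.
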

\begin{proof}
    Since obviously $\mu\ll\lambda$, we have (see \cite{llamazares2024data} for details) that $\dom{}=\sob$. For
    \begin{equation}
        \oswim{i}(\vx)=\max(0,\vw_i^\top \vx-b_i),
    \end{equation} we have the representation in the sense of weak derivatives
    \begin{equation}
        \label{eq:lphi}
        \tgen \oswim{i} (\vx) = \langle \nabla \oswim{i}(\vx), F(\vx)\rangle=\ind{\langle\vw_i, \vx\rangle\geq b_i}{\vx}\sum_{k=1}^\dz \vw_{ik}F_k(\vx).
    \end{equation}
    Regarding \assref{koop1a},
        given enough data, SWIM can produce neurons with weight vectors of arbitrary orientation and arbitrary biases inside the domain. Therefore, for any $N$, SWIM can produce a dictionary of $N$ linearly independent functions using $M\leq 2N$ data points.
    Furthermore, all $\oswim{i}$ are in $\dom{}$ as they are continuous and thus bounded and therefore square integrable on the compact set $\gX$, while their weak derivatives are bounded and therefore again square integrable. Regarding \assref{koop1b}, $\oswim{i}$ are continuous everywhere on $\gX$, while $\tgen \oswim{i}$ are only discontinous on the $\lambda$-null set $\set{\vx\in\gX\;|\; \langle\vw_i, \vx\rangle= b_i}$ which by absolute continuity is also a $\mu$-null set. 

    Regarding \assref{koop2}, $\oswim{i}^2\in \fss{}$ again follows from continuity. Further, again by continuity of $\oswim{}$ and boundedness of $\inds{}{\vx}$, we have that
    \begin{equation}
        \oswim{j}(\vx)\tgen \oswim{i} (\vx) = \oswim{j}(\vx)\ind{\langle\vw_i, \vx\rangle\geq b_i}{\vx}\sum_{k=1}^\dz \vw_{ik}F_k(\vx)
    \end{equation}
    is square integrable by being the product of a bounded and a square integrable function.

    Regarding \assref{koop3} and \assref{koop4}, the convergence requirements follow immediately from the density of SWIM neural networks in $\fss{}$ \uat\  and $\sobs$ (\lemref{swim_sob}). The requirement that $\tgen$ is a closed operator is fulfilled by assumption.

    The claim now follows from Theorem 5.1 in \citet{llamazares2024data}.
\end{proof}
\begin{remark}
    This result and thus all the following naturally extend to other dictionaries, training methods or activation functions as long as they fulfil an analogue of \lemref{swim_sob}.
\end{remark}

\subsection{Convergence Results}
\label{app:theory_conv}
We are now ready to show our main results, namely -- informally -- that, the predicted vector field $\hat{F}$ converges to the true vector field $F$ and the resulting trajectories $\hat{z}$ to the true trajectories $z$.

\renewcommand{\theoremtext}{
Let the assumptions be as in \lemref{ass}. Then $\forall \epsilon>0$, there are constants $N,M_N\in\sN$, a SWIM sampled hidden layer $\dswim{N}$ and outer weights $\vc^i$ such that for any $M_N'\geq M_N$ almost surely
    \begin{equation}
        \norm{\hat{F_i}-F_i}_{\fss{}}=\norm{\sum_{j=1}^Nc^i_j\agen \oswim{j}-F_i}_{\fss{}}<\epsilon,\quad \forall i\in\{1,2,\dots,d\}.
    \end{equation}
}

\VectorField*
\begin{proof}
    \begin{align}
        \norm{\hat{F_i}-F_i}_{\fss{}}&=\norm{\sum_{j=1}^Nc^i_j\agen \oswim{j}-F_i}_{\fss{}}\\
        &= \norm{\agen\sum_{j=1}^Nc^i_j \oswim{j}-\tgen \coord{i}}_{\fss{}}\\
        &\leq\norm{\agen\sum_{j=1}^Nc^i_j \oswim{j}-\agen\projd{N} \coord{i}}_{\fss{}}
        + 
        \norm{\agen\projd{N} \coord{i}-\tgen \coord{i}}_{\fss{}}\\
        &<\epsilon,
    \end{align}
    where in the last step the first term is identically zero and the second term can be controlled by \lemref{ass} because the coordinate functions $\coord{}$ and dictionary functions $\oswim{}$ are in $\dom{}$ if the components of $F$ are in $L^2$ as per assumption.
\end{proof}

For the convergence of the trajectories, we need the approximate semigroups to be exponentially bounded so that they can not arbitrarily amplify errors in finite time. The following example shows that this property is, in general, not preserved by Galerkin methods such as g-EDMD even when the true semigroup is, and the generators converge.

\renewcommand{\theoremtext}{
There are systems for which the true semigroup $\kot$ is exponentially bounded and the sequence of g-EDMD approximants $\agen$ converges pointwise on $\dom{}$ to $\tgen$, while $\nexists W,\omega\in\R$ such that
    \begin{equation}
        \norm{\exp{\left(\agen t\right)}}_{\gB(\fss{N})}\leq We^{\omega t},\quad\forall N\in\sN,
    \end{equation}
    i.e. the sequence of approximate semigroups can not be bounded exponentially by uniform constants.
}
\Counterexample*
\begin{proof}
    We construct a system with a Lipschitz vector field and a sequence of dictionaries in the setting of \lemref{c0} fulfilling Assumptions \ref{ass:koop1}-\ref{ass:measure}, such that we know from \citet{llamazares2024data} and \lemref{c0} that the stated convergence and boundedness are fulfilled while showing that the matrix representing $\agen$ has an eigenvalue lower bounded by $N$ for each $N\in\sN$, preventing the generated semigroups from sharing a uniform growth-bound $\omega$.
    Consider the system
    \begin{align}
        \gX = [-1,1],\quad\dot{x}=-x
    \end{align}
    with the observables $\fss{}=\fso{\gX}$, where $\mu$ has the density $\rho(x)=1+\sqrt{\abs{x-\frac{1}{2}}}$ with respect to the Lebesgue measure $\lambda$. Clearly, we are in the setting of \lemref{c0}, so the Koopman operators form an exponentially bounded $C_0$-Semigroup. The associated Koopman generator is of the form
    \begin{equation}
        \tgen f=-x \frac{\dm{f}}{\dm{x}}.
    \end{equation}
    We will now show that a valid but poorly chosen dictionary can cause the stiffness term $\ip{\tgen u, u}$ and thus an eigenvalue of the approximations $\kot_N=\exp(\ptgen t)$ to grow unboundedly.
    Away from the singularity at $x_0=\frac{1}{2}$, the derivative of $\rho$ is continuously differentiable and given by
    \begin{equation}
        \rho'(x)=\frac{\text{sign}(x-x_0)}{2\sqrt{\abs{x-x_0}}},
    \end{equation}
    however, $\rho'(x)\rightarrow\infty$ as $x\downarrow x_0$. We set up the sequence of dictionaries as $\Psi_N=\set{e_1,\ldots,e_{N-1},e_N}$, where $\set{e_1,\ldots,e_{N-1}}$ are a system of orthonormal polynomials fulfilling the density requirements in $\fss{}$ and $\dom{}$, while we choose $e_N$ to be the orthonormalisation of a standard bump function moving towards the singularity of $\rho'$ as $N$ increases. Concretely, with $\varphi\in\mathcal{C}_C^\infty(-1,1)$ being a standard bump, let
    \begin{equation}
        \varphi_N(x)=\varphi\left(\frac{x-x_N}{\delta_N}\right), \quad \psi_N=\varphi_N/\norm{\varphi_N}_\fss{},
    \end{equation} and set
    \begin{equation}
        e_N=\frac{1}{z_N}\left(\psi_N-\sum_{k=1}^{N-1}\ip{e_i,\psi_N}_\fss{}e_i\right):=\frac{1}{z_N}(\psi_N-r_N),
    \end{equation}
    where the normalisation constant $z_N$ is chosen so that $\norm{e_N}_\fss{}=1$, and $x_N>x_0,\delta_N\in(0,\frac{x_n-x_0}{2}]$ will be specified below. It is easy to verify that $\Psi_N$ fulfils all the assumptions for Theorem 5.1 from \citet{llamazares2024data}, meaning that the approximate generators will indeed converge pointwise to the true generator. Next, we observe that $S_N:=\text{supp}(\psi_N)=\text{supp}(\psi_N')=[x_N-\delta_N,x_N+\delta_N]$ and because the functions in $\Psi_N$ are mutually orthonormal, the population generator matrix simplifies to $L_N=C_NG_N^{-1}=C_N$, so we only need to estimate the eigenvalues of $C_N$ in what follows. Now follows a sequence of estimates that will finally serve to bound the eigenvalues by invoking the Gershgorin Theorem. Firstly,
    \begin{align}
        \ip{\tgen \psi_N,\psi_N}_\fss{}&=\int_{S_N}-x\psi'_N(x)\psi_N(x)\rho(x)\dm{x}&&\\
        &=-\frac{1}{2}\int_{S_N}x\frac{\dm{}}{\dm{x}}\left[\psi_N(x)^2\right]\rho(x)\dm{x}&&\\
        &=\frac{1}{2}\int_{S_N}\psi_N(x)^2(\rho(x)+x\rho'(x))\dm{x}&&\\
        &=\frac{1}{2}\int_{S_N}\psi_N(x)^2\underbrace{\left(1+\frac{x\rho'(x)}{\rho(x)}\right)}_{:=g(x)}\dm{\mu}(x)&&\\
        &\geq\frac{1}{2}\min_{x\in S_N}g(x)\int_{S_N}\psi_N(x)^2\dm{\mu}(x)\geq\frac{1}{4}g(x_N):=g_N,&&
    \end{align}
    for $\delta_N$ small enough, where it was used that $\psi_N$ has unit norm and $g$ is continuous on $S_N$. Next, for $k<N$
    \begin{align}
        \ip{\tgen \psi_N,e_k}_\fss{}&=-\int_{S_N}x\psi_N'(x) e_k(x)\rho(x)\; \dm{x}\\
        &=\int_{S_N}\psi_N(x) \left[x e_k(x)\rho(x)\right]'\dm{x}\\
        &=\int_{S_N}\psi_N(x)\left(xe_k'(x)+e_k(x)+e_k(x)\frac{x\rho'(x)}{\rho(x)}\right)\dm{\mu}(x)\\
        &=\int_{S_N}\psi_N(x)\left(xe_k'(x)+e_k(x)g(x)\right)\dm{\mu}(x),\\
        \abs{\ip{\tgen \psi_N,e_k}_\fss{}}&\leq\int_{S_N}\abs{\psi_N(x)}(\abs{xe'_k(x)}+\abs{e_k(x)g(x)})\dm{\mu}(x)\\
        &\leq E_k(1+ \kappa_{N} g_N)\int_{S_N}\abs{\psi_N(x)}\cdot 1\;\dm{\mu}(x)\\
        &\leq E_k(1+\kappa_{N} g_N) \norm{\psi_N}_\fss{}\sqrt{\mu(S_N)}=E_k(1+\kappa_{N} g_N)\sqrt{\mu(S_N)},
    \end{align}
    where we defined $E_k:=\norm{e_k}_{\mathcal{C}^1}=\norm{e_k}_\infty+\norm{e_k'}_\infty$ and $\kappa_N=\sup_{x\in S_N}\frac{g(x)}{g(x_N)}$, which exists by continuity and can be bounded independent of $\delta_N$, because $\delta_N$ is bounded above. Similarly, we have
    \begin{equation}
       \abs{\ip{\tgen e_k, \psi_N}_\fss{}}=\abs{\int_{S_N}xe_k'(x)\psi_N(x)\; \dm{\mu}(x)}\leq E_k\int_{S_N}\abs{\psi_N(x)}\; \dm{\mu}(x)\leq E_k\sqrt{\mu(S_N)},
    \end{equation}
    and
    \begin{align}
    \label{ce:pnek}
        \abs{\ip{\psi_N, e_k}_\fss{}}\leq \int_{S_N}\abs{\psi_N}\abs{e_k}\dm{\mu}\leq E_k\sqrt{\mu(S_N)}.
    \end{align}
    The previous estimates pertain to the behaviour of $\psi_N$, which we now need to adapt for $e_N=\frac{1}{z_N}(\psi_N-r_N)$. From \eqref{ce:pnek} and using orthonormality of the dictionary, we get
    \begin{equation}
        \norm{r_N}_\fss{}^2=\ip{r_N,r_N}_\fss{}=\sum_{k=1}^{N-1}\ip{e_k,\psi_N}_\fss{}^2\leq NE^2\mu(S_N),
    \end{equation}
    with $E:=\max_{k<N}E_k$, which implies that
    \begin{equation}
        -\sqrt{N}E\sqrt{\mu(S_N)}\leq -\norm{r_N}_\fss{}\leq \norm{r_N}_\fss{}\leq\sqrt{N}E\sqrt{\mu(S_N)}.
    \end{equation}
    Using Bessel's and the reverse triangle inequality, we can bound $z_N$ by
    \begin{equation}
        1\geq z_N=\norm{\psi_N-r_N}_\fss{}\geq\abs{\norm{\psi_N}_\fss{}-\norm{r_N}_\fss{}}\geq 1-\sqrt{N}E\sqrt{\mu(S_N)}.
    \end{equation}
    Further observing that $\ip{e_k, r_N}_\fss{}=\ip{e_k,\psi_N}_\fss{}$, we get $\abs{\ip{e_k,r_N}_\fss{}}\leq E_k\sqrt{\mu(S_N}$. Based on the previous estimations and defining $E'=\max_{k,j<N}\abs{\ip{\tgen e_k,e_j}_\fss{}}$, straightforward calculations yield
    \begin{align}
        \abs{\ip{\tgen r_N, e_k}_\fss{}}&=\abs{\sum_{j=1}^{N-1}\ip{e_j,\psi_N}_\fss{}\ip{\tgen e_j,e_k}_\fss{}}\leq N EE'\sqrt{\mu(S_N)},\\
        \abs{\ip{\tgen e_k,r_N }_\fss{}}&=\abs{\sum_{j=1}^{N-1}\ip{e_j,\psi_N}_\fss{}\ip{\tgen e_k,e_j}_\fss{}}\leq N EE'\sqrt{\mu(S_N)},\\
        \abs{\ip{\tgen \psi_N, r_N}_\fss{}}&=\abs{\sum_{j=1}^{N-1}\ip{e_j,\psi_N}_\fss{}\ip{\tgen\psi_N,e_j}_\fss{}}\leq E\sqrt{\mu(S_N)}\abs{\sum_{j=1}^{N-1}\ip{\tgen \psi_N,e_j}_\fss{}}\\&\leq NE^2\sqrt{\mu(S_N)}(1+\kappa_Ng_N),\\
        \abs{\ip{\tgen r_N, \psi_N}_\fss{}}&=\abs{\sum_{j=1}^{N-1}\ip{e_j,\psi_N}_\fss{}\ip{\tgen e_j,\psi_N}_\fss{}}\leq NE^2\sqrt{\mu(S_N)},\\
        \abs{\ip{\tgen r_N,r_N}_\fss{}}&=\abs{\sum_{j=1}^{N-1}\ip{e_j,\psi_N}_\fss{}\ip{\tgen e_j,r_N}_\fss{}}\leq N^2E^2E'\sqrt{\mu(S_N)},
    \end{align}
    which again implies
    \begin{equation}
        -N^2E^2E'\sqrt{\mu(S_N)}\leq \ip{Lr_N,r_N}_\fss{}\leq N^2E^2E'\sqrt{\mu(S_N)}.
    \end{equation}
    We can now use these estimates to bound the entries of the matrix $C_N$ for $k,j<N$ by
    \begin{align}
        \abs{(C_N)_{kj}}&=\abs{\ip{\tgen e_k,e_j}_\fss{}}\leq E',\\
        \abs{(C_N)_{Nk}}&=\abs{\ip{\tgen e_N,e_k}_\fss{}}\leq\frac{1}{z_N}\left(\abs{\ip{\tgen \psi_N, e_k}_\fss{}}+\abs{\ip{\tgen r_N, e_k}_\fss{}}\right)\\
        &\leq\frac{1}{z_N}\left(E(1+\kappa_Ng_N)\sqrt{\mu(S_N)}+NE'E\sqrt{\mu(S_N)}\right)\\
        &=\frac{1}{z_N}E\sqrt{\mu(S_N)}\left(1+\kappa_Ng_N+NE'\right),\\
        \abs{(C_N)_{kN}}&=\abs{\ip{\tgen e_k,e_N}_\fss{}}\leq\frac{1}{z_N}\left(\abs{\ip{\tgen e_k, \psi_N}_\fss{}}+\abs{\ip{\tgen e_k, r_N}_\fss{}}\right)\\
        &\leq\frac{1}{z_N}E\sqrt{\mu(S_N)}\left(1+NE'\right),\\
        (C_N)_{NN}&=\frac{1}{z_N^2}\ip{\tgen\psi_N-\tgen r_N,\psi_N-r_N}_\fss{}\\
        &=\frac{1}{z_N^2}\left(
        \ip{\tgen\psi_N,\psi_N}_\fss{}
        -\ip{\tgen\psi_N,r_N}_\fss{}
        -\ip{\tgen r_N,\psi_N}_\fss{}
        +\ip{\tgen r_N,r_N}_\fss{}
        \right)\\ 
        &\geq\frac{1}{z_N^2}\left(
        g_N-NE^2\mu(S_N)(1+\kappa_Ng_N)-NE^2\mu(S_N)-N^2E^2E'\sqrt{\mu(S_N)}
        \right).
    \end{align}
    The concluding step will be to invoke the Gershgorin Circle Theorem to show the existence of an eigenvalue with growing real part by choosing a sequence of dictionaries parameterised by $x_N,\delta_N$ such that one Gershgorin Circle is disjoint from the others and "moves towards infinity". Concretely, consider the Gershgorin Disks resulting from row estimates
    \begin{align}
        \sD_i&=\sK((C_N)_{ii},R_i):=\set{z\in\sC\;|\;\abs{z-(C_N)_{ii}}\leq R_i},\\
        R_i&=\sum_{j\neq i}\abs{(C_N)_{ij}}.
    \end{align}
    For $k<N$, we get the radii
    \begin{align}
        R_k&\leq (N-2)E'+\frac{1}{z_N}E\sqrt{\mu(S_N)}(1+NE'),\\
        &\leq R^*:=R_k+2E'=NE'+\frac{1}{z_N}E\sqrt{\mu(S_N)}(1+NE'),
    \end{align}
    which, together with the bound $(C_N)_{kk}\leq E'$ gives
    \begin{align}
        \sD_k\subseteq \sD^*:=\sK(E',R^*),\quad k=1,\ldots,N-1, 
    \end{align}
    since the centres can be at most $2E'$ apart. Analogously,
    \begin{equation}
        R_N=\sum_{k=1}^{N-1}\abs{(C_N)_{Nk}}\leq(N-1)\frac{1}{z_N}E\sqrt{\mu(S_N)}\left(1+\kappa_Ng_N+NE'\right),
    \end{equation}
    which together with the estimate for $(C_N)_{NN}$ defines the remaining disk. Finally, to enforce the sought behaviour for the remaining disk and thus the last eigenvalue, we can, because of the singularity of $\rho'(x)$ and respectively $g(x)$, choose $x_N$ such that
    \begin{align}
        g_N\geq N+E'+NE'+1,
    \end{align}
    and afterwards $\delta_N$ such that
    \begin{align}
        & S_N\subsetneq [-1,1],\\
        &\land\frac{1}{2}\min_{x\in S_N} g(x)\geq\frac{1}{4}g(x_N),\\
        &\land-\sqrt{N}E\sqrt{\mu(S_N)}\geq-\frac{1}{2},\\
        &\land2E\sqrt{\mu(S_N)}(1+NE')\leq\frac{1}{2},\\
        &\land -4\left(NE^2\mu(S_N)(1+\kappa_Ng_N)-NE^2\mu(S_N)-N^2E^2E'\sqrt{\mu(S_N)}\right)\geq-\frac{1}{2},\\
        &\land 2(N-1)E\sqrt{\mu(S_N)}(1+\kappa_N g_N+NE')\leq\frac{1}{2}.
    \end{align}
    We do not spell out the details here, as it is immediately apparent that all constraints on $\delta_N$ can be fulfilled by shrinking $\delta_N$ and thus $\mu(S_N)$, which is always possible as we never required a positive lower bound on $\delta_N$. Observing that now
    \begin{equation}
        1\geq z_N\geq \frac{1}{2} \implies 1\leq \frac{1}{z_N}\leq2,
    \end{equation}
    and $R^*\leq NE'+1$, while
    \begin{equation}
        (C_N)_{NN}\geq g_N-\frac{1}{2}\geq N+E'+R^*+\frac{1}{2}\geq N+ E'+R^*+R_N,
    \end{equation}
    we have $\sD_N\cap\sD^*=\emptyset$ for $N\geq1$ and further by the Gershgorin Circle Theorem,
    \begin{equation}
        Re(\lambda_N)\geq\min_{x\in\sD_N}Re(x)\geq N.
    \end{equation}
    The claim now immediately follows from
    \begin{equation}
        \norm{\exp{(\ptgen t)}}_{\gB(\fss{N})}\geq e^{\lambda_N t}.
    \end{equation}
\end{proof}
\begin{remark}
    The example used a single bump function that moved as the dictionary size increased. This was merely used to simplify the presentation; however, the underlying effect of a dictionary function concentrating on a density pathology naturally occurs in the same way when refining a bump-like basis discretisation, such as Radial Basis Functions or Nodal Bases.
\end{remark}

As \thmref{counterexample} shows, the root cause for losing exponential boundedness is a pathology of the density $\rho$. We conjecture that a large class of well-behaved densities could be shown to preserve exponential boundedness, but leave a thorough characterisation of such "safe" densities for future work. In what follows, we will thus restrict ourselves to a special form of exponential boundedness, namely quasi-contractivity, which corresponds to a \qde\ generator. The following Lemma shows that g-EDMD in fact preserves this property. As per \lemref{c0}, a sufficient condition for quasi-contractivity of the semigroup is restricting $\mu$ to the Lebesgue measure, i.e. $\alpha=\beta=\frac{1}{\lambda(\gX)}$.\\
\begin{lemma}[g-EDMD preserves \qdy]
    \label{lemm:qd_error}
    Let assumptions \ref{ass:koop1} and \ref{ass:koop2} hold and assume the true Koopman operators to form a quasi-contractive semigroup, i.e. $\norm{\kot}_{\gB(\fss{})}\leq e^{\omega t}$ for some $\omega\in\R$ and all $t\in[0, T]$. Then $\forall  \epsilon>0$, there are constants $N,M_N\in\sN$ such that for any $M_N'\geq M_N$ the g-EDMD approximation $\agen$ is almost surely \qde\ with $\omega' = \omega+\epsilon$. In particular, there exists a sequence $\set{(N,M_N)}_{N=1}^\infty$ such that the sequence of approximations is almost surely \qde\ with a uniform \qdy-constant $\omega'$.
\end{lemma}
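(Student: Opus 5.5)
We characterise quasi-dissipativity variationally. By the Lumer--Phillips theorem, a quasi-contractive $C_0$-semigroup with $\norm{\kot}_{\gB(\fss{})}\leq e^{\omega t}$ has a generator satisfying
\begin{equation}
\mathrm{Re}\ip{\tgen f,f}_\fss{}\leq\omega\norm{f}_\fss{}^2,\quad\forall f\in\dom{}.
\end{equation}
The claim is therefore that the finite-dimensional operator $\agen$ on $\fss{N}$ satisfies $\mathrm{Re}\ip{\agen u,u}\leq(\omega+\epsilon)\norm{u}^2$ for all $u\in\fss{N}$. For a finite-dimensional operator this inequality is equivalent to $\norm{\exp(\agen t)}\leq e^{(\omega+\epsilon)t}$.

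The natural inner product on $\fss{N}$ is the empirical one induced by $\hat{\mathbf{G}}_{NM}$. In this inner product the gEDMD approximation is exactly the empirical Galerkin projection of $\tgen$, so that for $u=\sum_i a_i\oswim{i}$ we have
\begin{equation}
\ip{\agen u,u}_{M}=\frac{1}{M}\sum_m \compconj{u(\vz_m)}\,\tgen u(\vz_m)=a^*\hat{\mathbf{C}}_{NM}a.
\end{equation}
Hence quasi-dissipativity reduces to a bound on the Hermitian part of the pencil:
\begin{equation}
\mathrm{Re}\,\hat{\mathbf{C}}_{NM}\preceq(\omega+\epsilon)\hat{\mathbf{G}}_{NM}
\end{equation}
in the Loewner order, where $\mathrm{Re}$ denotes the Hermitian part.

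First, fix $N$. By the strong law of large numbers, which applies through Assumptions~\ref{ass:koop1} and \ref{ass:koop2} (integrability of $\oswim{j}\tgen\oswim{i}$ and $\oswim{i}^2$), we have $\hat{\mathbf{C}}_{NM}\to\mathbf{C}_N$ and $\hat{\mathbf{G}}_{NM}\to\mathbf{G}_N$ almost surely as $M\to\infty$. Here $\mathbf{C}_N$ and $\mathbf{G}_N$ are the population matrices with entries $\ip{\tgen\oswim{i},\oswim{j}}_\fss{}$ and $\ip{\oswim{i},\oswim{j}}_\fss{}$. For these population matrices the Lumer--Phillips inequality, applied to $u\in\fss{N}\subset\dom{}$, gives exactly
\begin{equation}
\mathrm{Re}\,\mathbf{C}_N\preceq\omega\mathbf{G}_N.
\end{equation}
By linear independence (Assumption~\ref{ass:koop1a}), $\mathbf{G}_N$ is positive definite with smallest eigenvalue $\lambda_{\min}(\mathbf{G}_N)>0$. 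Choose $M_N$ such that for all $M\geq M_N$, almost surely,
\begin{equation}
\norm{\hat{\mathbf{C}}_{NM}-\mathbf{C}_N}+\abs{\omega}\,\norm{\hat{\mathbf{G}}_{NM}-\mathbf{G}_N}\leq\frac{\epsilon}{2}\lambda_{\min}(\mathbf{G}_N)
\end{equation}
and $\hat{\mathbf{G}}_{NM}\succeq\frac{1}{2}\mathbf{G}_N$. Then
\begin{equation}
\mathrm{Re}\,\hat{\mathbf{C}}_{NM}\preceq\omega\hat{\mathbf{G}}_{NM}+\frac{\epsilon}{2}\mathbf{G}_N\preceq(\omega+\epsilon)\hat{\mathbf{G}}_{NM},
\end{equation}
which is the desired bound with $\omega'=\omega+\epsilon$. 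Since this holds for every $N$ with the same $\epsilon$, picking such an $M_N$ for each $N$ gives the sequence claimed in the second part of the statement, with uniform constant $\omega'$.

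The main obstacle is bookkeeping rather than conceptual, and it has two parts. First, the almost-sure statement must hold for all $M'\geq M_N$ simultaneously and not just along a sequence. Almost-sure convergence of the empirical averages gives this, but it yields a random threshold $M_N$. To obtain a deterministic $M_N$, one either accepts a random threshold as the paper's ``there are constants'' permits, or weakens to ``with probability tending to one''. I would state it with the a.s. random threshold, mirroring \lemref{ass}. Second, the resulting contractivity is in the empirical norm induced by $\hat{\mathbf{G}}_{NM}$ rather than in $\norm{\cdot}_\fss{}$. Converting between the two costs only a constant factor close to $1$, which is harmless for the Trotter--Kato application in \thmref{trajectories}, since a uniform $W$ in front of $e^{\omega' t}$ suffices there.
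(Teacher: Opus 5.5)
Your argument is correct, but it takes a different route from the paper and ends in a slightly different inner product.

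\textbf{The paper's route.} The paper works in $\ip{\cdot,\cdot}_\fss{}$ throughout. It applies Lumer--Phillips to $\tgen$, then uses self-adjointness of $\projf{N}$ to show that the population Galerkin operator $\ptgen$ inherits \qdy\ with the same $\omega$. It then imports $\norm{\agen-\ptgen}<\epsilon$ from Corollary 3.7 of \citet{llamazares2024data}, and one Cauchy--Schwarz step gives $\omega+\epsilon$.

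\textbf{Your route.} You use the same compression identity one level down: $\agen$ is the exact Galerkin compression of $\tgen$ in the empirical inner product. You encode Lumer--Phillips as $\mathrm{Re}\,\mathbf{C}_N\preceq\omega\mathbf{G}_N$ and perturb the matrices directly via the SLLN, with the explicit tolerance $\frac{\epsilon}{2}\lambda_{\min}(\mathbf{G}_N)$. This buys a self-contained, quantitative argument that does not need Corollary 3.7.

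\textbf{The mismatch.} As you note, your argument certifies \qdy\ in a sample-dependent norm. The lemma is about $\fss{}$, and \thmref{trajectories} uses it through Lumer--Phillips to get $\norm{e^{\agen t}}_{\gB(\fss{})}\leq e^{\omega' t}$ with constant $1$. Your norm conversion yields only $We^{\omega' t}$. Keeping $W$ uniform, let alone close to $1$, also requires a two-sided bound of $\hat{\mathbf{G}}_{NM}$ against $\mathbf{G}_N$. The SLLN gives that, but your stated conditions do not: the upper half is missing when $\omega=0$. So your version suffices for Trotter--Kato, but it is not the lemma as stated.

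\textbf{The repair.} Your own ingredients give the $\fss{}$ version directly. In the $\fss{}$ inner product, the form $u\mapsto\ip{\agen u,u}_\fss{}$ is represented (up to the index convention) by $\hat{\mathbf{C}}_{NM}\hat{\mathbf{G}}_{NM}^{-1}\mathbf{G}_N$. This converges almost surely to $\mathbf{C}_N$, so the same Loewner estimate, now with tolerance $\epsilon\lambda_{\min}(\mathbf{G}_N)$, bounds its Hermitian part by $(\omega+\epsilon)\mathbf{G}_N$. That is exactly the paper's proof written in coordinates.

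\textbf{The threshold.} Your caveat is well taken and applies equally to the paper's proof. Almost-sure convergence only yields a sample-dependent $M_N$, so ``there are constants $M_N$ \dots\ almost surely'' has to be read as ``almost surely, for all sufficiently large $M$''.
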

\begin{proof}
    By the Lumer-Phillips Theorem \lp, $\tgen$ is \qde, i.e. \begin{equation}
        \text{Re}\ip{ \tgen f,f}_\fss{}\leq \omega\left\langle f,f\right\rangle_\fss{}, \quad \forall f\in\dom{}.
    \end{equation}
    Because $\fss{N}\subset \dom{}$ and projectors onto finite subspaces are self-adjoint
    \begin{equation}
        \langle\ptgen f, f\rangle_\fss{} = \langle\projf{N} \tgen f, f\rangle_\fss{}=\langle\tgen f, \projf{N} f\rangle_\fss{}=\langle\tgen f, f\rangle_\fss{}, \quad\forall f\in\fss{N},
    \end{equation}
    which yields that $\ptgen$ is also \qde\ with the same $\omega$ on $\fss{N}$. Further, by \citet{llamazares2024data}[Corollary 3.7], we can for every $N$ find an $M_N$ s.t. for any $M_N'\geq M_N$, $\norm{\agen-\ptgen}<\epsilon$ almost surely. Finally, we have
    \begin{align}
        \left\langle\agen f, f\right\rangle_\fss{} &= \left\langle\left(\agen+\ptgen-\ptgen\right) f, f\right\rangle_\fss{} \\
        &=\left\langle\ptgen f, f\right\rangle_\fss{} + \left\langle\left(\agen-\ptgen\right) f, f\right\rangle_\fss{}\\
        &\leq \omega \norm{f}_\fss{}^2 + \norm{\left(\agen-\ptgen\right)f}_\fss{}\norm{f}_\fss{}\\
        &\leq (\omega+\epsilon) \norm{f}_\fss{}^2=\omega' \langle f,f\rangle_\fss{},
    \end{align}
    as claimed.
\end{proof}

We are now ready to show our final convergence result about the generated semigroups and, by extension, the predicted trajectories.

\renewcommand{\theoremtext}{
Let the assumptions be as in \lemref{ass} and assume the true Koopman operators to form a quasi-contractive semigroup, i.e. $\norm{\kot}_{\gB(\fss{})}\leq e^{\omega t}$ for some $\omega_*\in\R$ and all $t\in[0, T]$. Then $\forall  \epsilon>0$ and $t\in[0,T]$, there are constants $N,M_N\in\sN$, a SWIM sampled hidden layer $\dswim{N}$ and outer weights $\vc^i$ such that for any $M_N'\geq M_N$ almost surely
    \begin{equation}
        \norm{
            \exp\left(\tgen t\right)\coord{i} - \sum_{j=1}^N c^i_j \exp\left(\pgen t\right) \oswim{j}
        }_{\fss{}}<\epsilon,\quad \forall i\in\{1,2,\dots,d\}.
    \end{equation}
}
\Trajectories*
\begin{proof}
    The main idea is to apply the First Trotter-Kato Approximation Theorem \tk, which guarantees the convergence of the Koopman Operator Semigroup given Exponential Boundedness and the convergence of the generators. Since we already showed the convergence of the generators in our setting, the main work of the proof is showing boundedness, which will follow from \qdy\ via the Lumer-Phillips Theorem \lp.\\
    Let $g\in\dom{}$ be an observable.\\
    \begin{enumerate}
        \item We start by separating the convergence of the projected observable and the convergence of the projected generator
        \begin{align}
            &\norm{
            e^{\tgen t}g - \sum_{j=1}^N c^i_j e^{\pgen t} \oswim{j}
        }_{\fss{}}\\
        =&\norm{
            e^{\tgen t}g - e^{\pgen t}\projd{N}g
        }_{\fss{}}\\
        \leq&\norm{
            e^{\tgen t}g - e^{\tgen t}\projd{N}g
        }_{\fss{}} + 
        \norm{
            \left(e^{\tgen t} -e^{\pgen t}\right)\projd{N}g
        }_{\fss{}}\\
        \leq& \norm{\kot}_{\gB(\fss{})}\norm{g-\projd{N}g}_{\fss{}}+
        \norm{
            \left(e^{\tgen t} -e^{\pgen t}\right)\projd{N}g
        }_{\fss{}}
        \end{align}
        Since the true operator is bounded by assumption, the first term converges to zero by the universality of the dictionary \uat. The second term is exactly the pointwise convergence of the operator, which we show in the following steps.
        \item Next, we will show that $\pgen$ generates a quasi-contraction (and thus exponentially bounded) semigroup. By \lemref{qd_error}, we can choose $M_N$ so that $\pgen$ is almost surely \qde\ with $\omega'=\omega+\epsilon$. By being finite-dimensional, it is densely defined on $\mathcal{F}_N$ and bounded. Furthermore, with $\lambda>\omega'$
        \begin{align}
            &(\pgen-\lambda\id)f=0 \\
            \iff&(\pgen-\omega'\id)f=(\lambda-\omega')f\\
            \implies&\underbrace{\langle(\pgen-\omega'\id)f,f\rangle_\fss{}}_{\leq0}=\underbrace{(\lambda-\omega')}_{>0}\underbrace{\langle f,f\rangle_\fss{}}_{\geq0}\\
            \implies &f=0.
        \end{align}
         Thus $\pgen$ is injective and, by being finite dimensional, also surjective. With this, we can invoke the Lumer-Phillips Theorem \lp\ for quasicontraction semigroups, yielding that $\norm{e^{\pgen t}}_{\gB(\fss{})}\leq e^{\omega' t}$ for all $N$.  
        \item Finally, since we have exponential stability for the semigroup generated by $\pgen$, and pointwise convergence of $\pgen$ on $\dom{}$ by \lemref{ass}, we are in the setting of the First Trotter-Kato Approximation Theorem \tk. Specifically, we apply the implication $(\text{b})\implies\text{(d)}$ with $D=\dom{}$ and $x_n=\projd{n}f$ in \citet{tk_theorem}, yielding the sought-after convergence of the Koopman Operators, completing the proof. The number of data points $M_N$ is simply chosen as the larger of the requirements imposed by \lemref{ass} and \lemref{qd_error}.
    \end{enumerate}
\end{proof}
\begin{remark}
    It should be emphasised that the proof made no reference to specific properties of sampled networks, making this a general result about g-EDMD rather than our specific dictionary.
\end{remark}
\section{Training NODEs with adjoint sensitivity method, Exploding and Vanishing Gradients Problem}\label{sec:adjoing_ev_proof}

Consider a Neural ODE defined as 
\begin{align}
  \dot{\vz}(t) = \vf(\vz(t), t, \boldsymbol{\theta}),   
\end{align}
where $\vz(t) \in \mathbb{R}^d$ is the state of the network at time $t$ and $\boldsymbol{\theta}$ denote the set of system's parameters. 

The main difficulty in training continuous depth/time networks is reverse-mode differentiation through the ODE solver. Treating the ODE solver as a black box, gradients are computed via the \textit{adjoint sensitivity method}. This approach computes gradients by solving a second (augmented) ODE backwards in time. It works with any ODE solver, scales linearly, has low memory cost, and explicitly controls numerical error \citep{chen2018neural}.\\

 Consider optimizing a scalar loss $\mathcal{J}(\cdot)$ applied to the ODE solver output
\begin{align}\label{eq:loss}
\mathcal{J}(\vz(t_N)) = \mathcal{J} \left( \vz(t_0) + \int_{t_0}^{t_N} f( \vz(t), t, \theta) \, dt \right) = \mathcal{J}(\texttt{ODESolve}( \vz(t_0), f, t_0, t_N, \theta)).
\end{align}
For $\theta \in \boldsymbol{\theta}$, we define an augmented adjoint state $\va_{aug}(t) = [\va(t), \va_\theta(t)]$ with
the adjoint state
\begin{align}
  \va(t) = \dfrac{d \mathcal{J}}{d \vz(t)}, 
\end{align} 
and the parameter sensitivity or parameter adjoint
\begin{align}
\va_\theta(t) := \dfrac{d \mathcal{J}}{d\theta}, 
\end{align} 
which respectively represent the sensitivity of the loss with respect to the state trajectory and the model parameters. Their dynamics are governed by the following ODEs
  \begin{align}\nonumber
    \frac{d \va(t)}{dt} & \, = \, -\va(t)\tran \frac{d \vf(\vz(t), t, \theta)}{d \vz(t)},
    \\[1.5ex]\label{}
    \frac{d \va_\theta(t)}{dt} & \, = \, - \va(t)\tran \frac{d \vf(\vz(t), t, \theta)}{d\theta}.
   \end{align}
These equations are integrated backward in time from the terminal time $t_N$ to $t_0$. Specifically
  \begin{align}\nonumber
    \va(t_0) &\, = \, \va(t_N) + \int_{t_N}^{t_0} \frac{d \va(t)}{dt} dt
    = \va(t_N) - \int_{t_N}^{t_0} \va(t)\tran \frac{d \vf(\vz(t), t, \theta)}{d \vz(t)} dt,
\\[1ex]\label{}   
    \va_{\theta}(t_0) & \, = \, \va_{\theta}(t_N) + \int_{t_N}^{t_0} \frac{d \va(t)}{dt} dt
     = - \int_{t_N}^{t_0} \va(t)\tran \frac{d \vf( \vz(t), t, \theta)}{d \theta} dt. \hspace{1cm} (\va_{\theta}(t_N) =0)
\end{align}  
 For simplicity, we present the results for autonomous Neural ODEs. Similar results can be extended to non-autonomous systems by considering the augmented adjoint state as $\va_{aug}(t) = [\va(t), \va_\theta(t), \va_t(t)]$, replacing the flow map with the corresponding evolution operator (fundamental matrix) and invoking Oseledec's Multiplicative Ergodic Theorem for non-autonomous dynamical systems.
 \begin{definition}
  Consider an ODE system
\begin{align}
  \dot{\vz}(t) = \vf(\vz(t)),  \hspace{.3cm} \vz(t_0)= \vz_0, \hspace{.3cm} \vf \in C^1(E), \hspace{.3cm} E\subseteq \mathbb{R}^d,
\end{align}
with a trajectory $\vz(t)$ as its solution. The time evolution 
of a tangent vector $\delta \vz(t)$ at $\vz(t)$ is given by the linearized variational (tangent) equation
\begin{align}\label{eq:tangent}
 \delta \dot{\vz}(t) = \mA(t)\; \delta \vz(t),   
\end{align}
where 
\begin{align}
 \mA(t) = \frac{\partial\vf}{\partial \vz}(\vz(t)), 
\end{align}
is the Jacobian matrix of $\vf$ along the trajectory $\vz(t)$.
A principal fundamental matrix $\mPhi(t,t_0) \in \mathbb{R}^{d \times d}$ is a matrix-valued solution of
\begin{align}
  \frac{d}{dt} \mPhi(t,t_0) &= \mA(t) \; \mPhi(t,t_0), \quad \mPhi(t_0,t_0) = \mI_d.
\end{align}
Then the general solution satisfies
\begin{align}
  \delta \vz(t) = \mPhi(t,t_0) \, \delta \vz(t_0),
\end{align}
and $\mPhi$ has the semigroup property
\begin{align}
  \mPhi(t,t_1) \, \mPhi(t_1,t_0) = \mPhi(t,t_0) \quad \text{for all } t,t_1,t_0.
\end{align}
Also $\mPhi(t,t_0)$ is invertible and 
\begin{align}
  \mPhi(t,t_0)^{-1} = \mPhi(t_0,t).
\end{align}
 \end{definition}
\begin{definition}\label{def:maxLE}
Let $\mPhi(t,s)$ denote the fundamental matrix of the tangent linear system \eqref{eq:tangent}, propagating tangent vectors along the trajectory from time $s$ to time $t$, where $s<t$. Let
$\; \tau := t-s >0 \;$
denote the propagation time. The largest Finite-Time Lyapunov Exponent (FTLE) over an interval of length $\tau$ is defined by
\begin{align}
  \lambda_{\max}^{FT}(\tau)
  :=
  \frac{1}{\tau}
  \log
  \left\|
  \mPhi(s+\tau,s)
  \right\|,
\end{align}
where $\left\|
  \cdot
  \right\|$ denotes the spectral norm or operator 2-norm. The largest Lyapunov Exponent (LE) along the trajectory is
\begin{align}
  \lambda_{\max}
  :=
  \limsup_{\tau\rightarrow\infty}
  \frac{1}{\tau}
  \log
  \left\|
  \mPhi(s+\tau,s)
  \right\|.
\end{align}
If the limit exists, then
\begin{align}
  \lambda_{\max}
  =
  \lim_{\tau\rightarrow\infty}
  \frac{1}{\tau}
  \log
  \left\|
  \mPhi(s+\tau,s)
  \right\|.
\end{align}
Here $\left\|
  \cdot
  \right\|$ is any matrix norm; the finite-time value depends on the norm, while the asymptotic Lyapunov exponent is norm-independent in finite dimensions. Note that $\lambda_{\max}$ is the supremum over all nonzero initial tangent vectors $\delta\vz_0$ of the vector-specific exponent
\begin{align}
\lambda(\delta\vz_0) = \limsup_{\tau\to\infty} \frac{1}{\tau} \log \frac{\|\mPhi(s+\tau,s)\delta\vz_0\|}{\|\delta\vz_0\|},
\end{align}
i.e. the largest possible asymptotic growth rate of any tangent perturbation, and it coincides with that vector-specific exponent when $\delta\vz_0$ lies in the maximal asymptotic growth direction.
\end{definition} 

We now establish theoretical results that characterize the Exploding and Vanishing Gradient Problem (EVGP) for Neural ODEs trained via the adjoint sensitivity method.
The first theorem gives FTLE bounds for the backward-in-time growth of the state adjoint $\va(t)$ over an interval $[t,T]$.
The second theorem gives the corresponding LE bounds for the state adjoint in the asymptotic regime $T \to \infty$, under Oseledec's multiplicative ergodic theorem.
The third theorem extends both results to the parameter gradient $\nabla_{\boldsymbol{\theta}} \mathcal{J}$, yielding unified FTLE and LE bounds together with the corresponding exploding, vanishing, and bounded regimes.

\begin{theorem}\label{thm:adjoint_state:FTLE}
 Consider the autonomous Neural ODE
\begin{align}
  \dot{\vz}(t) = \vf(\vz(t), \boldsymbol{\theta}),
\end{align}
trained on a time series data $\{\vz(t) : 0 \le t \le T\}$, using the loss function $\mathcal{J}$ defined as \eqref{eq:loss}.
Let $\vf \in C^1$, $\vz(t)$ be a solution on $[0,T]$.
Fix $t \in [0,T)$ and let $\lambda_{\max}^{FT}(t,T)$ denote the largest FTLE over the interval $[t,T]$, i.e. 
\begin{align}
\lambda_{\max}^{FT}(t,T)
  :=
  \frac{1}{T-t}
  \log
  \left\|
  \mPhi(T, t)
  \right\|.
\end{align}
Then the adjoint state $\va(t) = \dfrac{d \mathcal{J}}{d \vz(t)}$ satisfies
\begin{align}
  \| \va(t) \|
  \le
  \| \va_T \| \,
  \exp\!\Big(\lambda_{\max}^{FT}(t,T)\,(T-t)\Big), \qquad \va(T) = \va_T,
\end{align}
for all $t\in[0,T]$. Moreover, assume that $\vu_{\max}(t,T)$ is a unit left singular vector of $\mPhi(T,t)$ corresponding to the largest singular value $\; \sigma_{\max}(\mPhi(T,t)) = \|\mPhi(T,t)\|$. If the terminal adjoint vector is not orthogonal to this maximally expanding direction, i.e.
\begin{align}
\beta(t,T) := \va_T \, \vu_{\max}(t,T) \neq 0,
\end{align}
then the following finite-time lower bound holds:
\begin{align}
  \| \va(t) \|
  \ge
  |\beta(t,T)| \,
  \exp\!\Big(\lambda_{\max}^{FT}(t,T)\,(T-t)\Big).
\end{align}
In particular, if there exists a constant $c_0>0$ such that $\; |\va_T \, \vu_{\max}(t,T)| \ge c_0 \|\va_T\|$,
then
\begin{align}
  \| \va(t) \|
  \ge
  c_0 \|\va_T\| \,
  \exp\!\Big(\lambda_{\max}^{FT}(t,T)\,(T-t)\Big).
\end{align}
Consequently, for the finite interval \([t,T]\):
\begin{enumerate}
    \item If \(\lambda_{\max}^{FT}(t,T) > 0\) and \(\beta(t,T) \neq 0\), then the adjoint norm grows exponentially backward in time, as quantified by the upper and lower bounds above. 
    \item If \(\lambda_{\max}^{FT}(t,T) < 0\), then the upper bound shows that the adjoint norm decays exponentially backward in time for all \(\va_T\).
    \item If \(\lambda_{\max}^{FT}(t,T) = 0\), then the upper bound gives \(\|\va(t)\| \le \|\va_T\|\).
\end{enumerate}
\end{theorem}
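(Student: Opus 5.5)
The plan is to solve the adjoint equation explicitly in terms of the tangent fundamental matrix $\mPhi$, and then read off both bounds from the singular value decomposition of $\mPhi(T,t)$.

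\textbf{Explicit solution.} Treat $\va(t)$ as a row vector, consistent with the convention $\beta=\va_T\vu_{\max}$. The adjoint equation is $\dot{\va}(t)=-\va(t)\mA(t)$ with $\mA(t)=\partial\vf/\partial\vz(\vz(t))$. I claim that
\begin{align}
  \va(t)=\va_T\,\mPhi(T,t).
\end{align}
To verify this, differentiate $\mPhi(T,t)=\mPhi(t,T)^{-1}$ in its second argument using $\tfrac{d}{dt}\mPhi(t,T)=\mA(t)\mPhi(t,T)$. This gives $\partial_t\mPhi(T,t)=-\mPhi(T,t)\mA(t)$. Hence $\va_T\mPhi(T,t)$ satisfies the adjoint ODE and equals $\va_T$ at $t=T$. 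Uniqueness of solutions of the linear ODE, which holds because $\vf\in C^1$ makes $\mA$ continuous, identifies it with $\va(t)$. Alternatively, one can note that $\va(t)\,\delta\vz(t)$ is constant in $t$, since its derivative is $-\va\mA\delta\vz+\va\mA\delta\vz=0$, and conclude the same formula.

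\textbf{Upper bound.} With the formula in hand, submultiplicativity of the spectral norm gives
\begin{align}
  \|\va(t)\|\le\|\va_T\|\,\|\mPhi(T,t)\|.
\end{align}
By the definition of the FTLE, $\|\mPhi(T,t)\|=\exp\!\big(\lambda_{\max}^{FT}(t,T)(T-t)\big)$, which is exactly the stated upper bound. It holds for every $t\in[0,T)$ and trivially at $t=T$.

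\textbf{Lower bound.} Take an SVD $\mPhi(T,t)=\mU\mSigma\mV\tran$ and let $\vu_{\max}$ and $\vv_{\max}$ be the first left and right singular vectors, so that $\mPhi(T,t)\vv_{\max}=\sigma_{\max}\vu_{\max}$ and $\vu_{\max}\tran\mPhi(T,t)=\sigma_{\max}\vv_{\max}\tran$. Since $\vv_{\max}$ is a unit vector, Cauchy--Schwarz gives
\begin{align}
  \|\va(t)\|\ge|\va(t)\vv_{\max}|=|\va_T\mPhi(T,t)\vv_{\max}|=\sigma_{\max}\,|\va_T\vu_{\max}|=|\beta(t,T)|\,\|\mPhi(T,t)\|.
\end{align}
This is the claimed lower bound. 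The uniform version follows by inserting $|\beta|\ge c_0\|\va_T\|$.

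\textbf{Consequences.} The three regimes follow directly from the sign of the exponent $\lambda_{\max}^{FT}(t,T)(T-t)$ in the two bounds:
\begin{itemize}
  \item If $\lambda_{\max}^{FT}>0$ and $\beta\neq0$, the two bounds sandwich $\|\va(t)\|$ between two multiples of the same growing exponential.
  \item If $\lambda_{\max}^{FT}<0$, the upper bound alone gives exponential decay for every $\va_T$.
  \item If $\lambda_{\max}^{FT}=0$, the upper bound reduces to $\|\va(t)\|\le\|\va_T\|$.
\end{itemize}

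\textbf{Main obstacle.} The only delicate step is the first one: getting the transposition and time-ordering conventions right. Specifically, one must show that the adjoint propagates backward via $\mPhi(T,t)$ acting on the right of a row vector. It is this that makes the relevant direction the left singular vector of $\mPhi(T,t)$ rather than the right one. Everything after that is linear algebra.
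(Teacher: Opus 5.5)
Your proposal is correct and follows the paper's proof. Both obtain $\va(t)=\va_T\mPhi(T,t)$ by differentiating the inverse fundamental matrix and invoking uniqueness, then use submultiplicativity for the upper bound and the SVD of $\mPhi(T,t)$ for the lower bound. The only deviation is in the lower bound: you pair $\va(t)$ with the unit right singular vector $\vv_{\max}$, whereas the paper decomposes $\va_T=\beta\,\vu_{\max}\tran+\vw\tran$ and applies Pythagoras. Your version is slightly more direct and avoids having to justify the orthogonality $\vw\tran\mPhi(T,t)\perp\vu_{\max}\tran\mPhi(T,t)$, which the paper asserts.
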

\begin{proof}
 Let $\mPhi(t,s)$ be the fundamental matrix for the tangent variational equation
\begin{align}
  \frac{d}{dt} \mPhi(t,s) = \mA(t)\mPhi(t,s),\qquad \mPhi(s,s)=\mI_d,
\end{align}
where $\mA(t) = \dfrac{\partial \vf}{\partial \vz}(\vz(t))$.
For ease of notation, we consider the row-vector adjoint, whose evolution is given by
\begin{align}\label{eq:ad_eq}
  \dot{\va}(t) = -\va(t) \, \mA(t), \qquad \va(T) = \va_T.
\end{align}
Consider the function 
\begin{align}
\va_T \; \mPhi(T,t),
\end{align}
for $t \in [0,T]$, with $T$ fixed. 
By the semigroup property we have 
\begin{align}
\mPhi(T,t) = \mPhi(T,s) \; \mPhi(t,s)^{-1} 
\end{align}
Differentiating with respect to $t$ yields
\begin{align}\label{eq:der}
  \frac{\partial}{\partial t} \mPhi(T,t)
  &= \mPhi(T,s) \, \frac{\partial}{\partial t} \left[ \mPhi(t,s)^{-1} \right]
  \end{align}
Now differentiate the inverse using $\partial_t (M^{-1}) = -M^{-1} (\partial_t M) M^{-1}$ gives
\begin{align}\nonumber
  \frac{\partial}{\partial t} \left[ \mPhi(t,s)^{-1} \right]
  &= -\mPhi(t,s)^{-1} \left( \frac{\partial}{\partial t} \mPhi(t,s) \right) \mPhi(t,s)^{-1} \\\label{eq:inv}
  &= -\mPhi(s,t) \left( \mA(t) \, \mPhi(t,s) \right) \mPhi(t,s)^{-1} = -\mPhi(s,t) \, \mA(t). 
   \end{align}
Hence, from \eqref{eq:der} and \eqref{eq:inv}
\begin{align}
  \frac{\partial}{\partial t} \mPhi(T,t)
  = \mPhi(T,s) \left( -\mPhi(s,t) \, \mA(t) \right) = -\mPhi(T,t) \, \mA(t). 
\end{align}
Therefore
\begin{align}
  \frac{d}{dt} \big( \va_T \, \mPhi(T,t) \big)
  \, = \, \va_T \, \frac{\partial}{\partial t} \mPhi(T,t) 
  \, = \, \va_T \, \big( -\mPhi(T,t) \, \mA(t) \big) 
   \, = \, -\big( \va_T \, \mPhi(T,t) \big) \, \mA(t).
\end{align}
Moreover, evaluating at $t = T$
\begin{align}
  \va_T \; \mPhi(T,T) = \va_T \; \mI_d = \va_T.
\end{align}
Thus, $\; \va_T \; \mPhi(T,t)\;$ satisfies the adjoint differential equation \eqref{eq:ad_eq}.
 By the uniqueness of solutions to linear ODEs, we conclude that
\begin{align}
  \va(t) = \va_T \, \mPhi(T,t) \quad \text{for all } t \in [0,T].
\end{align}
Hence the adjoint state is obtained by propagating the terminal covector backward along the tangent linear flow. Furthermore
\begin{align}
  \| \va(t) \| = \| \va_T \, \mPhi(T,t) \| \leq \| \va_T \| \, \| \mPhi(T,t) \|.
\end{align}
By definition of $\lambda_{\max}^{FT}(t,T)$
\begin{align}
  \|\mPhi(T,t)\|
  =
  \exp\!\Big(\lambda_{\max}^{FT}(t,T)\,(T-t)\Big),
\end{align}
which gives the claimed FTLE-based upper bound:
\begin{align}
  \|\va(t)\|
  \le \|\va_T\|\,
  \exp\!\Big(\lambda_{\max}^{FT}(t,T)\,(T-t)\Big).
\end{align}
For the lower bound, let
\[
\mPhi(T,t) = \mU \mSigma \mV^\top
\]
be a singular value decomposition, and let $\vu_{\max}(t,T)$ be the first column of $\mU$, corresponding to the largest singular value
\[
\sigma_{\max}(\mPhi(T,t)) = \|\mPhi(T,t)\|.
\]
Write the row vector $\va_T$ as
\[
\va_T = \beta(t,T) \, \vu_{\max}(t,T)^\top + \vw^\top,
\]
where $\beta(t,T) = \va_T \cdot \vu_{\max}(t,T)$ and $\vw \perp \vu_{\max}(t,T)$.
Then
\begin{align}
  \va_T \mPhi(T,t)
  &=
  \beta(t,T) \, \vu_{\max}(t,T)^\top \mPhi(T,t)
  +
  \vw^\top \mPhi(T,t).
\end{align}
Since
\[
\vu_{\max}(t,T)^\top \mPhi(T,t)
=
\sigma_{\max}(\mPhi(T,t)) \, \vv_{\max}(t,T)^\top,
\]
for the corresponding right singular vector $\vv_{\max}(t,T)$, and since
\(\vw \perp \vu_{\max}(t,T)\) implies
\[
\vw^\top \mPhi(T,t)
\perp
\vu_{\max}(t,T)^\top \mPhi(T,t),
\]
we obtain, by Pythagorean theorem
\begin{align}
  \|\va_T \mPhi(T,t)\|^2
  &=
  |\beta(t,T)|^2
  \|\vu_{\max}(t,T)^\top \mPhi(T,t)\|^2
  +
  \|\vw^\top \mPhi(T,t)\|^2
  \nonumber\\
  &\ge
  |\beta(t,T)|^2
  \|\vu_{\max}(t,T)^\top \mPhi(T,t)\|^2.
\end{align}
Therefore
\begin{align}
  \|\va_T \mPhi(T,t)\|
  &\ge
  |\beta(t,T)| \,
  \|\vu_{\max}(t,T)^\top \mPhi(T,t)\|
  \nonumber\\
  &=
  |\beta(t,T)| \,
  \sigma_{\max}(\mPhi(T,t))
  \nonumber =
  |\beta(t,T)| \,
  \exp\!\Big(\lambda_{\max}^{FT}(t,T)\,(T-t)\Big).
\end{align}
This proves the lower bound.

If additionally
\begin{align}
|\va_T \, \vu_{\max}(t,T)| \ge c_0 \|\va_T\|,
\end{align}
then substituting $\beta(t,T) = \va_T \vu_{\max}(t,T)$ gives
\begin{align}
\|\va(t)\|
\ge
c_0 \|\va_T\|
\exp\!\Big(\lambda_{\max}^{FT}(t,T)\,(T-t)\Big).
\end{align}
The three consequences stated in the theorem follow immediately from the upper and lower bounds by substituting the sign of \(\lambda_{\max}^{FT}(t,T)\).
\end{proof}
\begin{theorem}\label{thm:adjoint_state}
Consider the autonomous Neural ODE
\begin{align}\label{base:ODE}
\dot{\vz}(t) = \vf(\vz(t), \boldsymbol{\theta}),
\end{align}
trained on a time series data $\{\vz(t) : 0 \le t \le T\}$, using the loss function $\mathcal{J}$ defined as \eqref{eq:loss}.
Let $\vf \in C^1$ and let $\vz(t)$ be a solution that exists for arbitrarily long times.
Fix an initial time $t \in [0,T]$ and let $\lambda_{\max}(t)$ denote the largest LE along the trajectory starting at $\vz(t)$.
Then, as $T \to \infty$ with $t$ fixed, the adjoint state $\va(t) = \dfrac{d \mathcal{J}}{d \vz(t)}$ satisfies
\[
\|\va(t)\| \leq \|\va_T\| \, M(\epsilon) \, e^{(\lambda_{\max}(t) + \epsilon)(T-t)}
\hspace{.5cm} \forall \epsilon > 0.
\]

Moreover, under the assumptions of Oseledec's multiplicative ergodic theorem, for terminal adjoint vectors $\va_T$ that have a nonzero component in the top dual Oseledec subspace associated with $\lambda_{\max}(t)$, there exists a matching lower bound
\[
\|\va(t)\| \ge m(\epsilon, \va_T) \, e^{(\lambda_{\max}(t) - \epsilon)(T-t)}
\hspace{.5cm} \forall \epsilon > 0,
\]
for sufficiently large $T-t$.
Consequently:
\begin{enumerate}
\item If $\lambda_{\max}(t) > 0$, the upper bound allows at most exponential growth. For such adjoint vectors $\va_T$, the lower bound ensures that $\|\va(t)\|$ actually grows exponentially backward in time. For exceptional $\va_T$, e.g., $\va_T=0$ or lying entirely in a lower-growth Oseledec subspace, the growth may be slower.
\item If $\lambda_{\max}(t) < 0$, the upper bound decays exponentially backward. Hence, for all $\va_T$, $\|\va(t)\|$ decays exponentially backward.
\item If $\lambda_{\max}(t) = 0$, the upper bound gives at most subexponential growth for all $\va_T$.
\end{enumerate}
\end{theorem}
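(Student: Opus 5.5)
The proof rests on the representation already established in the proof of \Cref{thm:adjoint_state:FTLE}. There we showed that the row-vector adjoint solves $\va(t)=\va_T\,\mPhi(T,t)$, which does not depend on $T$ being finite. The upper bound then reduces to a statement about $\|\mPhi(T,t)\|$. By \Cref{def:maxLE}, applied with $s=t$ and $\tau=T-t$, we have $\limsup_{\tau\to\infty}\tau^{-1}\log\|\mPhi(t+\tau,t)\|=\lambda_{\max}(t)$. So for each $\epsilon>0$ there is a $\tau_0(\epsilon)$ with $\|\mPhi(t+\tau,t)\|\le e^{(\lambda_{\max}(t)+\epsilon)\tau}$ for all $\tau\ge\tau_0$.

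On the compact range $\tau\in[0,\tau_0]$, the map $\tau\mapsto\|\mPhi(t+\tau,t)\|e^{-(\lambda_{\max}+\epsilon)\tau}$ is continuous, since the solution of a linear ODE with continuous coefficient $\mA$ is continuous. It is therefore bounded by some constant. Setting $M(\epsilon)$ to be the maximum of $1$ and this bound gives $\|\va(t)\|\le\|\va_T\|\|\mPhi(T,t)\|\le\|\va_T\|M(\epsilon)e^{(\lambda_{\max}+\epsilon)(T-t)}$ for all $T\ge t$, which is the stated upper bound.

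The lower bound is where the real work lies, and I expect it to be the main obstacle. The adjoint is a covector propagated backward, $\va(t)=\va_T\mPhi(T,t)$, with $T\to\infty$ and $t$ fixed. We therefore need growth rates of $\mPhi(T,t)^\top$ acting on vectors attached at the moving endpoint $\vz(T)$, which is not the forward Oseledec statement at $\vz(t)$. I would invoke the multiplicative ergodic theorem, assuming an invariant measure and the integrability condition $\log^+\|\mPhi\|\in L^1$ on a typical trajectory. This gives an Oseledec splitting $\bigoplus_i E_i(\vz(s))$ that is equivariant under $\mPhi$, together with the dual (adjoint) splitting $E_i^*(\vz(s))$ of covectors. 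The dual splitting is defined by $E_i^*$ annihilating $\bigoplus_{j\ne i}E_j$, and it is equivariant under $\va\mapsto\va\mPhi$.

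Suppose the terminal covector $\va_T$ has a nonzero component $\va_T^{(1)}$ in the top dual subspace $E_1^*(\vz(T))$, and let $\vv\in E_1(\vz(t))$ be a unit vector. Then $\va(t)\vv=\va_T\mPhi(T,t)\vv$. Since $\mPhi(T,t)\vv\in E_1(\vz(T))$ and the other dual components annihilate $E_1$, this equals $\va_T^{(1)}\mPhi(T,t)\vv$. The Oseledec theorem gives $\|\mPhi(T,t)\vv\|\ge e^{(\lambda_{\max}-\epsilon/2)(T-t)}$ for large $T-t$. To turn this into a lower bound on $|\va(t)\vv|$, we also need the component $\va_T^{(1)}$ and the angle it makes with $E_1(\vz(T))$ not to degenerate faster than $e^{-\epsilon(T-t)/2}$. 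This is exactly the tempered-angle (subexponential) property of Oseledec splittings, and it is where the precise hypothesis on $\va_T$ enters. I would phrase the assumption as ``nonzero component with norm bounded below subexponentially,'' which is what the constant $m(\epsilon,\va_T)$ absorbs. Then $\|\va(t)\|\ge|\va(t)\vv|\ge m(\epsilon,\va_T)e^{(\lambda_{\max}-\epsilon)(T-t)}$.

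Finally, the three consequences follow by inspecting signs. If $\lambda_{\max}>0$, choose $\epsilon<\lambda_{\max}$ in the lower bound to get exponential growth; exceptional $\va_T$ (zero, or lying in lower dual subspaces) fall outside the hypothesis. If $\lambda_{\max}<0$, choose $\epsilon<|\lambda_{\max}|$ in the upper bound to get exponential decay for every $\va_T$. If $\lambda_{\max}=0$, the upper bound holds for every $\epsilon$, which is precisely subexponential growth.
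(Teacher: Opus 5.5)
Your upper bound is the paper's argument: the representation $\va(t)=\va_T\mPhi(T,t)$ plus the $\limsup$ definition. Your compactness step is a small improvement, since it makes $M(\epsilon)$ valid for all $T\ge t$ rather than only for large $T-t$.

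For the lower bound you take a genuinely different route. The paper fixes the terminal time and covector and appeals directly to Oseledec's theorem for the dual (transposed) cocycle. For one fixed $\va_T$ with nonzero top dual component this gives $\lim_{\tau\to\infty}\tau^{-1}\log\|\va_T\mPhi(T,T-\tau)\|=\lambda_{\max}$. A purely qualitative ``nonzero component'' hypothesis then suffices. Strictly, though, that is a statement about the initial time receding, $t=T-\tau\to-\infty$, not about $T\to\infty$ with $t$ fixed as the theorem is phrased.

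You stay in the stated regime instead. You pair $\va(t)$ with a forward Oseledec vector at the fixed point $\vz(t)$, which moves the difficulty to the endpoint $\vz(T)$. Temperedness of the Oseledec angles handles the geometry there. A subexponential lower bound on $\|\va_T^{(1)}\|$, uniform in $T$, handles the fact that $\va_T$ itself changes with $T$.

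That strengthened hypothesis is not a weakness. In the fixed-$t$ regime, a merely nonzero component that decays faster than every exponential would defeat any $T$-independent constant $m(\epsilon,\va_T)$. Your version therefore makes explicit what the paper's formulation leaves implicit. The paper's route is shorter and needs only the qualitative condition; yours matches the regime actually claimed.

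One step needs repair. With a single fixed unit $\vv\in E_1(\vz(t))$, your argument only works when the top exponent is simple. If $\dim E_1\ge 2$, the nonzero functional $\va_T^{(1)}$ on $E_1(\vz(T))$ can annihilate the particular direction $\mPhi(T,t)\vv$, so $\va(t)\vv$ may vanish. Instead, bound the norm of the restriction:
$\|\va(t)\|\ge\|\va(t)|_{E_1(\vz(t))}\|\ge\|\va_T|_{E_1(\vz(T))}\|\,\|(\mPhi(T,t)|_{E_1})^{-1}\|^{-1}$.
Then use two facts:
\begin{itemize}
\item the uniform growth of $\mPhi(T,t)$ on all of $E_1$ provided by the two-sided multiplicative ergodic theorem;
\item $\|\va_T|_{E_1(\vz(T))}\|\ge\|\va_T^{(1)}\|\sin\angle\bigl(E_1(\vz(T)),\bigoplus_{j\ge 2}E_j(\vz(T))\bigr)$, which is exactly where temperedness enters.
\end{itemize}
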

%
\begin{proof}
From the proof of Theorem~\ref{thm:adjoint_state:FTLE}, the adjoint state satisfies
\begin{align}
\va(t) = \va_T \, \mPhi(T,t) \quad \text{for all } t \in [0,T].
\end{align}
Hence
\begin{align}
\|\va(t)\| = \|\va_T \, \mPhi(T,t)\| \leq \|\va_T\| \, \|\mPhi(T,t)\|.
\end{align}
Now fix \(t \in [0,T]\) and let \(\tau = T-t\). 
By Definition~\ref{def:maxLE}, the largest Lyapunov exponent along the trajectory starting at \(\vz(t)\) is
\begin{align}
\lambda_{\max}(t) = \limsup_{T \to \infty} \frac{1}{T-t} \log \|\mPhi(T,t)\|.
\end{align}
Thus, for any \(\epsilon > 0\), there exists a constant \(M(\epsilon) > 0\) such that for all sufficiently large \(T-t\)
\begin{align}
\|\mPhi(T,t)\| \leq M(\epsilon) \, e^{(\lambda_{\max}(t) + \epsilon)(T-t)}.
\end{align}
Substituting this into the adjoint norm bound gives
\begin{align}
\|\va(t)\| \leq \|\va_T\| \, M(\epsilon) \, e^{(\lambda_{\max}(t) + \epsilon)(T-t)}.
\end{align}
Thus, for every $\epsilon > 0$,
\begin{align}
\|\va(t)\| = \mathcal{O}\left(e^{(\lambda_{\max}(t) + \epsilon)(T-t)}\right)
\end{align}
as $T - t \to \infty$ with $t$ fixed. This proves the claimed exponential-order upper bound.

To obtain the lower bound, we invoke Oseledec's multiplicative ergodic theorem. Let \(\mu\) be an invariant measure for the flow of the state ODE \eqref{base:ODE}.
For \(\mu\)-almost every initial condition, the tangent space admits an Oseledec decomposition.
The adjoint state \(\va(t)\) evolves backward in time according to
\begin{align}
\va(t) = \va_T \, \mPhi(T,t).
\end{align}
Hence its asymptotic growth rate is determined by the dual Oseledec decomposition.
The top dual Oseledec subspace is the set of adjoint vectors whose backward growth rate equals \(\lambda_{\max}(t)\).
If the terminal adjoint $\va_T$ has a nonzero component in this subspace, then, by Oseledec's theorem, the following limit exists:
\begin{align}
\lim_{\tau \to \infty} \frac{1}{\tau} \log \| \va_T \, \mPhi(T, T-\tau) \| = \lambda_{\max}(t).
\end{align}
This means that the backward growth rate of \(\|\va(t)\|\) equals \(\lambda_{\max}(t)\) for such \(\va_T\).
Therefore, for any \(\epsilon > 0\), there exists a constant \(m(\epsilon, \va_T) > 0\) such that for all sufficiently large \(\tau = T-t\)
\begin{align}
\|\va(t)\| \ge m(\epsilon, \va_T) \, e^{(\lambda_{\max}(t) - \epsilon)(T-t)}.
\end{align}

Finally, the three consequences follow immediately from the upper and lower bounds by substituting the sign of $\lambda_{\max}(t)$: positive yields exponential growth backward in time for terminal adjoint vectors $\va_T$ having a nonzero component in the top dual Oseledec subspace associated with $\lambda_{\max}(t)$ (and possibly slower growth for exceptional $\va_T$), negative yields exponential decay backward in time for all $\va_T$, and zero yields at most subexponential growth backward in time for all $\va_T$.
\end{proof}
  
\begin{theorem}\label{thm:param_grad_unified}
Consider the autonomous Neural ODE
\begin{align}\label{base:ODE_param}
  \dot{\vz}(t) = \vf(\vz(t), \boldsymbol{\theta}),
\end{align}
trained on a time series data $\{\vz(t) : 0 \le t \le T\}$, using the loss function $\mathcal{J}$ defined as \eqref{eq:loss}.
Let $\vf \in C^1$, let $\vz(t)$ be a solution that exists for arbitrarily long times, and assume
\begin{align}
  \left\| \frac{\partial \vf}{\partial \boldsymbol{\theta}} \right\| \le C_\theta,
\end{align}
for some constant $C_\theta > 0$.
Fix $t \in [0,T)$ and let $\lambda_{\max}^{FT}(t,T)$ denote the largest FTLE over the interval $[t,T]$
\begin{align}
  \lambda_{\max}^{FT}(t,T)
  :=
  \frac{1}{T-t}
  \log \left\| \mPhi(T, t) \right\|.
\end{align}
Let $\lambda_{\max}(t)$ denote the largest LE along the trajectory starting at $\vz(t)$.

\smallskip
\noindent \textbf{(i) Finite-time (FTLE) bounds.}
For every $T > 0$, the parameter gradient $\nabla_{\boldsymbol{\theta}} \mathcal{J} = \dfrac{d \mathcal{J}}{d \boldsymbol{\theta}}$ satisfies
\begin{align}
  \|\nabla_{\boldsymbol{\theta}} \mathcal{J}\|
  \le
  C_\theta \, \| \va_T \|
  \int_0^T
  \exp\!\Big(\lambda_{\max}^{FT}(t,T)\,(T-t)\Big)
  \, dt.
\end{align}
Moreover, if the terminal adjoint vector $\va_T$ is not orthogonal to the maximally expanding direction over $[t,T]$, i.e.
\begin{align}
  \beta(t,T) := \va_T \, \vu_{\max}(t,T) \neq 0,
\end{align}
where $\vu_{\max}(t,T)$ is a unit left singular vector of $\mPhi(T,t)$ corresponding to the largest singular value, then
\begin{align}
  \|\nabla_{\boldsymbol{\theta}} \mathcal{J}\|
  \ge
  C_\theta
  \int_0^T
  |\beta(t,T)|
  \exp\!\Big(\lambda_{\max}^{FT}(t,T)\,(T-t)\Big)
  \, dt.
\end{align}

\smallskip
\noindent \textbf{(ii) Asymptotic (LE) bounds.}
As $T \to \infty$ with $t$ fixed, for every $\epsilon > 0$
\begin{align}
  \|\nabla_{\boldsymbol{\theta}} \mathcal{J}\|
  \le
  C_\theta \, \| \va_T \| \, M(\epsilon)
  \int_0^T
  e^{(\lambda_{\max}(t) + \epsilon)(T-t)}
  \, dt.
\end{align}
Moreover, under the assumptions of Oseledec's multiplicative ergodic theorem, for terminal adjoint vectors $\va_T$ that have a nonzero component in the top dual Oseledec subspace associated with $\lambda_{\max}(t)$, there exists a constant $m(\epsilon, \va_T) > 0$ such that for sufficiently large $T$
\begin{align}
  \|\nabla_{\boldsymbol{\theta}} \mathcal{J}\|
  \ge
  C_\theta \, m(\epsilon, \va_T)
  \int_0^T
  e^{(\lambda_{\max}(t) - \epsilon)(T-t)}
  \, dt.
\end{align}
\smallskip
\noindent \textbf{(iii) Finite-time regime consequences (FTLE).}
Let $\lambda_{\max}^{FT} := \lambda_{\max}^{FT}(t,T)$.
\begin{enumerate}
  \item If $\lambda_{\max}^{FT} > 0$, then
    \[
      \|\nabla_{\boldsymbol{\theta}} \mathcal{J}\|
      \le
      C_\theta \, \| \va_T \|
      \frac{e^{\lambda_{\max}^{FT} T} - 1}{\lambda_{\max}^{FT}}
      = \mathcal{O}\!\left(e^{\lambda_{\max}^{FT} T}\right).
    \]
    If in addition $|\beta(t,T)| \ge \beta_{\min} > 0$ for all $t \in [0,T]$, then
    \[
      \|\nabla_{\boldsymbol{\theta}} \mathcal{J}\|
      \ge
      C_\theta \, \beta_{\min}
      \frac{e^{\lambda_{\max}^{FT} T} - 1}{\lambda_{\max}^{FT}}.
    \]
  \item If $\lambda_{\max}^{FT} = 0$, then
    \[
      \|\nabla_{\boldsymbol{\theta}} \mathcal{J}\|
      \le
      C_\theta \, \| \va_T \| \, T
      = \mathcal{O}(T).
    \]
  \item If $\lambda_{\max}^{FT} < 0$, then
    \[
      \|\nabla_{\boldsymbol{\theta}} \mathcal{J}\|
      \le
      C_\theta \, \| \va_T \|
      \frac{1 - e^{\lambda_{\max}^{FT} T}}{|\lambda_{\max}^{FT}|}
      \le
      \frac{C_\theta \, \| \va_T \|}{|\lambda_{\max}^{FT}|},
    \]
    so the parameter gradient is uniformly bounded independently of $T$.
\end{enumerate}

\smallskip
\noindent \textbf{(iv) Asymptotic regime consequences (LE).}
\begin{enumerate}
  \item If $\lambda_{\max}(t) > 0$, then
    \[
      \|\nabla_{\boldsymbol{\theta}} \mathcal{J}\|
      \le
      C_\theta \, \| \va_T \| \, M(\epsilon)
      \frac{e^{(\lambda_{\max}(t) + \epsilon) T} - 1}{\lambda_{\max}(t) + \epsilon}
      = \mathcal{O}\!\left(e^{(\lambda_{\max}(t) + \epsilon) T}\right),
    \]
    and for $\va_T$ satisfying the top dual Oseledec condition
  \begin{align}
   \|\nabla_{\boldsymbol{\theta}} \mathcal{J}\|
      \ge
      C_\theta \, m(\epsilon, \va_T)
      \frac{e^{(\lambda_{\max}(t) - \epsilon) T} - 1}{\lambda_{\max}(t) - \epsilon}
      \ge
      \frac{C_\theta \, m(\epsilon, \va_T)}{\lambda_{\max}(t) - \epsilon}
      \, e^{(\lambda_{\max}(t) - \epsilon) T}.
   \end{align}
    For exceptional $\va_T$ (e.g., $\va_T = 0$ or lying entirely in a lower-growth Oseledec subspace), the growth may be slower.
  \item If $\lambda_{\max}(t) = 0$, then
  \begin{align}
  \|\nabla_{\boldsymbol{\theta}} \mathcal{J}\|
      \le
      C_\theta \, \| \va_T \| \, M(\epsilon)
      \frac{e^{\epsilon T} - 1}{\epsilon}
      = \mathcal{O}(T)
    \end{align}
    for fixed $\epsilon > 0$, so the parameter gradient grows at most linearly.
  \item If $\lambda_{\max}(t) < 0$, then choosing $\epsilon < |\lambda_{\max}(t)|$ gives
    \[
      \|\nabla_{\boldsymbol{\theta}} \mathcal{J}\|
      \le
      C_\theta \, \| \va_T \| \, M(\epsilon)
      \frac{1 - e^{-(|\lambda_{\max}(t)| - \epsilon) T}}{|\lambda_{\max}(t)| - \epsilon}
      \le
      \frac{C_\theta \, \| \va_T \| \, M(\epsilon)}{|\lambda_{\max}(t)| - \epsilon},
    \]
    so the parameter gradient remains uniformly bounded independently of $T$.
\end{enumerate}
\end{theorem}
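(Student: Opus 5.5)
The plan is to reduce everything to the adjoint-state bounds already established in \Cref{thm:adjoint_state:FTLE} and \Cref{thm:adjoint_state}, using the integral representation of the parameter gradient,
\begin{align}
  \nabla_{\boldsymbol{\theta}}\mathcal{J} = \va_\theta(0) = \int_0^T \va(t)\,\frac{\partial \vf}{\partial \boldsymbol{\theta}}(\vz(t),\boldsymbol{\theta})\,dt ,
\end{align}
which follows from integrating the parameter-adjoint ODE backward from $\va_\theta(T)=0$. For the upper bounds in (i) and (ii), I will pull the norm inside the integral. Then I apply $\|\va(t)\,\partial_\theta\vf\|\le C_\theta\|\va(t)\|$ and insert, pointwise in $t$, the FTLE upper bound from \Cref{thm:adjoint_state:FTLE}, respectively the LE upper bound with constant $M(\epsilon)$ from \Cref{thm:adjoint_state}. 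This directly yields the displayed integrals.

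The lower bounds are the delicate step. The norm of an integral is not bounded below by the integral of the norms, and $\|\va\,\partial_\theta\vf\|\ge C_\theta\|\va\|$ is not implied by an upper bound on $\partial_\theta\vf$. I would therefore make explicit an additional nondegeneracy/alignment condition, as is implicit in the statement. Concretely, I would assume $\partial_\theta\vf$ acts with lower gain $C_\theta$ on the relevant covectors, say $\|\vx\,\partial_\theta\vf\|\ge C_\theta\|\vx\|$ along the trajectory. I would also assume the vectors $\va(t)\,\partial_\theta\vf(\vz(t))$ stay within a fixed cone, so that no cancellation occurs. To formalise this, I would pick a unit vector $\ve$ with $\langle \va(t)\partial_\theta\vf,\ve\rangle\ge c\,\|\va(t)\partial_\theta\vf\|$ and bound $\|\nabla_\theta\mathcal J\|\ge\langle\nabla_\theta\mathcal J,\ve\rangle$. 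Under these hypotheses, the pointwise lower bounds $|\beta(t,T)|e^{\lambda^{FT}_{\max}(t,T)(T-t)}$ and $m(\epsilon,\va_T)e^{(\lambda_{\max}-\epsilon)(T-t)}$ integrate to the claimed estimates. This is the step I expect to be the main obstacle, and I would state the cone/lower-gain condition as the precise meaning of the lower bounds.

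For (iii) and (iv), I will treat the exponent as constant in $t$, namely $\lambda^{FT}_{\max}(t,T)\equiv\lambda^{FT}_{\max}$ and $\lambda_{\max}(t)\equiv\lambda_{\max}$, interpreting these as uniform bounds over $t$. Then I evaluate $\int_0^T e^{\lambda(T-t)}dt=(e^{\lambda T}-1)/\lambda$, which equals $T$ when $\lambda=0$. The sign cases follow immediately.
\begin{itemize}
\item For $\lambda>0$, this gives $\mathcal O(e^{\lambda T})$, and with $\beta_{\min}$ a matching lower bound.
\item For $\lambda<0$, it gives the uniform bound $1/|\lambda|$.
\item In the LE case with $\lambda_{\max}=0$, the bound $(e^{\epsilon T}-1)/\epsilon$ is used for fixed $\epsilon$. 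The claimed $\mathcal O(T)$ is then read as holding for $\epsilon\lesssim 1/T$, since $M(\epsilon)$ depends on $\epsilon$; I would note this caveat.
\end{itemize}
For $\lambda_{\max}>\epsilon$, the final inequality $(e^{aT}-1)/a\ge e^{aT}/a$ is not literally true. I would replace it by $\ge e^{aT}/(2a)$ for $T\ge\log 2/a$, which preserves the $\Omega(e^{(\lambda_{\max}-\epsilon)T})$ conclusion used in \Cref{thm:param_grad}.
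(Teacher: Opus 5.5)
Your upper bounds in (i) and (ii), and your evaluation of $\int_0^T e^{\lambda(T-t)}\,dt$ for (iii) and (iv), are exactly the paper's proof. The paper writes $\nabla_{\boldsymbol{\theta}}\mathcal{J}=-\int_0^T\va(t)\,\partial_{\boldsymbol{\theta}}\vf(\vz(t))\,dt$ (the sign is immaterial) and bounds its norm by $C_\theta\int_0^T\|\va(t)\|\,dt$. It then inserts the pointwise bounds of \Cref{thm:adjoint_state:FTLE} and \Cref{thm:adjoint_state}, and integrates as if the exponent were independent of $t$.

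You diverge at the lower bounds, and there you are right and the paper is not. The paper gets them by substituting the pointwise lower bounds on $\|\va(t)\|$ into that same upper-bound chain, which is precisely the step you refuse to take. Without an extra hypothesis the stated lower bounds are false. Take $\vf(\vz,\theta)=(z_1,\,-z_2+\theta)^\top$ and $\mathcal{J}=z_1(T)$. Then $\mPhi(T,t)=\mathrm{diag}(e^{T-t},e^{-(T-t)})$, $\va(t)=(e^{T-t},0)$, $\beta(t,T)\equiv1$ and $\lambda_{\max}^{FT}(t,T)\equiv1$. Also $\partial_\theta\vf=(0,1)^\top$, so $C_\theta=1$ is sharp. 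Hence $\nabla_{\boldsymbol{\theta}}\mathcal{J}=0$, while (i) and (ii) claim positive lower bounds ($e^T-1$ in (i)). Separately, $C_\theta$ is only an upper bound, so enlarging it inflates the right-hand side and leaves the left unchanged.

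An alignment or nondegeneracy hypothesis like yours is therefore unavoidable. Three adjustments are needed:
\begin{itemize}
\item With lower gain $c_\theta$ along $\va(t)$ and cone constant $c$, you get $\|\nabla_{\boldsymbol{\theta}}\mathcal{J}\|\ge c\,c_\theta\int_0^T\|\va(t)\|\,dt$. The prefactor is $c\,c_\theta$, not $C_\theta$.
\item Do not tie the lower gain to the upper constant $C_\theta$. That would force $\partial_{\boldsymbol{\theta}}\vf$ to act as $C_\theta$ times an isometry on the relevant covectors.
\item The single direction $\bm{e}$ must serve all $t\in[0,T]$ at once.
\end{itemize}

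Your corrections to (iv) are also correct, and the paper has nothing comparable; it just says the consequences follow from the elementary integrals. The inequality $(e^{aT}-1)/a\ge e^{aT}/a$ fails for every $a>0$, and your $e^{aT}/(2a)$ for $T\ge\log 2/a$ repairs it. For $\lambda_{\max}=0$, however, taking $\epsilon\sim1/T$ does not recover $\mathcal{O}(T)$, because $M(\epsilon)$ is uncontrolled as $\epsilon\to0$. The defensible conclusion is $\mathcal{O}(e^{\epsilon T})$ for every $\epsilon>0$, i.e.\ subexponential growth.

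On the exponents: $\lambda_{\max}^{FT}(t,T)$ genuinely varies with $t$, so your reading of (iii) as uniform bounds is the right repair ($\sup_t$ for the upper bound, $\inf_t$ for the lower). By contrast, $\lambda_{\max}(t)$ does not depend on $t$, since $\mPhi(T,t)=\mPhi(T,0)\mPhi(0,t)$, so treating it as constant in (iv) is legitimate.

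One gap you inherit from the paper without flagging it. \Cref{thm:adjoint_state} supplies $M(\epsilon)$ and $m(\epsilon,\va_T)$ only for fixed $t$ as $T-t\to\infty$. Integrating over $t\in[0,T]$, including $t$ near $T$ and with $T$ and $\va_T$ varying, needs constants uniform in $t$. That does not follow from the fixed-$t$ statement and must be assumed separately. The $T$-independent bound in (iv).3 rests on it.
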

\begin{proof}
From the adjoint sensitivity equations, the parameter gradient admits the integral representation
\begin{align}\label{eq:param_grad_integral_unified}
  \nabla_{\boldsymbol{\theta}} \mathcal{J}
  =
  - \int_0^T \va(t) \, \frac{\partial \vf}{\partial \boldsymbol{\theta}}(\vz(t)) \, dt,
\end{align}
where $\va_\theta(T) = 0$.
Taking norms and applying the triangle inequality
\begin{align}
  \|\nabla_{\boldsymbol{\theta}} \mathcal{J}\|
  \le
  \int_0^T \| \va(t) \| \,
  \left\| \frac{\partial \vf}{\partial \boldsymbol{\theta}}(\vz(t)) \right\| \, dt
  \le
  C_\theta \int_0^T \| \va(t) \| \, dt.
\end{align}

The finite-time bounds in (i) follow directly from Theorem~\ref{thm:adjoint_state:FTLE}.
Indeed, that theorem gives, for every $t \in [0,T]$
\begin{align}
  \|\va(t)\|
  \le
  \|\va_T\|
  \exp\!\Big(\lambda_{\max}^{FT}(t,T)\,(T-t)\Big),
\end{align}
and, if $\beta(t,T) \neq 0$
\begin{align}
  \|\va(t)\|
  \ge
  |\beta(t,T)|
  \exp\!\Big(\lambda_{\max}^{FT}(t,T)\,(T-t)\Big).
\end{align}
Substituting these pointwise bounds into the integral representation and using $\|\partial \vf / \partial \boldsymbol{\theta}\| \le C_\theta$ yields the FTLE upper and lower bounds in (i).

The asymptotic bounds in (ii) follow directly from Theorem~\ref{thm:adjoint_state}.
Indeed, that theorem gives, for every $\epsilon > 0$ and for sufficiently large $T-t$
\begin{align}
  \|\va(t)\|
  \le
  \|\va_T\| \, M(\epsilon) \, e^{(\lambda_{\max}(t) + \epsilon)(T-t)},
\end{align}
and, under the assumptions of Oseledec's multiplicative ergodic theorem, for terminal adjoint vectors $\va_T$ having a nonzero component in the top dual Oseledec subspace associated with $\lambda_{\max}(t)$
\begin{align}
  \|\va(t)\|
  \ge
  m(\epsilon, \va_T) \, e^{(\lambda_{\max}(t) - \epsilon)(T-t)}.
\end{align}
Substituting these pointwise bounds into the integral representation yields the LE upper and lower bounds in (ii).

The regime consequences in (iii) and (iv) follow by evaluating the elementary exponential integrals in (i) and (ii), respectively, according to the sign of $\lambda_{\max}^{FT}(t,T)$ and $\lambda_{\max}(t)$.
This completes the proof.
\end{proof}
\section{Experiment Details}
\label{app:experiments}

We provide details on the datasets and hyperparameters for all experiments in this chapter. Each problem has a dedicated section.  We also include information on our calculation of spectral residuals in \Cref{sec:gEDMD_spectral_pollution} and an experiment demonstrating the exploding gradient problem for the adjoint sensitivity method in \Cref{app:node_exploding_adjoint}.

Our implementation is based on the existing python library for learning operators from data called \texttt{datafold} \citep{Lehmberg2020}.
Derivatives were computed using a simple finite difference scheme whenever necessary. Further, our implementation appends the raw states to the dictionary in order to facilitate a mapping from the observables space back into the state space, for this reason the size of dictionaries is typically the sum of the number of neurons and the dimension of the state.

The experiment code and datasets are provided as supplementary material.
All computations were performed on a AMD EPYC 7702 64-Core Processor.

\subsection{Van der Pol}
The Van der Pol oscillator is a good example of a nonlinear system with simple dynamics and a well-understood behavior.
The dynamics are given by
\begin{equation}
\begin{split} 
        \dot{z_1} &= z_2 \\
        \dot{z_2} &= \mu (1 - z_1^2)z_2 - z_1,
\end{split}
\end{equation}
and we set \(\mu = 1\). We generate an time-array of irregularly spaced timepoints by drawing each \(\Delta t \sim U(0, 0.05)\). Each trajectory consists of 200 snapshots in time. For training, validation and testing we simulate 50 trajectories for each dataset, with each trajectory starting from a randomly sampled initial condition \((z_1, z_2) \sim U(-3, +3) \times U(-3, +3) \).

\begin{wraptable}{f}{7.5cm}
    \caption{Hyperparameter search scope for DRAGONS on the Van der Pol problem. Best combination shown with bold font.}
    \label{tab:hyperparameters_DRAGONS_VdP}
    \centering
    \small
  \begin{tabular}{ cc }
    \toprule
    sampling & flow-, vector field-, \textbf{generator-driven} \\
    \midrule
         layer width & 
        16,
        32,
        64,
        128,
        256,
        \textbf{512}\\
    \midrule
        activation &
        relu, \textbf{tanh} \\
    \midrule
    rcond &        
        1e-6,
        1e-10,
        1e-14,
        \textbf{1e-18} \\
    \bottomrule
  \end{tabular}
\end{wraptable}

We perform a grid search over possible hyperparameter combinations for the different dictionary choices, these are given in \Cref{tab:hyperparameters_DRAGONS_VdP}, \Cref{tab:hyperparameters_Gaussian_VdP}, \Cref{tab:hyperparameters_gEDMD-DL_VdP}, \Cref{tab:hyperparameters_RBF_VdP}, \Cref{tab:hyperparameters_Poly_VdP} for the DRAGONS approach, Gaussian parameters, learned dictionary with Adam, an RBF dictionary and a polynomial dictionary, respectively. The range of hyperparameters was determined via manual trials in advance of the grid search. We evaluate each model on the validation set using a mean squared error loss and averaging this over the 50 trajectories. All models are initialized 5 times with different random seed, and we report the average, minimum and maximum result across the random seeds in \Cref{tab:main_results}.

\begin{table}[ht]
    \begin{minipage}{.51\linewidth}
    \caption{Hyperparameter search scope for Gaussian random features on the Van der Pol problem. Best combination shown with bold font.}
    \label{tab:hyperparameters_Gaussian_VdP}
    \centering
    \small
  \begin{tabular}{ cc }
    \toprule
         layer width & 
        16,
        32,
        64,
        \textbf{128},
        256,
        512
        \\
    \midrule
        activation &
        relu, \textbf{tanh} \\
    \midrule
    rcond &  
        1e-6,
        1e-10,
        \textbf{1e-14},
        1e-18       \\
    \bottomrule
  \end{tabular}
  \end{minipage}
    \hspace{0.25cm}
    \begin{minipage}{.45\linewidth}
    \caption{Hyperparameter search scope for gEDMD-DL on the Van der Pol problem. Best combination shown with bold font.}
    \label{tab:hyperparameters_gEDMD-DL_VdP}
    \centering
    \small
  \begin{tabular}{ cc }
    \toprule
         layer width & 
        16,
        32,
        64,
        \textbf{128},
        256,
        512\\
    \midrule
        activation &
        relu, \textbf{tanh} \\
    \midrule
    rcond &       
        1e-6,
        1e-10,
        1e-14,
        \textbf{1e-18} \\
    \bottomrule
  \end{tabular}
  \end{minipage}
\end{table}

\begin{table}[ht]
    \begin{minipage}{.45\linewidth}
    \caption{Hyperparameter search scope for RBF-Dict on the Van der Pol problem. \\Best combination shown with bold font.}
    \label{tab:hyperparameters_RBF_VdP}
    \centering
    \small
  \begin{tabular}{ cc }
    \toprule
         num of samples & 100, \textbf{200} \\
    \midrule
        rcond & 1e-8, \textbf{1e-12}, 1e-16 \\
    \bottomrule
  \end{tabular}
  \end{minipage}
  \hspace{1cm}
  \begin{minipage}{.45\linewidth}
    \caption{Hyperparameter search scope for Poly-Dict on the Van der Pol problem. \\Best combination shown with bold font.}
    \label{tab:hyperparameters_Poly_VdP}
    \centering
    \small
  \begin{tabular}{ cc }
    \toprule
         degree & 1, 2, \textbf{3} \\
    \midrule
        rcond &
        1e-8,
        1e-10,
        1e-12,
        1e-14,
        \textbf{1e-16} \\
    \bottomrule
  \end{tabular}
  \end{minipage}
\end{table}

We visualize predictions using the DRAGONS approach on the test set in \Cref{fig:DRAGONS_VdP}. The predicted trajectory matches the trajectory obtained with a numerical ODE solver.
\begin{figure}[h]
    \centering
    \begin{subfigure}{0.265\textwidth}
        \includegraphics[width=\linewidth]{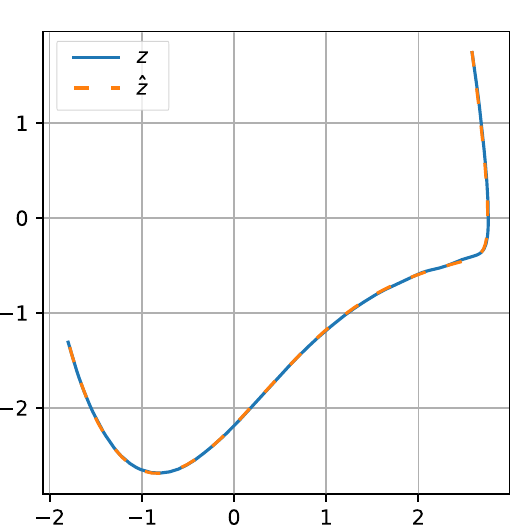}
        \caption{Phase space}
        \label{fig:DRAGONS_VdP_phase}
    \end{subfigure}
    \hspace{10pt}
    \begin{subfigure}{0.63\textwidth}
    \includegraphics[width=\linewidth]{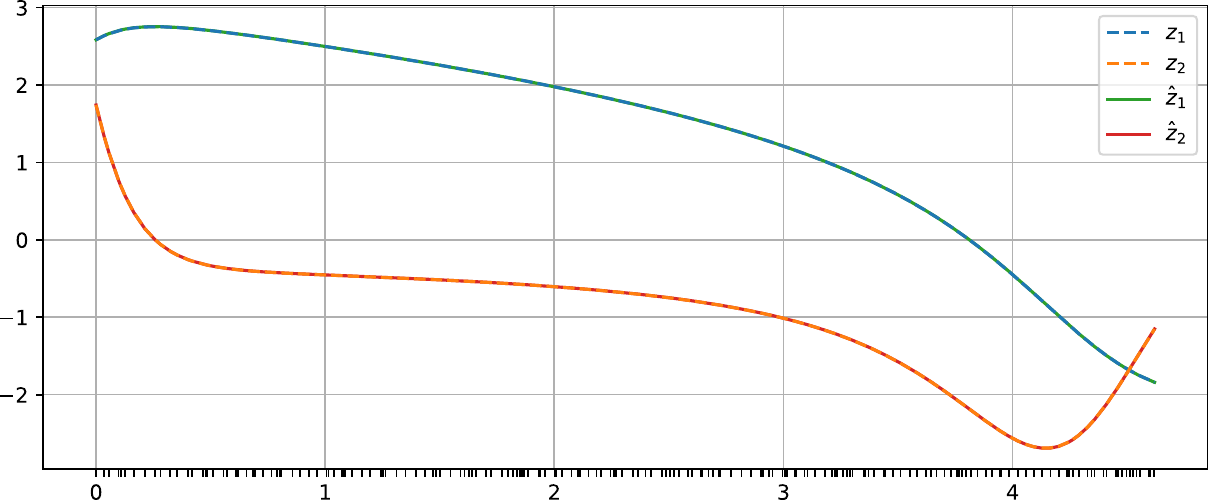}
        \caption{Trajectory}
        \label{fig:DRAGONS_VdP_traj}
    \end{subfigure}
    \caption{Comparison of numerically calculated states $z$ versus DRAGONS predicted trajectories $\hat{z}$ for Van der Pol experiment. In subplot b) the ticks on the x-axis show the evaluation points in time, which are irregularly spaced.}
    \label{fig:DRAGONS_VdP}
\end{figure}

Further, we investigate potential ill-conditioning of the system to be solved using least squares. For this purpose, we compute the condition number of \(\Phi_N(X)^T\Phi_N(X)\) for each dictionary choice and provide these in \Cref{tab:cond_num_VdP}. A higher number of dictionary elements is more likely to lead to a high condition number due to an increased likelihood for similar features. We observe that random features approaches have better conditioning than Adam-trained dictionaries of the same size.

\begin{table}[ht]
    \centering
  \begin{tabular}{ cccccc }
    \toprule
         dictionary & DRAGONS & Gaussian & gEDMD-DL & RBF-Dict & Poly-Dict \\ 
         \midrule
         \#dict & 130 & 130 & 130 & 202 & 9 \\
         \midrule
         cond num & 1.18e+14 & 2.32e+13 & 9.86e+18 & 3.77e+18 & 3.97e+02 \\
    \bottomrule
  \end{tabular}
    \caption{Condition numbers of different dictionaries for Van der Pol problem.}
    \label{tab:cond_num_VdP}
\end{table}

\subsection{Lorenz 63}
Our second experiment is concerned with the Lorenz system \citep{lorenz-1963} in the chaotic regime with $\sigma=10$, $\beta=\frac{8}{3}$, and $\rho=28$ given as
\begin{equation}
\begin{split}
        \dot{z_1} & = \sigma(z_2-z_1)\\ 
        \dot{z_2} & = z_1(\rho-z_3)-z_2\\ 
        \dot{z_3} & = z_1\,z_2-\beta z_3.
\end{split}
\end{equation}

We generate datasets by numerically solving the system using an 8th order explicit Runge Kutta scheme at time values where each consecutive $\Delta t \sim U(0,0.03)$ using 1000 evaluation points for one orbit, and creating 50 orbits for training, validation and for test datasets. The initial conditions for each new orbit are sampled according to \((z_1, z_2, z_3) \sim U(-20, +20) \times U(-20, +20) \). 

\begin{wraptable}{f}{7.5cm}
    \caption{Hyperparameter search scope for DRAGONS on the Lorenz 63 problem.\\Best combination shown with bold font.}
\label{tab:hyperparameters_DRAGONS_L63}
    \centering
    \small
  \begin{tabular}{ cc }
    \toprule
    sampling & flow-, vector field-, \textbf{generator-driven} \\
    \midrule
         layer width & 
        16,
        32,
        64,
        128,
        \textbf{256},
        512\\
    \midrule
        activation &
        \textbf{relu}, tanh \\
    \midrule
    rcond &        
        1e-2,
        1e-6,
        1e-10,
        \textbf{1e-14},
        1e-18 \\
    \bottomrule
  \end{tabular}
\end{wraptable}

We perform a hyperparameter search similarly to the Van der Pol problem, and report the combinations of parameters in \Cref{tab:hyperparameters_DRAGONS_L63}, \Cref{tab:hyperparameters_Gaussian_L63}, \Cref{tab:hyperparameters_gEDMD-DL_L63}, \Cref{tab:hyperparameters_RBF_L63} and \Cref{tab:hyperparameters_Poly_L63} for the DRAGONS, Gaussian parameters, gEDMD-DL RBF dictionary and polynomial dictionary, respectively. Since this is a chaotic system, using the mean squared error metric is not suitable, thus we use an empirical estimate of Kullback-Leibler (KL) divergence between the true and predicted orbits, which should be very similar for very long trajectories due to ergodic properties of the system.

To this end, we treat the trajectories as distributions, denoting the true distribution as \( p(\vz)\) and the predicted distribution as \( q(\vz)\). KL divergence is defined as 
\begin{equation}
    D_{KL} (p(\vx) \lVert q(\vx)) = \int_{\vx \in \mathbb{R}} p(\vx) \log \frac{p(\vx)}{q(\vx)} d\vx .
    \label{eq:kl_div}
\end{equation}.

To empirically estimate this quantity we use an approach similar to \citet{hess_generalized_2023} to use Gaussian Mixture Models (GMM) placed on the true trajectory $\vz$ and predicted $\hat{\vz}$ trajectories
\begin{equation}
    \begin{split}
        \hat{p}(\vz) = \frac{1}{T} \sum_{t=1}^{T} \mathcal{N} (\vz, \vz_t, \Sigma) \qquad \hat{q}(\vz) = \frac{1}{T} \sum_{t=1}^{T} \mathcal{N} (\vz, \hat{\vz}_t, \Sigma)
    \end{split}
\end{equation} for $T$ points in time; we assume unit variance.
These GMMs are then sampled to approximate the integral in \Cref{eq:kl_div} as
\begin{align}\label{eq:D_stsp}
    D_{KL} (\hat{p}(\vz) \lVert \hat{q}(\vz)) \approx \frac{1}{1000} \sum_{i=1}^{1000} \log \frac{\hat{p}(\vz)}{\hat{q}(\vz)} .
\end{align}

In \Cref{fig:DRAGONS_L63} we show predictions on a test trajectory with our DRAGONS approach.

\begin{table}[ht]
    \begin{minipage}{.45\linewidth}
    \caption{Hyperparameter search scope for Gaussian random features on the Lorenz 63 problem.\\Best combination shown with bold font.}
    \label{tab:hyperparameters_Gaussian_L63}
    \centering
    \small
  \begin{tabular}{ cc }
    \toprule
         layer width & 
        16,
        32,
        \textbf{64},
        128,
        256,
        512\\
    \midrule
        activation &
        relu, \textbf{tanh} \\
    \midrule
    rcond &     
        1e-6,
        \textbf{1e-10},
        1e-14,
        1e-18 \\
    \bottomrule
  \end{tabular}
  \end{minipage}
  \hspace{1cm}
    \begin{minipage}{.45\linewidth}
    \caption{Hyperparameter search scope for gEDMD-DL on the Lorenz 63 problem.\\Best combination shown with bold font.}
    \label{tab:hyperparameters_gEDMD-DL_L63}
    \centering
    \small
  \begin{tabular}{ cc }
    \toprule
         layer width & 
        \textbf{16},
        32,
        64,
        128,
        256,
        512\\
    \midrule
        activation &
        relu, \textbf{tanh} \\
    \midrule
    rcond &     
        \textbf{1e-6},
        1e-10,
        1e-14,
        1e-18 \\
    \bottomrule
  \end{tabular}
    \end{minipage}
\end{table}

\begin{table}[!h]
    \begin{minipage}{.45\linewidth}
    \caption{Hyperparameter search scope for RBF-Dict on the Lorenz 63 problem.\\Best combination shown with bold font.}
    \label{tab:hyperparameters_RBF_L63}
    \centering
    \small
  \begin{tabular}{ cc }
    \toprule
         num of samples & 100, 200, \textbf{300} \\
    \midrule
        rcond & 
        1e-4,
        \textbf{1e-8},
        1e-12,
        1e-16 \\
    \bottomrule
  \end{tabular}
  \end{minipage}
  \hspace{1cm}
  \begin{minipage}{.45\linewidth}
    \caption{Hyperparameter search scope for Poly-Dict on the Lorenz 63 problem.\\Best combination shown with bold font.}
    \label{tab:hyperparameters_Poly_L63}
    \centering
    \small
  \begin{tabular}{ cc }
    \toprule
         degree & 1, 2, \textbf{3} \\
    \midrule
        rcond &
        1e-6,
        1e-8,
        1e-10,
        1e-12,
        \textbf{1e-14},
        1e-16\\
    \bottomrule
  \end{tabular}
  \end{minipage}
\end{table}

To investigate the conditioning of the linear system we again report the condition number of \(\Phi_N(X)^T\Phi_N(X)\), for the Lorenz 63 problem the results are given in \Cref{tab:cond_num_L63}. Here we see that even when random feature dictionaries are much larger than the Adam-trained dictionary, they lead to a better conditioned linear system.

\begin{table}[ht]
    \centering
  \begin{tabular}{ cccccc }
    \toprule
         dictionary & DRAGONS & Gaussian & gEDMD-DL & RBF-Dict & Poly-Dict \\ 
         \midrule
         \#dict & 131 & 67 & 19 & 303 & 34 \\
         \midrule
         cond num & 6.85e+09 & 1.07e+08 & 3.24e+18 & 1.60e+12 & 1.72e+10 \\
    \bottomrule
  \end{tabular}
    \caption{Condition numbers of different dictionaries for the Lorenz 63 problem.}
    \label{tab:cond_num_L63}
\end{table}

\subsection{Flow around a cylinder}

The simulation data for the flow past a cylinder at Reynolds number \(Re=160\) are taken from the Computer
Graphics Laboratory at ETH Zurich \citep{gerrisflowsolver, Guenther17}. The corresponding unstable steady-state
solution is computed by enforcing symmetric boundary conditions at \(y=0\). Using these data, we construct a three-dimensional spatial basis consisting of two oscillatory modes and one shift mode, see \Cref{fig:cyl_spatial_modes}. The oscillatory modes $\phi_1$ and $\phi_2$ are obtained as the first two POD modes of the velocity fluctuations around the mean limit-cycle flow \citep{sirovich-1987}. The shift mode $\phi_{\mathrm{shift}}$ represents the change from the unstable steady flow to the mean limit-cycle flow. The corresponding temporal coordinates are modeled using a three-dimensional mean-field model following \citet{noack-2003}:
\begin{equation}
\begin{split}
        \dot{z}_1 & = \mu z_1 - \omega z_2 + A z_1 z_3,\\
        \dot{z}_2 & = \omega z_1 + \mu z_2 + A z_2 z_3,\\
        \dot{z}_3 & = -\lambda (z_3 - z_1^2 - z_2^2).
\end{split}
\end{equation}
Here, $z_1$ and $z_2$ are the normalized oscillatory coordinates and $z_3$ is the shift coordinate. The angular frequency $\omega$ is identified from the developed limit-cycle data by fitting the unwrapped phase of the oscillatory coordinates. The transient growth rate is set to $\mu=0.1|\omega|$ and the shift relaxation rate to $\lambda=5|\omega|$. The nonlinear coupling coefficient is chosen as $A=-\mu$, such that the radial growth rate vanishes at $z_3=1$. With these choices, the system has an unstable equilibrium at the origin and a stable limit cycle defined by
\[
z_1^2+z_2^2=1, \qquad z_3=1.
\]

The dataset is generated by numerically solving the reduced dynamical system. Each trajectory consists of 600 snapshots separated by irregular time increments, independently sampled according to \(\Delta t \sim U(0, 0.01)\). We generate 50 trajectories each for training, validation, and testing. The initial condition for each trajectory is sampled close to the parabolic slow manifold \(z_3=z_1^2+z_2^2\) by drawing
\[
(r,\theta)\sim U(0.05,1.1)\times U(0,2\pi),
\]
and setting
\[
z_1=r\cos\theta,\qquad z_2=r\sin\theta,\qquad z_3=r^2+\delta,
\]
where \(\delta\) is sampled uniformly from the admissible interval within \([-0.3,0.3]\) such that \(0\le z_3\le1.2\).

\begin{wraptable}{f}{7.5cm}
    \caption{Hyperparameter search scope for DRAGONS on the cylinder flow problem.\\Best combination shown with bold font.}
    \label{tab:hyperparameters_DRAGONS_cyl}
    \centering
    \small
  \begin{tabular}{ cc }
    \toprule
    sampling & flow-, vector field-, \textbf{generator-driven} \\
    \midrule
        layer width & \textbf{128}, 256, 512 \\
    \midrule
        activation & \textbf{relu}, tanh \\
    \midrule
        rcond & 1e-8, 1e-10, \textbf{1e-12}, 1e-14 \\
    \bottomrule
  \end{tabular}
\end{wraptable}

We again perform a hyperparameter search following the same procedure as for the Van der Pol and Lorenz 63 problems. The considered hyperparameter combinations are given in \Cref{tab:hyperparameters_DRAGONS_cyl}, \Cref{tab:hyperparameters_Gaussian_cyl}, \Cref{tab:hyperparameters_gEDMD-DL_cyl}, \Cref{tab:hyperparameters_RBF_cyl} and \Cref{tab:hyperparameters_Poly_cyl} for DRAGONS, Gaussian random parameters, gEDMD-DL using an Adam optimizer, RBF-Dict and Poly-Dict, respectively. The search range was determined manually by testing out sensible combinations. We evaluate on the validation set using a mean squared error loss to select the best hyperparameter combination, which then is evaluated on a test set and reported in \Cref{tab:main_results}. Similarly to the other forecasting experiments, a generator-driven sampling yields the best results.

\begin{table}[ht]
    \begin{minipage}{.53\linewidth}
    \caption{Hyperparameter search scope for Gaussian random features on the cylinder flow problem.\\Best combination shown with bold font.}
    \label{tab:hyperparameters_Gaussian_cyl}
    \centering
    \small
  \begin{tabular}{ cc }
    \toprule
        layer width & \textbf{128}, 256, 512 \\
    \midrule
        activation & relu, \textbf{tanh} \\
    \midrule
        rcond & 1e-8, 1e-10, 1e-12, 1e-14, \textbf{1e-18} \\
    \bottomrule
  \end{tabular}
  \end{minipage}
  \hspace{0.15cm}
    \begin{minipage}{.45\linewidth}
    \caption{Hyperparameter search scope for gEDMD-DL on the cylinder flow problem.\\Best combination shown with bold font.}
    \label{tab:hyperparameters_gEDMD-DL_cyl}
    \centering
    \small
  \begin{tabular}{ cc }
    \toprule
        layer width & 
        16,
        \textbf{32},
        64,
        128, 
        256, 
        512 \\
    \midrule
        activation & relu, \textbf{tanh} \\
    \midrule
        rcond & 
        1e-6,
        1e-8,
        1e-10,
        1e-12,
        \textbf{1e-14} \\
    \bottomrule
  \end{tabular}
    \end{minipage}
\end{table}

\begin{table}[ht]
    
    \begin{minipage}{.45\linewidth}
    \caption{Hyperparameter search scope for RBF-Dict on the cylinder flow problem.\\Best combination shown with bold font.}
    \label{tab:hyperparameters_RBF_cyl}
    \centering
    \small
  \begin{tabular}{ cc }
    \toprule
         degree & 
        \textbf{100},
        300,
        500 \\
    \midrule
        rcond & 
        1e-6,
        \textbf{1e-10},
        1e-14 \\
    \bottomrule
  \end{tabular}
  \end{minipage}
  \hspace{1cm}
  \begin{minipage}{.45\linewidth}
    \caption{Hyperparameter search scope for Poly-Dict on the cylinder flow problem.\\Best combination shown with bold font.}
    \label{tab:hyperparameters_Poly_cyl}
    \centering
    \small
  \begin{tabular}{ cc }
    \toprule
         degree & 2, \textbf{3}, 4 \\
    \midrule
        rcond & \textbf{1e-8}, 1e-12, 1e-16 \\
    \bottomrule
  \end{tabular}
  \end{minipage}
\end{table}

In this case again we check the condition number of feature matrices and report them in \Cref{tab:cond_num_cyl}. Once again the worst conditioning is attained by the iteratively trained neural network dictionary.

\begin{table}[ht]
    \centering
  \begin{tabular}{ cccccc }
    \toprule
         dictionary & DRAGONS & Gaussian & gEDMD-DL & RBF-Dict & Poly-Dict \\ 
         \midrule
         \#dict & 515 & 131 & 35 & 103 & 34  \\
         \midrule
         cond num & 5.76e+13  & 1.30e+15 & 3.55e+17 & 1.17e+05 & 7.47e+05 \\
    \bottomrule
  \end{tabular}
    \caption{Condition numbers of different dictionaries for the flow around a cylinder problem.}
    \label{tab:cond_num_cyl}
\end{table}

\Cref{fig:cyl_traj_pred} shows the true and predicted trajectory of one test sample. The full velocity field can be reconstructed by mapping the reduced coordinates to the physical modal coefficients associated with the three spatial modes and adding them to the unstable steady base flow. \Cref{fig:cyl_velocity} shows the true and predicted velocity fields at different time steps and the corresponding absolute error clipped at the 99th percentile. The prediction closely follows the reference trajectory in the reduced phase space and captures the dominant spatial flow structures of the reconstructed velocity field.

\begin{figure}[h]
    \centering
    \begin{subfigure}{0.35\textwidth}
        \includegraphics[width=\linewidth]{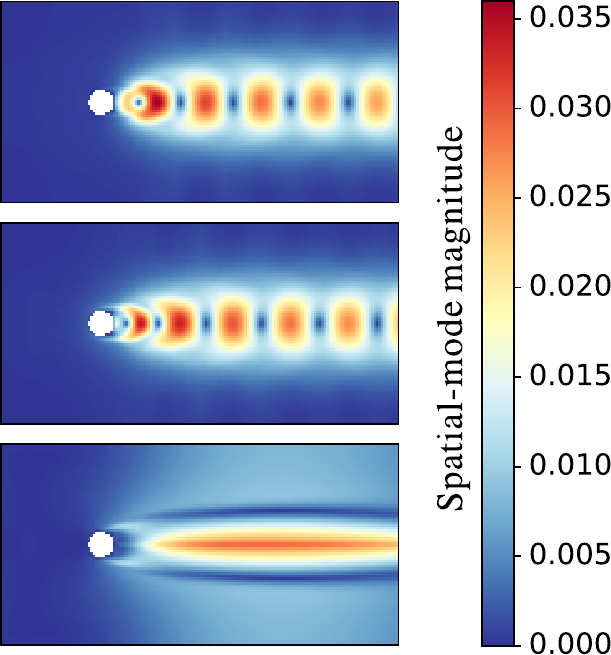}
    \caption{Spatial modes.}
        \label{fig:cyl_spatial_modes}
    \end{subfigure}
    \hspace{20pt}
    \begin{subfigure}{0.5\textwidth}
\includegraphics[width=\linewidth]{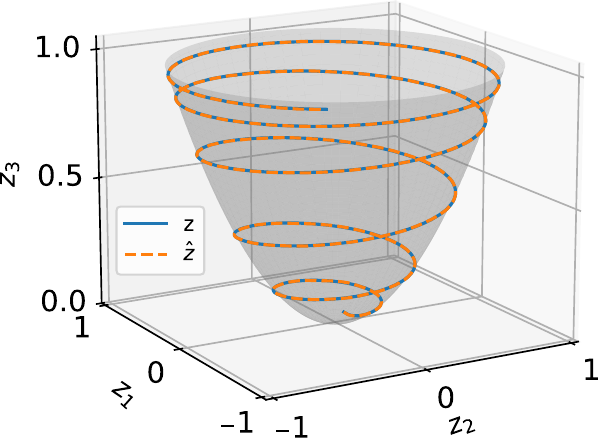}
    \caption{Phase space.}
\label{fig:cyl_traj_pred}
    \end{subfigure}
    \vspace{1cm}
    \begin{subfigure}{0.9\textwidth}
\includegraphics[width=\linewidth]{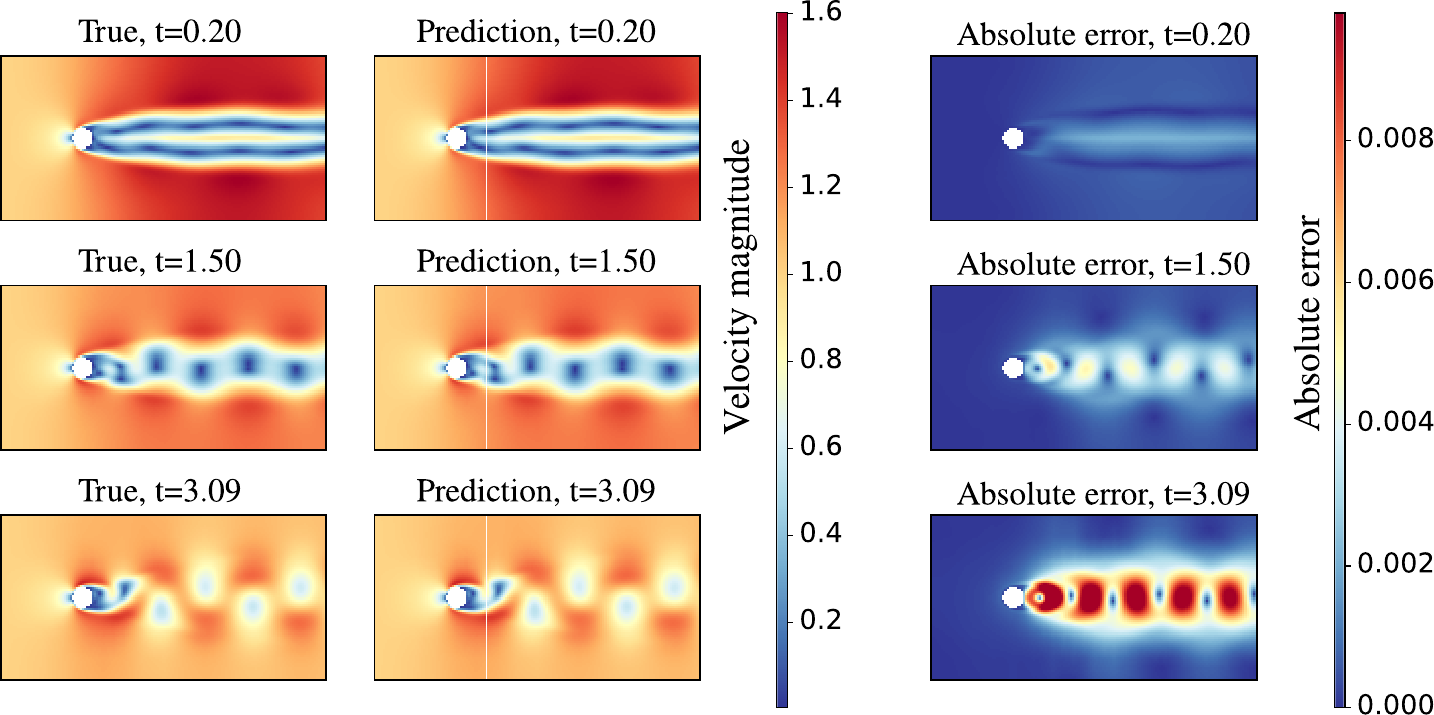}
    \caption{Velocity field reconstruction.}
\label{fig:cyl_velocity}
    \end{subfigure}
\caption{Flow around a cylinder experiment. (a) Two oscillatory modes and one shift mode. (b) Reference and DRAGONS-predicted reduced trajectories in phase space. (c) Reference and predicted velocity fields with the corresponding absolute error, clipped at the 99th percentile.}
\label{fig:cyl_experiment}
\end{figure}

To complement the forecasting results, we examine the spectrum
of the learned Koopman generator and the associated Koopman modes.
In \Cref{fig:cyl_non_oscillatory_modes}, mode 28 corresponds to
an eigenvalue close to zero and represents an almost stationary
contribution, while mode 64 represents a transient decaying
mode. Together with its complex-conjugate counterpart,
mode 31 in \Cref{fig:cyl_oscillatory_mode} represents an oscillatory
mode whose eigenvalue has an imaginary part with magnitude
close to the estimated vortex-shedding angular frequency $|\omega|$.

\subsection{Alanine Dipeptide}

\begin{wrapfigure}{r}{0.26\textwidth}
  \begin{center}
    \includegraphics[width=0.25\textwidth]{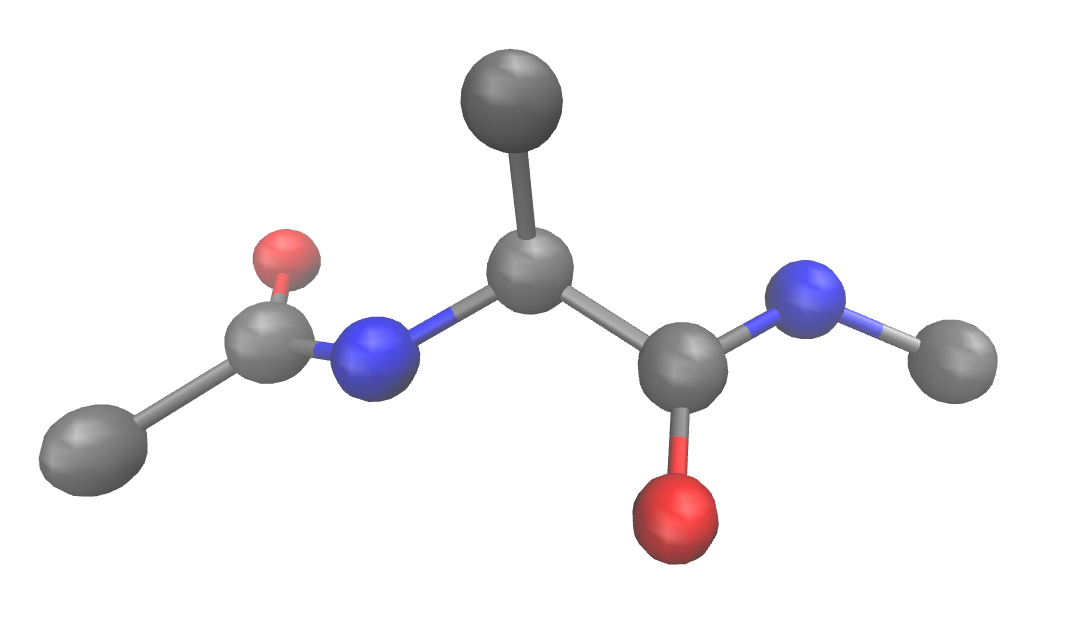}
  \end{center}
  \caption{Molecular representation of alanine dipeptide (only heavy atoms are shown). }
  \label{fig:alanine}
\end{wrapfigure}
We consider a molecular dynamics simulation of the alanine dipeptide molecule in explicit water. 
The alanine dipeptide molecule consists of 22 atoms, however we consider only the 10 heavy atoms, as depicted in \Cref{fig:alanine} and their position in physical space, resulting in a state vector \(\vz \in \mathbb{R}^{30} \). This system has been commonly studied in computational modeling \citep{nuske-2017, klus-2018a, herwig-2025}, thus it represents a robust benchmark on which we test our approach. The datasets are downloaded from \citet{alanine-dipeptide-datasets}, and subsampled to every 100th entry.

In this experiment our aim is not to demonstrate predictive accuracy but rather to perform a conformation analysis. Since our results are not quantifiable we do not directly compare with the existing results from existing approaches in the manuscript; nonetheless the obtained insights align with prior results, the interested reader is invited to make comparison with the cited publications.

We use a DRAGONS model with 58 neurons with a ReLU activation in the hidden layer, where the parameters are sampled using the flow-driven sampling (\Cref{eq:flow-driven sampling}). In the linear system solver we use a truncated SVD with a cut-off ratio of 0.6. The hyperparameters were found using a manual search. 
To perform a clustering we use the k-means algorithm and create three clusters using the five eigenfunctions whose eigenvalues are closest to zero. The clustering is performed in the 30 dimensional state space of the system. However for the purpose of visualization, we map the positions of the molecule to the conformation described by its two dihedral angles as proposed by \citet{ramachandran-1963} to obtain plots in 2D. The model does not have access to the dihedral angles. We visualize two of these eigenfunctions in \Cref{fig:AD_eigenfunctions} and the resulting clustering in \Cref{fig:AD_kmeans}. The results show clear distinctions in the appropriate regions where the metastable states reside, this aligns with previous observations obtained in prior work.

\subsection{Comparison of sampling strategies}

We provide the results across different sampling strategies for the forecasting problems in \Cref{tab:compare_sampling}. The flow-driven sampling already provides a good baseline, while the vector-field-driven sampling is occasionally inaccurate. The generator driven sampling is robust and yields the lowest error on all forecasting problems.

\begin{center}
\label{sec:compare_sampling_strategies}
\begin{table}[h]
\caption{Comparison of different sampling strategies on the forecasting tasks.}
\label{tab:compare_sampling}
\centering
\begin{tabular}{p{2cm} p{3cm}  p{5cm}  }
    \toprule
    Example & sampling distribution & Error: avg (min, max)  \\
    \midrule
    Van der Pol &  flow-driven & 4.22e-6 (2.71e-6, 7.74e-6) \\   
    &  vector-field-driven & 2.22 (1.35e-7, 1.11e+1)  \\   
    &  generator-driven & 6.23e-7 (2.84e-7, 8.08e-7) \\   
     \midrule
     Lorenz 63 &  flow-driven & 5.27e-4 (1.10e-4, 9.53e-4)  \\   
    &  vector-field-driven & 4.20e-4 (1.97e-4, 6.54e-4) \\   
    &  generator-driven & 3.75e-4 (1.87e-4, 5.69e-4) \\ 
      \midrule 
     Cylinder &  flow-driven & 8.68e-4 (4.51e-7, 4.33e-3) \\   
    &  vector-field-driven & 6.09e-7 (3.67e-7, 1.21e-6) \\   
    &  generator-driven & 3.35e-7 (3.28e-7, 3.42e-7) \\ 
  \hline
\end{tabular}
\end{table}
\end{center}

\subsection{gEDMD spectral residuals}
\label{sec:gEDMD_spectral_pollution}
The spectra of the Koopman operator and generator are of interest because the dynamical system's behavior is determined by them. Computing the spectrum of an approximated operator is also subject to inaccuracies arising from a finite-dimensional approximation. Discretizations can lead to spurious eigenvalues that are not related to the underlying operator \citep{colbrook-2024a}. Thus we use an approach to identify the presence of spectral pollution based on an existing approach by \citet{colbrook-2024a} by adjusting their approach for the Koopman generator.

The suitability of an eigenvalue-eigenvector pair \(\lambda, \chi \) of the generator \( \mathcal{L} \) can be evaluated by the following relative residual
\begin{equation}
\begin{split}
    \text{res} &= \frac{\int_{\mathcal{X}}| \mathcal{L}\chi(\vx) - \lambda \chi(\vx) |^2 d\mu(\vx)}{\int_{\mathcal{X}}|\chi(\vx)|^2 d\mu(\vx)} \\
     &= \frac{\langle (\mathcal{L} - \lambda)\chi, (\mathcal{L} - \lambda)\chi \rangle}{\langle \chi, \chi \rangle}\\
     &=\frac{\sum_{i,j=1}^{N} \bar{{\chi}}_i {\chi}_j  \left[ \langle \mathcal{L} \phi_i, \mathcal{L} \phi_j \rangle  - \lambda \langle \phi_i, \mathcal{L} \phi_j \rangle - \bar{\lambda} \langle \mathcal{L} \phi_i, \phi_j \rangle +|\lambda|^2 \langle \phi_i,  \phi_j \rangle \right] }{ \sum_{i,j=1}^{N} \bar{{\chi}}_i {\chi}_j \langle \phi_i,  \phi_j \rangle}
     .
\end{split}
\end{equation}
In the context of finite data and a finite sized dictionary, we approximate this residual by
\begin{equation}
    \hat{\text{res}}=\frac{\sum_{i,j=1}^{N} \bar{\hat{\chi}}_i \hat{\chi}_j \left[ (\dot{\Phi}_X \dot{\Phi}_X^{*})_{i,j} - \lambda({\Phi}_X \dot{\Phi}_X^{*})_{i,j} - \bar{\lambda}(\dot{\Phi}_X {\Phi}_X^{*})_{i,j}  + | \lambda |^2({\Phi}_X {\Phi}_X^{*})_{i,j} \right]}{\sum_{i,j=1}^{N} \bar{\hat{\chi}}_i \hat{\chi}_j ({\Phi}_X {\Phi}_X^{*})_{i,j}}.
\end{equation}
The Galerkin matrices \(\hat{\mathbf{C}}_{NM}=
    \frac{1}{M}\dot{\Phi}_X\Phi_X^{*}\) and \(\hat{\mathbf{G}}_{NM}=\frac{1}{M}\Phi_X\Phi_X^{*}\) from \Cref{eq:appendix_gedmd_galerkin} already provide much of the necessary calculations, and in addition we compute \(\dot{\Phi}_X \dot{\Phi}_X^{*}\).

\begin{figure}[h]
    \centering
    \begin{subfigure}{0.33\textwidth}
        \includegraphics[width=\linewidth]{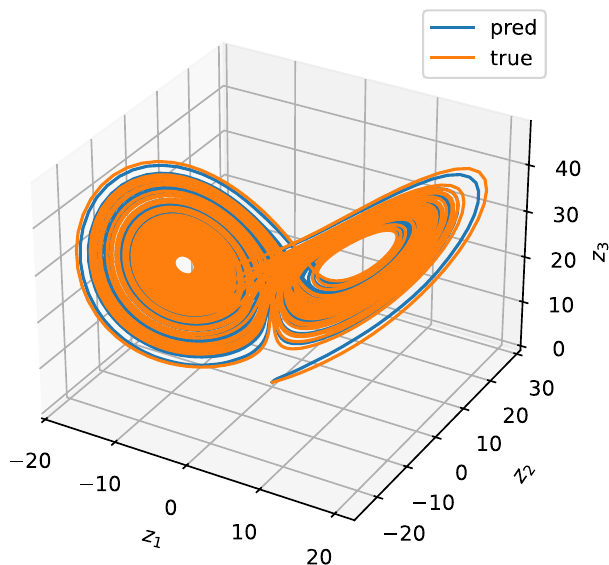}
        \caption{Phase space}
        \label{fig:DRAGONS_L63_long_phase}
    \end{subfigure}
    \hspace{7pt}
    \begin{subfigure}{0.63\textwidth}
        \includegraphics[width=\linewidth]{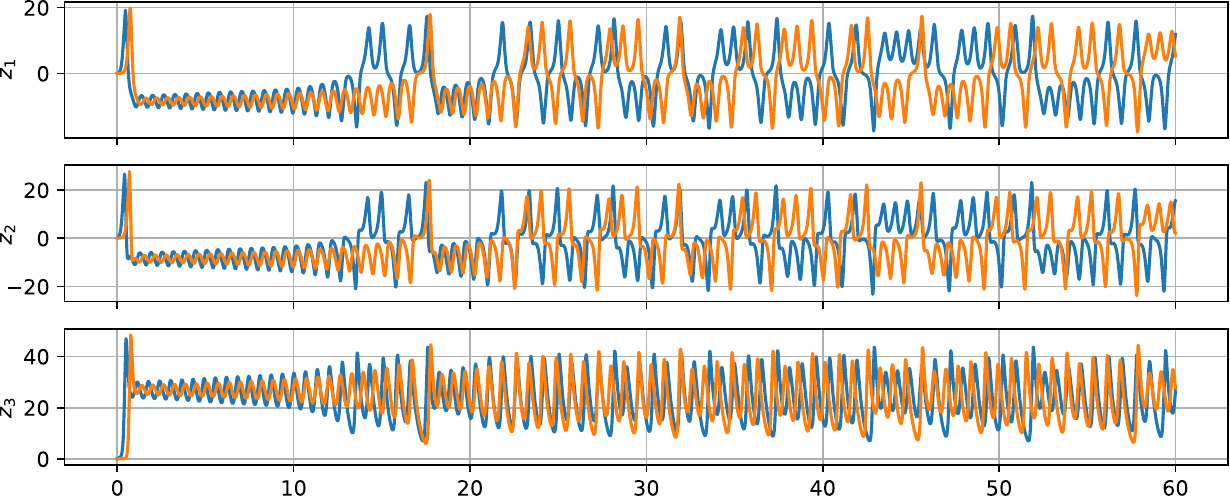}
        \caption{Trajectory}
        \label{fig:DRAGONS_L63_long_traj}
    \end{subfigure}
    \caption{Comparison of numerically calculated versus predicted trajectories for Lorenz experiment. Ticks on the x-axis show the evaluation points in time, which are irregularly spaced.}
    \label{fig:DRAGONS_L63_long}
\end{figure}

\subsection{Exploding gradient problem in adjoint sensitivity method}
\label{app:node_exploding_adjoint}

 \begin{figure}[h]
    \centering
    \begin{subfigure}{0.495\textwidth}
            \includegraphics[width=\linewidth]{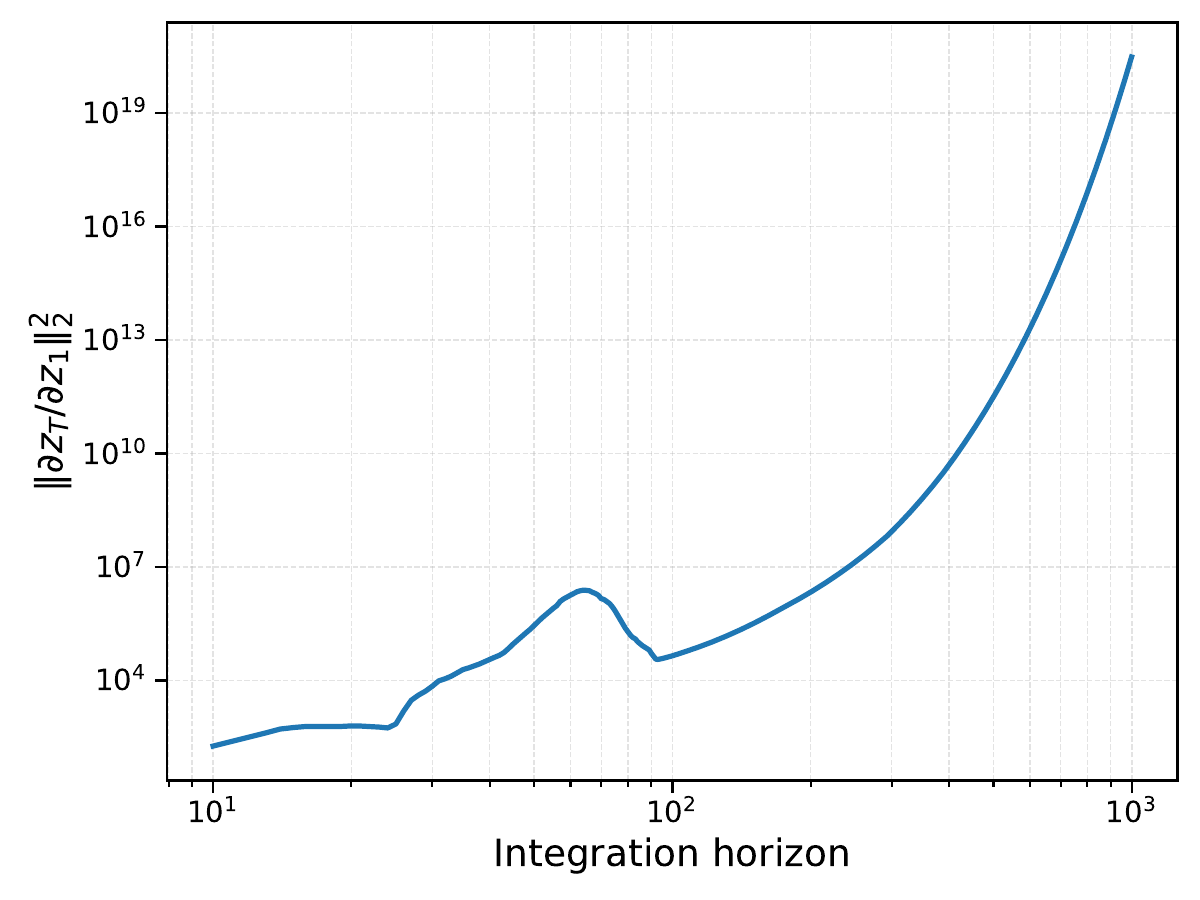}
            \caption{Flow-map Jacobian of the receding state w.r.t. the initial one.}
            \label{fig:lorenz_node_grad_explosion}
    \end{subfigure}
        \begin{subfigure}{0.495\textwidth}
            \includegraphics[width=\linewidth]{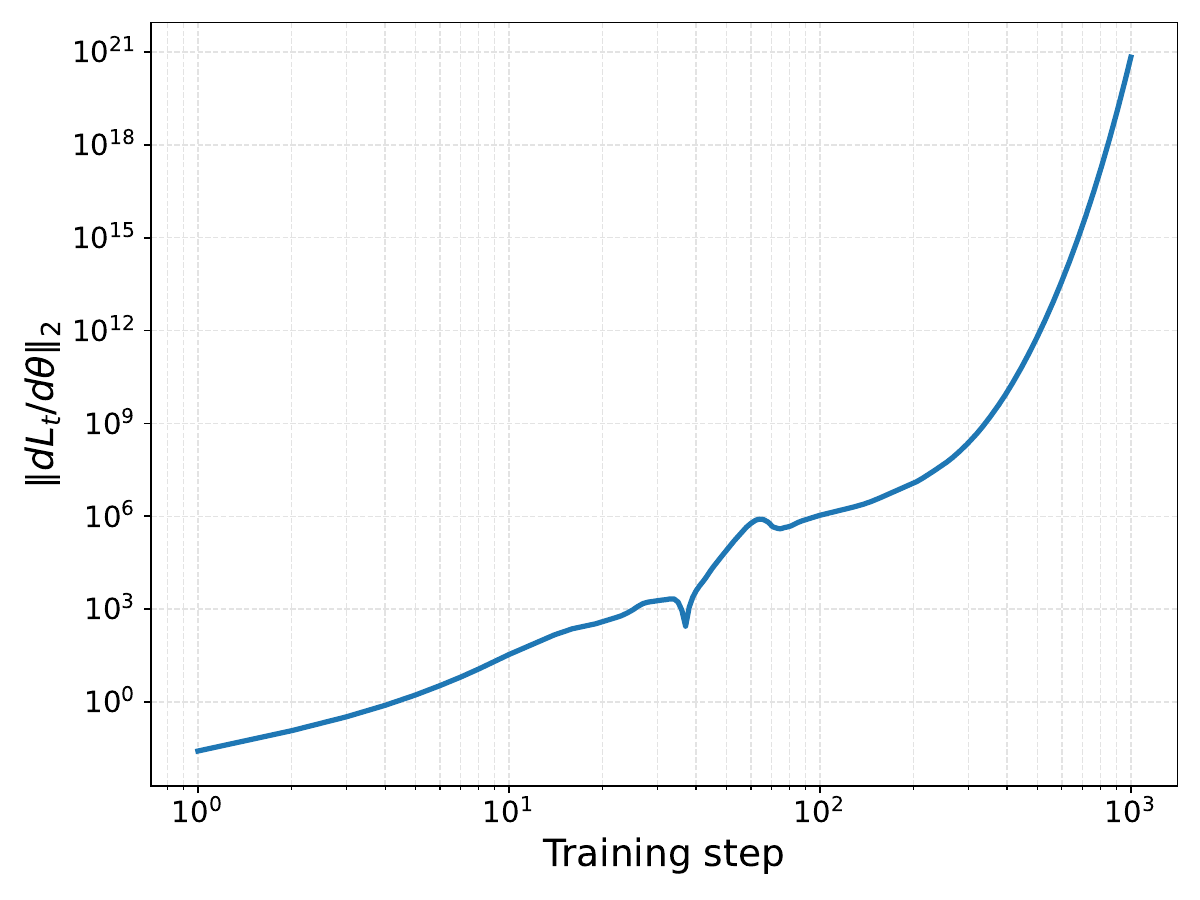}
            \caption{Gradient of the adjoint state w.r.t. the neural network parameters.}
            \label{fig:lorenz_node_adjoint_grad_explosion}
    \end{subfigure}
    \newline
    \begin{subfigure}{0.8\textwidth}
                    \includegraphics[width=\linewidth]{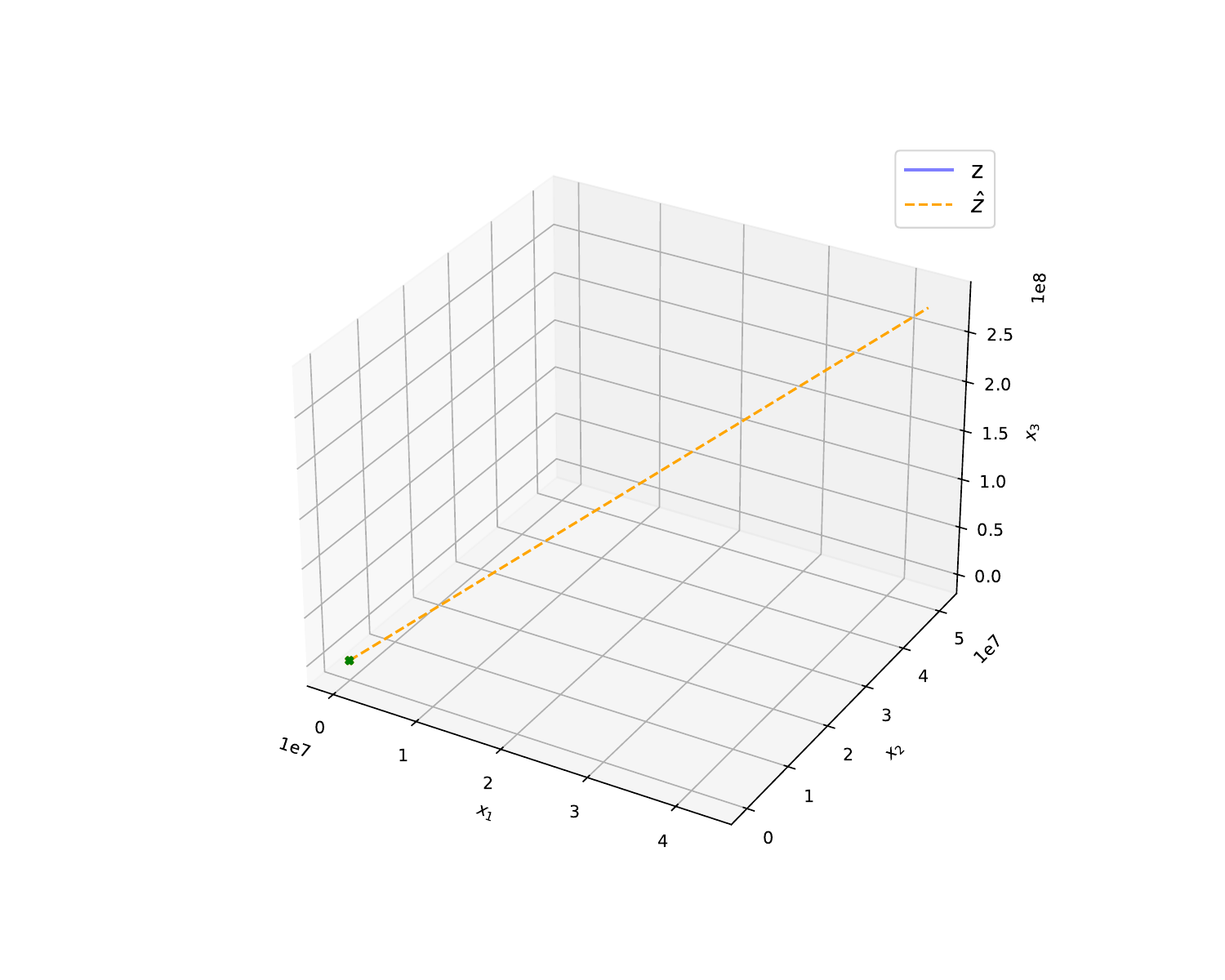}
            \caption{Phase space of neural ODE prediction for Lorenz-63.}
            \label{fig:lorenz_node_prediction}
    \end{subfigure}
    \caption{Explosion of the neural ODE's adjoint sensitivity when trained on the chaotic Lorenz-63 system as the integration horizon increases.}
    \label{fig:node_adjoint_explosion}
\end{figure}

In~\Cref{fig:lorenz_node_grad_explosion}, we empirically demonstrate the explosion of the flow-map Jacobian and adjoint state norms, discussed in~\Cref{sec:limitations_of_neural_ODE} and proved in~\Cref{sec:adjoing_ev_proof} for a neural ODE trained on the Lorenz-63 system. 
This experiment follows~\cite{mikhaeil2022difficulty}, who theoretically established and empirically demonstrated gradient explosion in RNNs trained on chaotic Lorenz dynamics. 
The gradient of the adjoint state with respect to the model parameters exhibits a similar exponential growth, as shown in~\Cref{fig:lorenz_node_adjoint_grad_explosion}, making long-horizon training particularly challenging for chaotic systems. This behavior is demonstrated using a Neural ODE trained on the Lorenz--63 system, with the implementation adapted from~\citet{allauzen2022experimental}.
Their work investigates the use of adaptive numerical solvers within Neural ODE training and shows that a naive application of adaptive time stepping is insufficient, motivating a tighter coupling between the solver and the training procedure. Our results nevertheless show that the observed gradient growth is intrinsic to the dynamics and does not depend on the particular ODE solver.
A forecasting with neural ODE of the Lorenz system becomes impossible due to explosion, as seen on~\Cref{fig:lorenz_node_prediction}. 
We perform forecasting on the same trajectory with our already trained DRAGONS model on Lorenz-63 from our forecasting experiment. The predictions are on the attractor, and are visualized in \Cref{fig:DRAGONS_L63_long}, and the prediction error is 7.45e-2.

\end{document}